\newif\ifappendixshow  %
\newif\ifdraft     %
\newif\ifexternal  %
\newif\ifpaper     %
\journal{AIJ}
\newcommand{\frontpagedeclarations}{%
\title{Shapley Value Computation in Ontology-Mediated Query Answering}

\author[labri]{Meghyn Bienvenu} 
\ead{meghyn.bienvenu@cnrs.fr}

\author[labri]{Diego Figueira} 
\ead{diego.figueira@cnrs.fr}

\author[labri]{Pierre Lafourcade} 
\ead{pierre.lafourcade@u-bordeaux.fr}

\affiliation[labri]{organization={Univ. Bordeaux, CNRS,  Bordeaux INP, LaBRI, UMR 5800, F-33400},%
            city={Talence},
            country={France}}}
\NewCommandCopy{\proofqedsymbol}{\qedsymbol}%
\newcommand{\remarkqedsymbol}{{$\triangle$}}%
\renewcommand{\qedsymbol}{\exampleqedsymbol}%
\renewcommand{\qedsymbol}{\remarkqedsymbol}%
\newif\ifappendix  %
\definecolor{green}{RGB}{0,120,0}
\definecolor{hlyellow}{RGB}{250, 250, 190}
\definecolor{diegoeditcolor}{RGB}{210,210,255}
\definecolor{meghyneditcolor}{RGB}{210,255,210}
\definecolor{pierreeditcolor}{RGB}{255, 225, 186}
\newcommand{\sidediego}[1]{}
\newcommand{\sidemeghyn}[1]{}
\newcommand{\sidepierre}[1]{}
\newcommand{\meghyn}[1]{}
\newcommand{\pierre}[1]{}
\newcommand{\diego}[1]{}
\newcommand{\aka}{a.k.a.}
\newcommand{\eg}{e.g.}
\newcommand{\ie}{i.e.,}
\newcommand{\st}{s.t.}
\newcommand{\wrt}{w.r.t.}
   \newcommand{\ldb}{\llbracket} 
   \newcommand{\rdb}{\rrbracket}
\DeclarePairedDelimiter\set{\{}{\}} %
\LoopCommands\lettersUppercase[l#1] %
\newcommand{\eps}{\varepsilon} %
\DeclareMathOperator*{\bigO}{O}
\newcommand{\la}{\leftarrow}
\newcommand{\La}{\Leftarrow}
\newcommand{\Ra}{\Rightarrow}
\newcommand{\LRa}{\Leftrightarrow}
\newcommand{\defeq}{\vcentcolon=}
\newcommand{\eqdef}{=\vcentcolon}
\renewcommand{\le}{\leqslant}
\renewcommand{\ge}{\geqslant}
\newcommand{\inc}{\subseteq} %
\newcommand{\ic}{\sqsubseteq} %
\definecolor{light-gray}{gray}{0.9} %
\newcommand{\proofcase}[1]{\noindent\colorbox{light-gray}{#1}~~}
\newenvironment{equation-inline}{ %
\refstepcounter{equation}}{\hfill(\theequation)\\}
\theoremstyle{plain}
\theoremstyle{remark}
\crefname{theorem}{Theorem}{Theorems}
\crefname{claim}{Claim}{Claims}
\crefname{corollary}{Corollary}{Corollaries}
\crefname{example}{Example}{Examples}
\crefname{lemma}{Lemma}{Lemmas}
\crefname{proposition}{Proposition}{Propositions}
\crefname{remark}{Remark}{Remarks}
\tikzset{
	subtree/.pic={
		\coordinate (-a) at (0,0);
		\coordinate (-b) at (-1,-2);
		\coordinate (-c) at (1,-2);
		\coordinate (-d) at (-0.35,-0.35);
		\coordinate (-e) at (0.35,-0.35);
		\coordinate (-h) at (-1.26,-2.16);
		\coordinate (-i) at (1.26,-2.16);
		\coordinate (-j) at (0.35,0.35);
		\coordinate (-k) at (-0.35,0.35);
		\coordinate (-west) at (-0.4,0);
		\coordinate (-east) at (0.4,0);
		
		\draw (-a) -- (-b) -- (-c) -- cycle;
		\draw[fill=white] (-a) circle (0.4);
	}
}
\tikzset{
	subtree_small/.pic={
		\coordinate (-a) at (0,0);
		\coordinate (-b) at (-.75,-1);
		\coordinate (-c) at (.75,-1);
		\coordinate (-d) at (-0.35,-0.35);
		\coordinate (-e) at (0.35,-0.35);
		\coordinate (-h) at (-0.92,-1.06);
		\coordinate (-i) at (0.92,-1.06);
		\coordinate (-j) at (0.35,0.35);
		\coordinate (-k) at (-0.35,0.35);
		\coordinate (-west) at (-0.4,0);
		\coordinate (-east) at (0.4,0);
		
		\draw (-a) -- (-b) -- (-c) -- cycle;
		\draw[fill=white] (-a) circle (0.4);
	}
}
\tikzset{
	subtrinvisible/.pic={
		\coordinate (-a) at (0,0);
		\coordinate (-b) at (-1,-2);
		\coordinate (-c) at (1,-2);
		\coordinate (-d) at (-0.35,-0.35);
		\coordinate (-e) at (0.35,-0.35);
		\coordinate (-h) at (-1.26,-2.16);
		\coordinate (-i) at (1.26,-2.16);
		\coordinate (-j) at (0.35,0.35);
		\coordinate (-k) at (-0.35,0.35);
		\coordinate (-west) at (-0.4,0);
		\coordinate (-east) at (0.4,0);
	}
}
\tikzset{
	graphbox/.pic={
		\node [draw, circle, minimum height=17pt] (-0) at (-1, 0.75) {};
		\node [draw, circle, minimum height=17pt] (-1) at (-1, -0.25) {};
		\node [draw, circle, minimum height=17pt] (-2) at (-1, -1.25) {};
		\node [draw, circle, minimum height=17pt] (-3) at (1, -1.25) {};
		\node [draw, circle, minimum height=17pt] (-4) at (1, -0.25) {};
		\node [draw, circle, minimum height=17pt] (-5) at (1, 0.75) {};
		\node [] (-6) at (0, 1.25) {\Huge $G$};
		\coordinate (-7) at (-1.5, -1.75) {};
		\coordinate (-8) at (1.5, 1.75) {};
		\coordinate (-north) at (0, 1.75);
		\coordinate (-west) at (-1.5, 0);
		\coordinate (-east) at (1.5, 0);
		\coordinate (-south) at (0, -1.75);
		
		\draw (-8) rectangle (-7);
		\draw (-0) to (-5);
		\draw (-1) to (-3);
		\draw (-2) to (-4);
	}
}
\tikzset{
	graphbox-ns/.pic={
		\coordinate (-7) at (-1.5, -1.75) {};
		\coordinate (-8) at (1.5, 1.75) {};
		\draw[fill=white] (-8) rectangle (-7);
		
		\node [draw, circle, minimum height=17pt] (-0) at (-1, 0.75) {};
		\node [draw, circle, minimum height=17pt] (-1) at (-1, -0.25) {};
		\node [draw, circle, minimum height=17pt] (-2) at (-1, -1.25) {};
		\node [draw, circle, minimum height=17pt] (-3) at (1, -1.25) {};
		\node [draw, circle, minimum height=17pt] (-4) at (1, -0.25) {};
		\node [draw, circle, minimum height=17pt] (-5) at (1, 0.75) {};
		\node [] (-6) at (0, 1.25) {\Huge $G$};
		\coordinate (-north) at (0, 1.75);
		\coordinate (-west) at (-1.5, 0);
		\coordinate (-east) at (1.5, 0);
		\coordinate (-south) at (0, -1.75);
		
		\draw[very thick,double distance=3pt] (-north) -- ++(0,0.7);
		\draw[very thick,double distance=3pt,arrows = {-Implies[]}] (-south) -- ++(0,-1.6);
		\draw (-8) rectangle (-7);
		\draw (-0) to (-5);
		\draw (-1) to (-3);
		\draw (-2) to (-4);
	}
}
\tikzset{
	graphbox-ew/.pic={
		\node [draw, circle, minimum height=17pt] (-0) at (-1, 0.75) {};
		\node [draw, circle, minimum height=17pt] (-1) at (-1, -0.25) {};
		\node [draw, circle, minimum height=17pt] (-2) at (-1, -1.25) {};
		\node [draw, circle, minimum height=17pt] (-3) at (1, -1.25) {};
		\node [draw, circle, minimum height=17pt] (-4) at (1, -0.25) {};
		\node [draw, circle, minimum height=17pt] (-5) at (1, 0.75) {};
		\node [] (-6) at (0, 1.25) {\Huge $G$};
		\coordinate (-7) at (-1.5, -1.75) {};
		\coordinate (-8) at (1.5, 1.75) {};
		\coordinate (-north) at (0, 1.75);
		\coordinate (-west) at (-1.5, 0);
		\coordinate (-east) at (1.5, 0);
		\coordinate (-south) at (0, -1.75);
		
		\draw[thick,double distance=3pt] (-west) -- ++(-1,0);
		\draw[thick,double distance=3pt,arrows = {-Implies[]}] (-east) -- ++(1,0);
		\draw (-8) rectangle (-7);
		\draw (-0) to (-5);
		\draw (-1) to (-3);
		\draw (-2) to (-4);
	}
}
\definecolor{Dark Ruby Red}{HTML}{5d1416}
\definecolor{Dark Blue Sapphire}{HTML}{003c47} %
\definecolor{Dark Gamboge}{HTML}{be7c00}
\newcommand{\APintrorep}[1]{%
	\ifappendix%
	\kl{#1}%
	\else%
	\AP\intro{#1}%
	\fi%
}
   \knowledgenewrobustcmd{\ann}{\cmdkl{\nu}}
\knowledgenewrobustcmd{\oneann}{\cmdkl{\mathbf{1}}}
\knowledgenewrobustcmd{\adom}{\cmdkl{\textit{adom}}} %
\knowledgenewrobustcmd{\Minsups}[1]{\cmdkl{\mathsf{MS}_{#1}}} %
\knowledgenewrobustcmd{\BPP}{\cmdkl{\mathsf{BPP}}}
\knowledgenewrobustcmd{\coNP}{\cmdkl{\mathsf{coNP}}}
\knowledgenewrobustcmd{\FP}{\cmdkl{\mathsf{FP}}} 
\knowledgenewrobustcmd{\FPsNP}{\cmdkl{\mathsf{FP}^{\mathsf{\#NP}}}} 
\knowledgenewrobustcmd{\FPsP}{\cmdkl{\mathsf{FP}^\mathsf{\#P}}}
\knowledgenewrobustcmd{\FPsPH}{\cmdkl{\mathsf{FP}^{\mathsf{\#PH}}}}
\knowledgenewrobustcmd{\NP}{\cmdkl{\mathsf{NP}}}
\knowledgenewrobustcmd{\PH}{\cmdkl{\mathsf{PH}}}
\knowledgenewrobustcmd{\PsP}{\cmdkl{\mathsf{P}^\mathsf{\#P}}}
\knowledgenewrobustcmd{\Ptime}{\cmdkl{\mathsf{P}}}
\knowledgenewrobustcmd{\sNP}{\cmdkl{\mathsf{\#NP}}}
\knowledgenewrobustcmd{\sP}{\cmdkl{\mathsf{\#P}}} 
\knowledgenewrobustcmd{\sPH}{\cmdkl{\mathsf{\#PH}}}
\knowledgenewmathcommand{\numBipVerCov}{\cmdkl{\mathsf{\#BipVerCov}}}
\knowledgenewrobustcmd{\numBipIndep}{\cmdkl{\mathsf{\#BipIndepSet}}} %
\knowledgenewrobustcmd{\numSTConn}{\cmdkl{\mathsf{\#stConnect}}} %
\knowledgenewmathcommandPIE{\countMS}{%
   \cmdkl{\#}^{\cmdkl{\textsf{ms}}}#1#2#3}
\knowledgenewmathcommandPIE{\countFMS}{%
   \cmdkl{\#}^{\cmdkl{\textsf{fms}}}#1#2#3}
\knowledgenewmathcommandPIE{\countAns}{%
   \cmdkl{\#}^{\cmdkl{\textsf{hom}}}#1#2#3}
\knowledgenewmathcommandPIE{\evalCountMS}{%
   \cmdkl{\textsc{eval-}\#}^{\cmdkl{\textsf{ms}}}#1#2#3}
\knowledgenewmathcommandPIE{\evalCountFMS}{%
   \cmdkl{\textsc{eval-}\#}^{\cmdkl{\textsf{fms}}}#1#2#3}
\knowledgenewmathcommandPIE{\evalCountAns}{%
   \cmdkl{\textsc{eval-}\#}^{\cmdkl{\textsf{hom}}}#1#2#3}
\knowledgenewmathcommandPIE{\GIMC}{%
   \cmdkl{\mathsf{GIMC}#1#2#3}}
\knowledgenewmathcommandPIE{\FGIMC}{%
   \cmdkl{\mathsf{FGIMC}#1#2#3}}
\knowledgenewmathcommandPIE{\PQE}{%
   \cmdkl{\mathsf{PQE}#1#2#3}}
\knowledgenewmathcommandPIE{\SPQE}{%
   \cmdkl{\mathsf{SPQE}#1#2#3}}
\knowledgenewmathcommandPIE{\SPPQE}{%
   \cmdkl{\mathsf{SPPQE}#1#2#3}}
\knowledgenewmathcommandPIE{\PQEPhalf}{%
   \cmdkl{\mathsf{PQE}#1#2#3\!\left(\nicefrac 1 2\right)}}
\knowledgenewmathcommandPIE{\PQEPhalfOne}{%
   \cmdkl{\mathsf{PQE}#1#2#3\!\left(\nicefrac 1 2 ; 1\right)}}
\knowledgenewmathcommandPIE{\CPQE}{%
   \cmdkl{\mathsf{CPQE}#1#2#3}}
\knowledgenewmathcommandPIE{\CSPQE}{
   \cmdkl{\mathsf{CSPQE}#1#2#3}}
\knowledgenewmathcommandPIE{\CSPPQE}{
   \cmdkl{\mathsf{CSPPQE}#1#2#3}}
\knowledgenewmathcommandPIE{\CPQEPhalf}{
   \cmdkl{\mathsf{CPQE}#1#2#3\!\left(\nicefrac 1 2\right)}}
\knowledgenewmathcommandPIE{\CPQEPhalfOne}{
   \cmdkl{\mathsf{CPQE}#1#2#3\!\left(\nicefrac 1 2 ; 1\right)}}
\knowledgenewmathcommandPIE{\stShapley}{%
   \cmdkl{\mathsf{SVC}}^{\cmdkl{\star}}#1#2#3}
\knowledgenewmathcommandPIE{\dShapley}{%
   \cmdkl{\mathsf{SVC}}^{\cmdkl{\textsf{dr}}}#1#2#3}
\newmathcommandPIE{\dShapleyNoKL}{%
   \mathsf{SVC}^{\textsf{dr}}#1#2#3}
\knowledgenewmathcommandPIE{\dnShapley}{%
   \cmdkl{\mathsf{nSVC}}^{\cmdkl{\textsf{dr}}}#1#2#3}
\knowledgenewmathcommandPIE{\dcShapley}{%
   \cmdkl{\mathsf{CSVC}}^{\cmdkl{\textsf{dr}}}#1#2#3}
\knowledgenewmathcommandPIE{\dcnShapley}{%
   \cmdkl{\mathsf{nCSVC}}^{\cmdkl{\textsf{dr}}}#1#2#3}
\knowledgenewmathcommandPIE{\mcShapley}{%
   \cmdkl{\mathsf{SVC}}^{\cmdkl{\textsf{MC}}}#1#2#3}
\knowledgenewmathcommandPIE{\pShapley}{%
   \cmdkl{\mathsf{SVC}}^{\cmdkl{\textsf{P}}}#1#2#3}
\knowledgenewmathcommandPIE{\rShapley}{%
   \cmdkl{\mathsf{SVC}}^{\cmdkl{\textsf{R}}}#1#2#3}
\knowledgenewmathcommandPIE{\saShapley}{%
   \cmdkl{\mathsf{SVC}}^{\cmdkl{\textsf{hom}}}#1#2#3}
\newcommand{\minsupindex}{\textsf{ms}}
\knowledgenewmathcommandPIE{\msShapley}{%
   \cmdkl{\mathsf{SVC}}^{\cmdkl{\minsupindex}}#1#2#3}
\knowledgenewmathcommandPIE{\sShapley}{%
   \cmdkl{\mathsf{SVC}}^{\cmdkl{\textsf{s}}}#1#2#3}
\knowledgenewmathcommandPIE{\sharpShapley}{%
   \cmdkl{\mathsf{SVC}}^{\cmdkl{\textsf{\#}}}#1#2#3}
\knowledgenewmathcommandPIE{\wShapley}{%
   \cmdkl{\mathsf{SVC}}#1#2#3}
\newcommand{\subendo}{{\textup{\textsf{n}}}}
\newcommand{\subexo}{{\textup{\textsf{x}}}}
\knowledgenewrobustcmd{\Dn}[1][\D]{#1_{\cmdkl{\subendo}}}
\knowledgenewrobustcmd{\Dx}[1][\D]{#1_{\cmdkl{\subexo}}}
\knowledgenewrobustcmd{\constn}{\cmdkl{\textit{const}_{\cmdkl{\subendo}}}}
\knowledgenewrobustcmd{\constx}{\cmdkl{\textit{const}_{\cmdkl{\subendo}}}}
\knowledgenewmathcommandPIE{\games}{%
   \cmdkl{\mathcal{F}^{\emptyset \mapsto 0}#1#2#3}}
\knowledgenewmathcommandPIE{\sgames}{%
   \cmdkl{\mathcal{SF}^{\emptyset \mapsto 0}#1#2#3}}
\knowledgenewrobustcmd{\Sh}{\cmdkl{\mathrm{Sh}}} %
\knowledgenewrobustcmd{\Bz}{\cmdkl{\mathrm{Bz}}} %
\knowledgenewmathcommandPIE{\scorefun}{%
   \cmdkl{\xi}#1#2#3}
\knowledgenewmathcommandPIE{\STscorefun}{%
   \cmdkl{\Xi}^{\cmdkl{\mathsf{\star}}}#1#2#3}
\knowledgenewmathcommandPIE{\stscorefun}{%
   \cmdkl{\xi}^{\cmdkl{\mathsf{\star}}}#1#2#3}
\knowledgenewmathcommandPIE{\onems}{%
   \cmdkl{\xi}^{\cmdkl{\mathsf{1ms}}}#1#2#3}%
\knowledgenewmathcommandPIE{\SHAPscorefun}{%
   \cmdkl{\Xi}^{\cmdkl{\textsf{SHAP}}}#1#2#3}
\knowledgenewmathcommandPIE{\shapscorefun}{%
   \cmdkl{\xi}^{\cmdkl{\textsf{SHAP}}}#1#2#3}
\knowledgenewmathcommandPIE{\Dscorefun}{%
   \cmdkl{\Xi}^{\cmdkl{\textsf{dr}}}#1#2#3}
\knowledgenewmathcommandPIE{\dscorefun}{%
   \cmdkl{\xi}^{\cmdkl{\textsf{dr}}}#1#2#3}
\knowledgenewmathcommandPIE{\MCscorefun}{%
   \cmdkl{\Xi}^{\cmdkl{\textsf{MC}}}#1#2#3}
\knowledgenewmathcommandPIE{\mcscorefun}{%
   \cmdkl{\xi}^{\cmdkl{\textsf{MC}}}#1#2#3}
\knowledgenewmathcommandPIE{\Pscorefun}{%
   \cmdkl{\Xi}^{\cmdkl{\textsf{P}}}#1#2#3}
\knowledgenewmathcommandPIE{\pscorefun}{%
   \cmdkl{\xi}^{\cmdkl{\textsf{P}}}#1#2#3}
\knowledgenewmathcommandPIE{\Rscorefun}{%
   \cmdkl{\Xi}^{\cmdkl{\textsf{R}}}#1#2#3}
\knowledgenewmathcommandPIE{\rscorefun}{%
   \cmdkl{\xi}^{\cmdkl{\textsf{R}}}#1#2#3}
\knowledgenewmathcommandPIE{\SAscorefun}{%
   \cmdkl{\Xi}^{\cmdkl{\textsf{hom}}}#1#2#3}
\knowledgenewmathcommandPIE{\sascorefun}{%
   \cmdkl{\xi}^{\cmdkl{\textsf{hom}}}#1#2#3}
\knowledgenewmathcommandPIE{\MSscorefun}{%
   \cmdkl{\Xi}^{\cmdkl{\minsupindex}}#1#2#3}
\knowledgenewmathcommandPIE{\msscorefun}{%
   \cmdkl{\xi}^{\cmdkl{\minsupindex}}#1#2#3}
\knowledgenewmathcommandPIE{\Sscorefun}{%
   \cmdkl{\Xi}^{\cmdkl{\textsf{s}}}#1#2#3}
\knowledgenewmathcommandPIE{\sscorefun}{%
   \cmdkl{\xi}^{\cmdkl{\textsf{s}}}#1#2#3}
\knowledgenewmathcommandPIE{\SHARPscorefun}{%
   \cmdkl{\Xi}^{\cmdkl{\textsf{\#}}}#1#2#3}
\knowledgenewmathcommandPIE{\sharpscorefun}{%
   \cmdkl{\xi}^{\cmdkl{\textsf{\#}}}#1#2#3}
\knowledgenewmathcommandPIE{\Wscorefun}{%
   \cmdkl{\Xi}#1#2#3}
\knowledgenewmathcommandPIE{\wscorefun}{%
   \cmdkl{\xi}#1#2#3}
\knowledgenewrobustcmd{\sigmaless}[1]{\cmdkl{\sigma}_{\!\cmdkl{<}#1}}
\knowledgenewrobustcmd{\sigmaleq}[1]{\cmdkl{\sigma}_{\!\cmdkl{\le}#1}}
\knowledgenewrobustcmd{\bse}[1]{\cmdkl{X_{#1}}} %
\knowledgenewrobustcmd{\atoms}{\cmdkl{\textit{atoms}}}
\knowledgenewrobustcmd{\arity}{\cmdkl{\mathrm{arity}}}
\knowledgenewrobustcmd{\const}{\cmdkl{\textit{const}}}
\knowledgenewrobustcmd{\Const}{\cmdkl{\mathsf{Const}}} %
\knowledgenewrobustcmd{\dom}{\cmdkl{\mathrm{dom}}} %
\knowledgenewrobustcmd{\mterms}{\cmdkl{\textit{term}}}
\knowledgenewrobustcmd{\vars}{\cmdkl{\textit{vars}}}
\knowledgenewrobustcmd{\Var}{\cmdkl{\mathsf{Var}}}
\knowledgenewrobustcmd{\Unif}[1][q]{\cmdkl{\mathbf{M}}_{#1}}
\knowledgenewmathcommandPIE{\IsubA}{%
   \cmdkl{\I#1#2#3}}
\knowledgenewrobustcmd{\dnames}{\cmdkl{\mathsf{N_D}}}
\knowledgenewrobustcmd{\cnames}{\cmdkl{\mathsf{N_C}}}
\knowledgenewrobustcmd{\rnames}{\cmdkl{\mathsf{N_R}}}
\knowledgenewrobustcmd{\inames}{\cmdkl{\mathsf{N_I}}}
\knowledgenewrobustcmd{\nulls}{\cmdkl{\mathsf{N_U}}}
\knowledgenewrobustcmd{\vnames}{\cmdkl{\mathsf{N_V}}}
\knowledgenewrobustcmd{\irnames}{\cmdkl{\mathsf{N^{\pm}_R}}}
\knowledgenewrobustcmd{\terms}{\cmdkl{\mathsf{terms}}}
\knowledgenewrobustcmd{\mods}{\cmdkl{\mathsf{Mod}}}
\knowledgenewrobustcmd{\withT}[1]{(\T,#1)} %
\knowledgenewrobustcmd{\omqsat}{\mathrel{\cmdkl{\models}}} %
\knowledgenewrobustcmd{\dllitec}{ %
\cmdkl{\ensuremath{\mathsf{DL\text{-}Lite}_{\mathsf{core}}}}}
\knowledgenewrobustcmd{\dllitech}{ %
\cmdkl{\ensuremath{\mathsf{DL\text{-}Lite}_{\mathsf{core}}^{\mathcal{H}}}}}
\knowledgenewrobustcmd{\dlliteh}{ %
\cmdkl{\ensuremath{\mathsf{DL\text{-}Lite}_{\mathsf{Horn}}}}}
\knowledgenewrobustcmd{\dlliter}{ %
\cmdkl{\ensuremath{\mathsf{DL\text{-}Lite}_\mathcal{R}}}}%
\knowledgenewrobustcmd{\EL}{\cmdkl{\mathcal{EL}}}
\knowledgenewrobustcmd{\elhibot}{\cmdkl{\mathcal{ELHI}_{\bot}}}
\newcommand{\horndl}{\elhibot}
\knowledgenewrobustcmd{\Lmin}{\cmdkl{\L_{\min}}} %
\knowledgenewrobustcmd{\partsof}[1]{\cmdkl{\mathcal{P}}(#1)} %
\knowledgenewrobustcmd{\fpartsof}[1]{\cmdkl{\mathcal{P}_{\mathsf{f}}}(#1)} %
\knowledgenewrobustcmd{\Sym}{\cmdkl{\mathfrak{S}}} %
\knowledgenewrobustcmd{\Totord}{\cmdkl{\mathfrak{O}}} %
\knowledgenewrobustcmd{\lb}{\cmdkl{\mathrm{lb}}}%
\DeclareMathOperator{\Ima}{Im} %
\knowledgenewrobustcmd{\Esp}{\mathbf{E}} %
\knowledgenewrobustcmd{\Prob}{\mathbf{P}} %
\knowledgenewrobustcmd{\Vari}{\mathbf{V}} %
\knowledgenewrobustcmd{\dcup}{\mathbin{\cmdkl{\uplus}}} %
\knowledgenewrobustcmd{\bigdcup}{\mathop{\cmdkl{\biguplus}}} %
\knowledgenewrobustcmd{\homto}[1][]{\mathrel{\cmdkl{\xrightarrow{\smash{\textit{\tiny #1 \!hom}}}}}} %
\knowledgenewrobustcmd{\Chomto}[1][]{\mathrel{\cmdkl{\xrightarrow{\smash{\textit{\tiny #1 \!$\C$-hom}}}}}} %
\knowledgenewrobustcmd{\Pichomto}[1][]{\mathrel{\cmdkl{\xrightarrow{\smash{\textit{\tiny #1 \!$\Pic$-h}}}}}} %
\knowledgenewrobustcmd{\polyrx}{ %
   \mathrel{\cmdkl{\le_{\mathsf{P}}}}}
\knowledgenewrobustcmd{\polyeq}{ %
   \mathrel{\cmdkl{\equiv_{\mathsf{P}}}}}
\knowledgenewrobustcmd{\class}{\mathcal{C}} %
\knowledgenewmathcommand{\ACQ}{\cmdkl{\mathsf{ACQ}}} %
\knowledgenewmathcommand{\CQ}{\cmdkl{\mathsf{CQ}}} %
\knowledgenewmathcommand{\CQeq}{\cmdkl{\mathsf{CQ}^{=}}} %
\knowledgenewmathcommand{\CQeqneq}{\cmdkl{\mathsf{CQ}^{\neq,=}}} %
\knowledgenewmathcommand{\CQneq}{\cmdkl{\mathsf{CQ}^{\neq}}} %
\knowledgenewmathcommand{\CRPQ}{\cmdkl{\mathsf{CRPQ}}} %
\knowledgenewmathcommand{\IQ}{\cmdkl{\mathsf{IQ}}} %
\knowledgenewmathcommand{\RPQ}{\cmdkl{\mathsf{RPQ}}} %
\knowledgenewmathcommand{\sjfACQ}{\cmdkl{\mathsf{sjf\text{-}ACQ}}} %
\knowledgenewmathcommand{\sjfCQ}{\cmdkl{\mathsf{sjf\text{-}CQ}}} %
\knowledgenewmathcommand{\UCQ}{\cmdkl{\mathsf{UCQ}}} %
\knowledgenewtextcommand{\UCQneg}{\cmdkl{UCQ$^\lnot$}} %
\knowledgenewmathcommand{\UCQneg}{\cmdkl{\mathsf{UCQ}^\lnot}} %
\knowledgenewmathcommand{\UCRPQ}{\cmdkl{\mathsf{UCRPQ}}} %
\knowledgenewrobustcmd{\aC}{\cmdkl{C}} %
\knowledgenewrobustcmd{\aCij}[1][i,j]{\cmdkl{C_{#1}}} %
\knowledgenewrobustcmd{\SNij}[1][i,j]{\cmdkl{M^N_{#1}}} %
\knowledgenewrobustcmd{\SNijtld}[1][i,j]{\cmdkl{\tilde{M}^N_{#1}}} %
\knowledgenewrobustcmd{\Ak}[1][k]{\cmdkl{\A^{#1}}} %
\knowledgenewrobustcmd{\Akchi}[2][k]{\cmdkl{\A_{#2}^{#1}}}
\knowledgenewrobustcmd{\Ao}[1][]{\cmdkl{\A^{\circ}_{#1}}} %
\knowledgenewmathcommandPIE{\AG}[1][G]{%
   \cmdkl{\A}^{\cmdkl{#1}}#2#3#4}
\newcommand{\AGn}{\AG_{\subendo}}
\newcommand{\AGx}{\AG_{\subexo}}
\knowledgenewrobustcmd{\PG}[1][]{\cmdkl{\P^{G}_{#1}}} %
\newcommand{\PGn}{\PG[\subendo]}
\newcommand{\PGx}{\PG[\subexo]}
\knowledgenewrobustcmd{\Axto}{\cmdkl{\A^{\to}_{\subexo}}} %
\knowledgenewrobustcmd{\Axla}{\cmdkl{\A^{\la}_{\subexo}}} %
\knowledgenewrobustcmd{\Below}{\cmdkl{\mathbf{B}}} %
\knowledgenewrobustcmd{\Left}{\cmdkl{\mathbf{L}}} %
\knowledgenewrobustcmd{\Right}{\cmdkl{\mathbf{R}}} %
\knowledgenewrobustcmd{\qreach}[1][s,t]{\cmdkl{r^*(#1)}}
\knowledgenewrobustcmd{\Reach}[1]{\cmdkl{\textsf{CC}(}#1\cmdkl{)}}
\knowledgenewrobustcmd{\rhoA}[1][\A]{\cmdkl{\rho_{#1}}}
\knowledgenewrobustcmd{\reachmodscorefun}{\cmdkl{\xi^N}} %
\knowledgenewmathcommandPIE{\psiN}{%
   \cmdkl{\psi}^{\cmdkl{N}}#1#2#3}
\knowledgenewmathcommand{\etardx}{\cmdkl{\eta}}%
 \def\ps@pprintTitle{%
 \let\@oddhead\@empty
 \let\@evenhead\@empty
 \def\@oddfoot{}%
 \let\@evenfoot\@oddfoot}
\begin{document}
\selectlanguage{british}  %

\begin{frontmatter}

\frontpagedeclarations

\begin{abstract}
   
The Shapley value was originally introduced in cooperative game theory as a wealth distribution mechanism.
It has since found use in knowledge representation and databases for the purpose of assigning scores to formulas and 
database tuples based upon their contribution to obtaining a query result or inconsistency. 
The application of the Shapley value outside of its original setting relies upon defining a numeric wealth function 
that captures the phenomenon of interest. In the case of database queries, recent work has focused on the 
so-called \emph{drastic Shapley value}, obtained by translating a Boolean query into a 0/1 function based upon whether 
the query is satisfied or not. 
The present paper explores the use of the drastic Shapley value in the context of ontology-mediated query answering (OMQA). 
We present a detailed complexity analysis of the drastic Shapley value computation ($\dShapleyNoKL{}$) problem in the OMQA setting. 
In particular, we establish a dichotomy result that shows that for every ontology-mediated query $(\T,q)$ composed of an ontology $\T$ formulated in the description logic $\mathcal{ELHI_{\bot}}$ and a connected constant-free homomorphism-closed query~$q$ the corresponding $\dShapleyNoKL{}$ problem is either tractable (in $\mathsf{FP}$) 
or $\mathsf{\#P}$-hard. 
 We further show how %
the $\mathsf{\#P}$-hardness side of the dichotomy can be strengthened to cover possibly disconnected queries with constants. 
Our results exploit recently discovered connections between $\dShapleyNoKL{}$ and probabilistic query evaluation and allow us to generalize existing results on probabilistic OMQA. 

\end{abstract}

\begin{keyword}
Shapley value computation \sep  ontology-mediated query answering \sep quantitative explanations \sep responsibility measures \sep description logic \sep probabilistic query evaluation \sep complexity analysis

\end{keyword}

\end{frontmatter}


\ifpaper
\else

\noindent
\raisebox{-.4ex}{\HandRight}\ \ This pdf contains internal links: clicking on a "notion@@notice" leads to its \AP ""definition@@notice"".%

\fi
\noindent\raisebox{-.4ex}{\HandRight}\ This article is an extended version of the KR 2024 paper \cite{BienvenuFL24} (see page \pageref{par:changes} for differences).

\section{Introduction}

The Shapley value was originally proposed in the context of cooperative game theory 
as a method for fairly distributing the wealth of a coalition of players based upon their 
respective contributions. It has appealing theoretical properties, having been shown to 
be the unique wealth distribution mechanism that satisfies a set of desirable axioms. 
Since its proposal in \cite{shapley:book1952}, 
it has found applications in numerous domains, including various areas of computer science. 
In artificial intelligence, the Shapley value has been utilized for defining inconsistency measures of propositional 
\cite{grantMeasuringInconsistencyKnowledgebases2006,hunterMeasureConflictsShapley2010} 
and description logic knowledge bases \cite{dengMeasuringInconsistenciesOntologies2007},
and more recently for defining explanations of machine learning models \cite{lundbergUnifiedApproachInterpreting2017}. 
The Shapley value has also gained attention recently in database research \cite{DBLP:journals/sigmod/BertossiKLM23}, 
where it has been employed both for defining inconsistency values of databases \cite{livshitsShapleyValueInconsistency2022} 
and also for providing quantitative explanations of query answers \cite{livshitsShapleyValueTuples2021} by assigning 
numeric scores to the database facts based upon their respective contributions to making a query answer hold. 

Importantly, in order to employ the Shapley value for explaining query answers, one must first translate the query
into a numerical wealth function. One simple way to do so is to represent the query (with free variables instantiated with the considered answer tuple) 
as a 0/1 function that returns 1 if the query is satisfied on the given input database, and 0 otherwise. 
This approach, which has been termed the \emph{drastic Shapley value} in \cite{ourpods25} 
because it is closely related to the inconsistency measure of the same name \cite[§5]{livshitsShapleyValueInconsistency2022}, 
has thus far garnered the most attention in the database literature and will be the focus of the present paper. 
It should however be noted that other translations of queries into wealth functions are possible. Indeed, 
the recent work \cite{ourpods25} defines and explores a different Shapley-based measure in the context of database queries, 
based on weighted sums of minimal supports (WSMS). 
Moreover, it is also possible to define responsibility measures for queries without appealing to the Shapley value at all, 
and in fact measures such as \emph{causal responsibility} \cite{meliouComplexityCausalityResponsibility2010} and the \emph{(drastic) Banzhaf power index} (\aka\ \emph{causal effect} \cite{salimiQuantifyingCausalEffects2016}) predate work on Shapley-based measures. 
We direct readers to  \cite{livshitsShapleyValueTuples2021,DBLP:journals/pacmmod/AbramovichDF0O24,ourpods25}
for more details on alternative measures and how they relate to the drastic Shapley value.

In general, the drastic Shapley value computation problem is known to be computationally challenging, being 
$\mathsf{\#P}$-hard in data complexity for common classes of queries, such as conjunctive queries.  
This has motivated non-uniform complexity studies aimed at pinpointing which %
queries admit tractable drastic Shapley value computation \cite{livshitsShapleyValueTuples2021,reshefImpactNegationComplexity2020,khalilComplexityShapleyValue2023}, 
in particular, by establishing fruitful 
connections with probabilistic query evaluation and variants of model counting \cite{deutchComputingShapleyValue2022a,karaShapleyValueModel2023,ourpods24}. 
An %
$\FP$/$\sP$-hard dichotomy in data complexity was established in \cite{livshitsShapleyValueTuples2021} for self-join free conjunctive queries (i.e.\ not having two atoms %
with the same relation name),  and it was shown in particular that the queries which enjoy tractable ($\FP$) drastic Shapley value computation are the same as those admitting efficient evaluation over probabilistic databases. This result has subsequently been extended to cover other classes of queries; in particular, the dichotomy was proven in \cite{ourpods24} to hold for connected homomorphism-closed queries without constants over binary signatures. 

In the present paper, we %
explore the use of the drastic Shapley value in the ontology setting, 
building upon these recent advances in the database area. We shall mostly focus on 
how the drastic Shapley value can be employed for explaining answers in the context of 
ontology-mediated query answering (OMQA). We recall that OMQA is used
to improve access to incomplete and possibly heterogeneous data through the addition of ontology layer, 
which provides a user-friendly vocabulary for query formulation as well as domain knowledge that is taken
into account when computing the query answers. 
Over the past fifteen years, OMQA has grown into a vibrant research topic within 
both the KR and database communities \cite{poggiLinkingDataOntologies2008,DBLP:journals/sigmod/CaliGLP11,mugnierIntroductionOntologyBasedQuery2014,bienvenuOntologyMediatedQueryAnswering2015,xiaoOntologyBasedDataAccess2018}. 
With the increasing maturity and deployment of OMQA techniques, 
there is an acknowledged need to help users understand the query results. 
Various notions of explanations with different levels of detail can be considered
for OMQA, ranging from providing proofs of how an answer can be derived \cite{DBLP:conf/otm/BorgidaCR08,DBLP:conf/ruleml/AlrabbaaBKK22}
to generating minimal subsets of the KB that suffice to obtain the answer or   
identifying the assertions and/or axioms that 
are relevant in the sense that they belong to such a minimal subset \cite{DBLP:journals/jair/BienvenuBG19,DBLP:conf/ijcai/CeylanLMV19,DBLP:conf/ecai/CeylanLMV20,DBLP:conf/kr/PenalozaS10}.
Quantitative responsibility measures such as the drastic Shapley value %
offer a more nuanced, quantitative version 
of the latter approach, by assigning the relevant assertions and axioms scores based upon 
their level of responsibility or importance in obtaining the considered query answer (or entailment).

For our study of drastic Shapley value computation, %
we will work with description logic (DL) knowledge bases (KB), 
consisting of an ABox (dataset) and TBox (ontology). We introduce some natural ways of defining the 
drastic Shapley value computation ($\dShapleyNoKL$) problem in the DL setting, by varying what is to be explained (entailment of a TBox axiom, ABox assertion, or query answer),
which parts of the KB are assigned values, and how the complexity is measured. 
To begin our study, we establish the $\mathsf{\#P}$-hardness of the drastic Shapley value computation of a simple graph reachability query, 
which we then employ to show $\mathsf{\#P}$-hardness of several variants of the $\dShapleyNoKL$ problem, identifying several sources of complexity. 
In particular, we show that $\mathsf{\#P}$-hardness 
applies as soon as we consider problems in which TBox axioms can be assigned scores, 
or consider logics which allow for existential restrictions ($\exists r. C$) or concept conjunction ($\sqcap$). 
While these results establish intractability even in fairly restricted settings, 
we also obtain a tractability result for ontology-mediated queries in which the TBox is formulated in common DL-Lite dialects \cite{DBLP:journals/jar/CalvaneseGLLR07} like $\dlliter$ (which underlies OWL 2 QL profile \cite{profiles})  
and the query consists of a single atom. 
With a view towards identifying further tractable cases, 
we embark on a non-uniform complexity analysis, whose aim is to classify the data complexity 
of the drastic Shapley value computation problems $\dShapleyNoKL_{Q}$ associated with each ontology-mediated query (OMQ) $Q=(\T,q)$.
By transferring recent results from the database setting, we establish a $\mathsf{FP}$/$\mathsf{\#P}$-hard dichotomy result of the drastic Shapley value
computation problem $\dShapleyNoKL_{Q}$ for OMQs $Q=(\T,q)$ where the TBox $\T$ is formulated in the Horn DL $\mathcal{ELHI_{\bot}}$ 
and $q$ is a constant-free connected homomorphism-closed query. 
Moreover, when restricted to the case where $q$ is a conjunctive query, the dichotomy is \AP ""effective"", i.e. we can decide whether
$\dShapleyNoKL_{Q}$ is $\mathsf{FP}$ or $\mathsf{\#P}$-hard. 
Our final and most technically challenging result is to show that the $\mathsf{\#P}$-hardness part of the 
dichotomy can be strengthened to cover OMQs based upon a wider range of queries $q$. 
Specifically, we show that for any OMQ $Q=(\T,q)$ based upon a $\mathcal{ELHI_{\bot}}$ TBox and a UCQ $q$ (which may be disconnected and/or contain constants),
non-FO-rewritability of $Q$ implies $\mathsf{\#P}$-hardness of $\dShapleyNoKL_{Q}$. Due to the tight connections holding between 
drastic Shapley value computation and probabilistic query evaluation, the proof of this result can be further used to 
obtain an $\mathsf{FP}$/$\mathsf{\#P}$-hard dichotomy for probabilistic ontology-mediated queries from $(\mathcal{ELHI_{\bot}},\mathsf{UCQ})$, 
substantially generalizing existing results. 

The paper is structured as follows. \Cref{sec:prelims} introduces key notions from databases and description logics,
and \Cref{sec:shapley} defines Shapley values and recalls useful results about their applications in databases. We also prove a new hardness result
for graph reachability queries, which we apply in \Cref{sec:reach} to show hardness of the drastic Shapley value computation
in various ontology settings. In \Cref{sec:rw}, we present our $\mathsf{FP}$/$\mathsf{\#P}$-hard dichotomy result for OMQs in the Horn DL $\mathcal{ELHI_{\bot}}$,
and in \Cref{sec:non-rw},  we strengthen the $\mathsf{\#P}$-hardness result to cover a wider range of queries, obtaining as a by-product an 
improved dichotomy result for probabilistic ontology-mediated query ansering.  
We conclude the paper in \Cref{sec:discuss} 
with a summary of our contributions and a discussion of future work.

\subsection*{Changes with respect to the conference paper} %
\label{par:changes}
The main results of this work were first presented in a conference paper at KR 2024 \cite{BienvenuFL24}, but with few proof details. 
In addition to including detailed proofs of all results,
the present paper also contains some new material beyond the conference version.  
First, several of our hardness results (specifically, \Cref{prop:graph,prop:kb1,prop:kb2}) 
have been improved to work even in the absence of exogenous facts.
Second, \Cref{prop:dllite-atomic} on the tractability of Shapley value computation for 
 instance queries over exogenous DL-Lite ontologies has been extended to dialects more general than $\dllitec$.
Third, we have added a new hardness result (\Cref{prop:kb3}) covering DLs admitting axioms with concept conjunction. 
Fourth, we now provide formal statements and proofs of \Cref{prop:addfpras} and \Cref{prop:hardapprox} about approximating Shapely values.  
Finally, we have added new discussion in \Cref{sec:discuss} about recently introduced alternatives to the "drastic Shapley value".

\color{black}

\section{Preliminaries}\label{sec:prelims}
\begin{toappendix}
\label{app:prelims}
\end{toappendix}
We recall some important notions %
related to description logics (DLs), databases,  queries, and complexity, 
directing readers to 
\cite{DBLP:books/daglib/0041477} for a detailed introduction to DLs. 
Note that our presentation of DLs and databases slightly differs from the ``usual'' ones
so that we may employ some definitions and notations in both settings.

\subsection{Databases} 
\AP A ""database"" $\D$ is a finite set of relational facts $P(\vec{a})$, where $P$ is a $k$-ary symbol
drawn from a countably infinite set of relation symbols $\intro*\dnames$
and $\vec{a}$ is a $k$-ary tuple of (""individual"") ""constants"" drawn from a countably infinite set~$\intro*\inames$.
We shall also consider \AP ""extended databases"" which may contain infinitely many facts $P(\vec{a})$,
and where the elements of $\vec{a}$ are drawn from $\inames$ and from a countably infinite
set $\intro*\nulls$ of unnamed elements. 
The \AP ""domain"" $\dom(\D)$ of $\D$ contains all constants and unnamed elements occurring in $\D$,
and we use $\intro*\const(\D)$ for the constants in $\D$. When $\D$ is a database, $\dom(D)=\const(\D)$. 

A \AP ""homomorphism"" from an "extended database" 
$\D$ to an "extended database" $D'$ is a function 
$h: \dom(\D) \rightarrow \dom(\D')$ such that 
$P(h(\vec{a})) \in D'$ for every $P(\vec{a}) \in \D$. 
We write $\D \AP\intro*\homto \D'$ to indicate the existence of such an $h$. 
If additionally $h(c)=c$ for every $c \in C  \cap \const(\D)$, with $C \subseteq \inames$,
then we call $h$ a "$C$-homomorphism" and write $\D \reintro*\homto[C-] \D'$. 

We say that a (possibly extended) database $\D$ is ""connected@@db"" if so is 
the underlying undirected graph with vertices $\dom(\D) \cup \D$ 
and edges $\{(a_i, P(\vec{a})) \mid 1 \leq i \leq n, P(a_1, \ldots, a_n) \in \D\}$.
The ""connected components@@db"" of $\D$ are the maximal subsets of $\D$
that are connected in the underlying graph.%

\subsection{Queries}
\AP In the most general sense, a \emph{$k$-ary} ""query"" ($k \geq 0$) can be defined as
a function $q$ that maps every extended database $\D$
to a set of $k$-tuples of constants from $\const(\D)$ (the \emph{answers} to $q$). 
Queries of arity $0$ are called \AP ""Boolean"". 
When $q$ is a Boolean query, each $\D$ is mapped either to $\{()\}$ or $\{\}$. 
In the former case, we say that $\D$ ""satisfies@@q"" $q$ and write $\D \models q$. 
If additionally $\D' \not \models q$
for every $\D' \subsetneq \D$, 
then we shall call $\D$ a \AP""minimal support for $q$"".

\AP A Boolean query $q$ is said to be
""monotone"" if $\D\models q$ and $\D \inc \D'$ implies $\D'\models q$, and
""closed under homomorphisms"", or \reintro{hom-closed},
if $\D \models q$ and $\D \homto[] \D'$ implies that $\D' \models q$. The notion of ""$C$-hom-closed""
is defined analogously using $\homto[C-]$ instead of $\homto[]$.
When $q$ is ($C$-)hom-closed
or "monotone" in general%
, $\D \models q$ iff $\D$ contains some "minimal support" for $q$; we say that $q$ is ""connected@@q"" if all its "minimal supports" are connected.%

So far we have considered an abstract notion of query, but in practice, queries 
are often specified  
in concrete query languages.
First-order (FO) queries are given by formulas in first-order predicate logic with equality, 
whose relational atoms are built from predicates from $\dnames$ and terms drawn from 
$\inames \cup \vnames$, with $\intro*\vnames$ a countably infinite set of variables,
equipped with standard FO logic semantics (i.e. $\D \models q$ if $\D$, viewed as a first-order
structure, satisfies the FO sentence $q$).
Two prominent classes of FO queries are 
\AP ""conjunctive queries"" (\reintro{CQs}\phantomintro{\CQ}), which are finite conjunctions of relational atoms whose variables may be existentially quantified,  %
and \AP ""unions of conjunctive queries"" (\reintro{UCQs}\phantomintro{\UCQ}) which 
are finite disjunctions of "CQs" having the same free variables. 
We remark that Boolean (U)CQs without constants are "hom-closed", and 
Boolean (U)CQs with constants in $\intro*\aC$ are $\aC$-"hom-closed". 
Other well-known examples of ($\aC$)-hom-closed queries include 
Datalog queries and regular path queries (RPQs). %

\subsection{Description Logic Knowledge Bases}\label{prelims-dl}\AP A "DL" ""knowledge base"" (\reintro{KB}) $\K = (\A,\T)$ comprises an "ABox" (dataset) $\A$ and a "TBox" ("ontology") $\T$,
which are built from countably infinite sets $\intro*\cnames$ of ""concept names"" (unary predicates) and $\intro*\rnames$ of ""role names"" (binary predicates)
with $\cnames \cup \rnames \subseteq \dnames$,
and the individual constants from $\inames$. 
\AP
An ""ABox"" is a database with relations drawn from $\cnames \cup \rnames$ and thus contains two kinds of facts:\phantomintro{assertion}
 ""concept assertions"" $A(c)$ ($A \in \cnames, c \in \inames$) and ""role assertions"" $r(c,d)$ ($r \in \rnames, c,d \in \inames$).
\AP
A ""TBox"" is a finite set of ""axioms"", whose form is dictated by the ""DL"" in question. We will use 
\emph{$\mathcal{L}$ TBox} to refer to a TBox formulated in the DL $\mathcal{L}$. %

The most expressive DL considered in this paper is \AP$\intro*\horndl$, in which 
""complex concepts"" are constructed as follows:
$$C := \top \mid A \mid C \sqcap C' \mid \exists R. C  \qquad A \in \cnames, R \in \irnames$$
where $\intro*\irnames = \rnames \cup \{r^- \mid r \in \rnames\}$. 
In $\horndl$, "TBoxes" consist of ""concept inclusions"" $C \sqsubseteq D$ (with $C,D$ complex concepts)
and ""role inclusions"" $R \sqsubseteq S$ with $R,S \in \irnames$. 
The "DL" $\horndl$ contains several well-known tractable DLs as fragments, 
such as the logic $\intro*\EL$ \cite{DBLP:conf/ijcai/BaaderBL05}, which is obtained from $\horndl$ by disallowing inverse roles 
and role inclusions, as well as some prominent dialects of the DL-Lite family \cite{DBLP:journals/jar/CalvaneseGLLR07,DBLP:journals/jair/ArtaleCKZ09}. 
In the simplest DL-Lite dialect, called $\intro*\dllitec$, "TBoxes" composed of "concept inclusions" of the form 
$$ B_1 \sqsubseteq (\neg) B_2 \qquad B_i := A \mid \exists R. \top \quad A \in \cnames, R \in \irnames$$
Two other prominent DL-Lite dialects are $\intro*\dlliteh$, obtained by extending $\dllitec$ by allowing for conjunctions ($\sqcap$) of concepts on the left-hand-side 
of concept inclusions, and $\intro*\dlliter$, which extends $\dllitec$ by allowing 
role inclusions $R_1 \sqsubseteq (\neg) R_2$ ($R_i \in \irnames$).%

\AP The semantics of DL "KB"s is defined using ""interpretations"" $\I= (\Delta^\I, \cdot^\I)$,
where $\Delta^\I \subseteq \inames \cup \nulls$ is a non-empty %
set 
and $\cdot^\I$ a function\footnote{To simplify the comparison with the database setting, we
make the \emph{standard names assumption}, interpreting constants as themselves, 
but our results also hold under the weaker \emph{unique names assumption}. %
Moreover, to allow for finite interpretation domains, we do not require all constants to be interpreted. } that maps every 
$A \in \cnames$ to a set $A^\I \subseteq \Delta^\I$ and  every $r \in \rnames$ to a binary relation
$r^\I \subseteq \Delta^\I \times \Delta^\I$. %
The %
function~$\cdot^\I$ is straightforwardly extended to interpret complex concepts and roles: %
$\top^\I= \Delta^\I$, $(C \sqcap D)^\I = C^\I \cap D^\I$,  $(\exists R. C)^\I= \{d \mid \exists\, e \in C^\I \text{ s.t. } (d,e) \in R^\I %
\}$, $(\neg G)^\I = \Delta^\I \setminus G^\I$, and  $(r^-)^\I=\{(e,d) \mid (d,e) \in r^\I\}$.
Note that by requiring that $\Delta^\I \subseteq \inames \cup \nulls$, 
we ensure that every interpretation $\I$ can be viewed as an extended database $\D_\I=\{A(e) \mid e \in A^\I\} \cup \{r(d,e) \mid (d,e) \in r^\I\}$,
and we shall view $\I$ as an extended database when convenient. %

\AP An interpretation $\I$ ""satisfies a (concept or role) inclusion""\phantomintro{\omqsat} $G \sqsubseteq H$ if $G^\I \subseteq H^\I$,
and it \reintro{satisfies an assertion} $A(c)$ (resp.\ $r(c,d)$) if $c \in A^\I$ (resp.\ $(c,d) \in r^\I$). 
We call $\I$ a ""model of a TBox"" $\T$ if it satisfies every axiom in $\T$,
a ""model of an ABox"" $\A$ if it satisfies every "assertion" in $\A$, %
and %
a ""model of a KB"" $(\A,\T)$ if it is a model of both $\T$ and $\A$. 
We use $\intro*\mods(\K)$ for the set of "models of a KB" $\K$. 
A "KB" $\K$ is ""consistent"" if $\mods(\K)\neq \emptyset$ (else it is \reintro{inconsistent}). 
An "ABox" $\A$ is "$\T$-consistent" when the "KB" 
$(\A, \T)$ is "consistent". 
An axiom $\alpha$ is ""entailed from a TBox"" $\T$, written $\T \models \alpha$,
if every "model@model of a TBox" of $\T$ "satisfies@@omq" $\alpha$, and %
an axiom or "assertion" $\alpha$
is ""entailed from a KB"" $\K$, 
written $\K \models \alpha$, if every "model@model of the KB" of $\K$ "satisfies@@omq"~$\alpha$. 

Later in the paper, it will be convenient to assume that $\horndl$
"TBoxes" only use $\bot$ in axioms of the form $C \sqsubseteq \bot$, with $C$
a possibly complex concept that does not mention $\bot$. We will say that 
such a "TBox" is in \AP""$\bot$-normal form"". Note that this can be assumed w.l.o.g.\ since 
any "TBox" can be transformed 
into an equivalent "TBox" in 
"$\bot$-normal form" in polynomial time: first remove all inclusions $C_\bot \sqsubseteq D$ 
in which $C_\bot$ contains $\bot$, then replace any inclusion $C \sqsubseteq D_\bot$
in which $D_\bot$ contains $\bot$ by $C \sqsubseteq \bot$.

\subsection{Querying Description Logic KBs} 

We say that %
a Boolean query $q$ is "entailed from a DL KB"~$\K$, 
written $\K \models q$, if $\D_\I \models q$ for every $\I \in \mods(\K)$.
\AP The ""certain answers"" to a non-"Boolean" $k$-ary query $q(\vec{x})$ \wrt\  a "KB" $\K = (\A, \T)$ are the 
$k$-tuples $\vec{a}$ of "constants" from $\const(\A)$ %
such that $\K \models q(\vec{a})$, with $q(\vec{a})$ the Boolean query 
obtained by substituting $\vec{a}$ for the free variables $\vec{x}$. 
Note that when the KB %
 is "inconsistent", every "Boolean" query is trivially entailed,  
so every 
possible tuple $\vec{a}$
of "ABox" "constants" counts as 
a certain answer. 

While it is traditional to view queries as being posed to the KB, 
it is sometimes more convenient to adopt a database perspective and 
treat $\T$ and $q$ together as constituting a composite \AP ""ontology-mediated query"" (\reintro{OMQ}) $Q=(\T, q)$, which 
is posed to the "ABox" $\A$. %
When we adopt this perspective, we will write
$\A \models (\T, q)$ or $\A \models Q$ to mean $(\A, \T) \models q$. 
We will use the notation $(\mathcal{L}, \mathcal{Q})$
to designate the class of all OMQs $(\T, q)$ such that $\T$ is formulated in the DL $\mathcal{L}$
and $q$ is a query from the class of queries $\mathcal{Q}$.
Most work on OMQA assumes that the input query takes either the form of a CQ or has the more restricted form of an \AP""instance query"" (\reintro{IQ}\phantomintro{\IQ})\footnote{Sometimes "instance queries" are defined more generally as $C(x)$, where $C$ may be a complex concept. However, it is well known that by introducing a fresh concept name $A_C$ and adding inclusions $A_C \sqsubseteq C$ and $C \sqsubseteq A_C$ to the TBox, we may focus w.l.o.g on IQs involving only concept names. }, consisting of a single concept atom $A(x)$ with $A \in \cnames$ and $x$ an answer variable (or $A(c)$, with $c\in \inames$, if we focus on Boolean instance queries). 
We shall thus present results for OMQ languages $(\mathcal{L}, \mathcal{Q})$ where $\mathcal{Q}$ is either $\CQ$, $\UCQ$ or $\IQ$. 

A prominent technique for computing certain answers (or checking query entailment)
is to rewrite an "OMQ" into another query that can be directly evaluated using a database system. 
Formally, we call a query $q^*(\vec{x})$ a \AP ""rewriting of an OMQ"" $(\T,q)$ if 
for every "ABox" $\A$ and candidate answer %
 $\vec{a}$: 
$$\A \models (\T,q(\vec{a})) \quad \text{ iff }  \quad \A \models q^*(\vec{a})$$
If we modify the above definition to only quantify over "$\T$-consistent" "ABoxes",
then we speak instead of a \AP ""rewriting w.r.t.\ consistent ABoxes"". 
When %
 $q^*$ is a first-order query, we call it a \AP ""first-order (FO) rewriting"".
If an "OMQ" $Q$ possesses an "FO-rewriting", we say that $Q$ is \reintro{FO-rewritable},
else it is called \reintro{non-FO-rewritable}. 

It will also prove useful to be able to switch back and forth between the consistency 
and query entailment tasks. 
The following lemma recalls a standard technique for reducing inconsistency checking to 
query answering, in which we replace $\bot$ by a fresh concept that is used to represent inconsistency. 
It further shows how we can reduce query answering w.r.t.\ arbitrary ABoxes (in which case the query may be trivially entailed due to inconsistency)
to query answering w.r.t.\ a $\bot$-free TBox (without need to consider consistency).
The proof is routine but is provided in \ref{app:prelims} for completeness.

\begin{lemmarep}\label{incons-bot}
Let $\T$ be an $\horndl$ TBox in "$\bot$-normal form",
and let $\T'$ be the TBox obtained from $\T$ by replacing every occurrence of $\bot$ with $A_\bot$, where 
$A_\bot$ is a fresh concept name. For every ABox $\A$ that does not contain $A_\bot$: 
\begin{itemize}
\item $(\A,\T)$ is inconsistent iff $(\A, \T') \models \exists x. A_\bot(x)$
\item $(\A, \T) \models q$ iff $(\A, \T') \models q \vee \exists x. A_\bot(x)$ 
 \end{itemize}
\end{lemmarep}
\begin{proof}
Consider an ABox $\A$ that does not mention $A_\bot$. To establish the first statement, 
let us suppose that $\A$ is $\T$-inconsistent. 
Then this means that for every model $\I'$ of $(\A, \T')$, 
there must exist $C \sqsubseteq \bot \in \T$ such that 
 $C^\I \neq \emptyset$. It follows that $A_\bot^\I \neq \emptyset$,
from which we conclude $(\A, \T') \models \exists x. A_\bot(x)$. 
If instead $\A$ is $\T$-consistent,
then there exists a model $\I$ of $(\A, \T)$.
It follows that for every inclusion $C \sqsubseteq \bot \in \T$, 
we must have  $C^\I = \emptyset$. By setting $A_\bot^{\I'} = \emptyset$,
we obtain a model $\I'$ of $(\A, \T')$ witnessing that $(\A, \T') \not \models \exists x. A_\bot(x)$. 

For the second statement, suppose that $(\A, \T) \models q$. 
If $(\A,\T)$ is inconsistent, then the first statement tells us that $(\A, \T') \models \exists x. A_\bot(x)$,
hence $(\A, \T') \models q \vee \exists x. A_\bot(x)$. 
Next suppose $(\A,\T)$ is consistent, and let $\I'$ be a model of $(\A, \T')$. 
Either $A_\bot^{\I'} \neq \emptyset$, in which case $\exists x. A_\bot(x)$ holds,
or $A_\bot^{\I'} = \emptyset$, in which case $\I'$ is a model of $(\A, \T)$
and hence must satisfy $q$. We thus obtain $(\A, \T') \models q \vee \exists x. A_\bot(x)$, as required. 
\end{proof}

We shall also make use of another useful property of query entailment in $\horndl$, namely, that 
when a "CQ" is entailed, we can always extract a ``tree-like'' ABox that is sufficient to entail the query 
and can be homomorphically embedded in the given ABox. Before we can formalize this property, 
we recall some necessary definitions, taken from \cite{bienvenuFirstOrderrewritabilityContainment2016a}. 

\AP
An "ABox" $\A$ is ""tree-shaped"" if the undirected graph with nodes $\const(\A)$ and edges $\{\{a,b\}\mid r(a,b)\in \A\}$ is acyclic and connected and $r(a,b)\in\A$ implies that
\begin{enumerate*}[(i)]
\item $s(a,b)\notin \A$ for all $s\neq r$ and
\item $s(b,a)\notin \A$ for all "role names"~$s$.
\end{enumerate*}
For "tree-shaped" "ABoxes" $\A$, we often distinguish an "individual" used as the root, denoted with $\intro*\rhoA$.
\AP An "ABox" $\A$ is a ""pseudo tree"" if it is the union of "ABoxes" $\A_0,\dots,\A_k$ that satisfy the following conditions:
\begin{enumerate}
\item $\A_1,\dots,\A_k$ are "tree-shaped";
\item $k\le |\const(\A_0)|$;
\item $\A_i\cap\A_0=\{\rhoA[\A_i]\}$ and $\const(\A_i)\cap\const(\A_j)=\emptyset$, for $1\leq i < j \leq k$.
\end{enumerate}
\AP We call $\A_0$ the ""core@@ptree"" of $\A$ and $\A_1,\dots,\A_k$ the ""trees@@ptree"" of $\A$. The ""width@@ptree"" of $\A$ is $|\const(\A_0)|$, its ""depth@@ptree"" is the depth of the deepest "tree@@ptree" of $\A$, and its ""outdegree@@ptree"" is the maximum outdegree of the "ABoxes" $\A_1,\dots,\A_k$. For a "pseudo tree" "ABox" $\A$ and $l\ge 0$, we write $\A|_{\le l}$ to denote the restriction of $\A$ to the "individuals" whose minimal distance from a "core@@ptree" individual is at most $l$, and analogously for $\A|_{>l}$.
We can now state the desired property, whose proof relies upon first `unravelling' the given ABox and then employing compactness. We refer readers to 
\cite{bienvenuFirstOrderRewritabilityContainment2020} for details. 

\begin{proposition}[{\cite[Proposition 23]{bienvenuFirstOrderRewritabilityContainment2020}}]\label{prop:unraveling}%
\hspace{-1ex}\footnote{We have slightly modified the formulation to suit our setting, dropping mention of ABox signatures and functional roles, not considered in this work, and rephrasing in terms of Boolean queries with constants, rather than queries with answer variables. }
Let $Q=\withT{q}$ be a "Boolean" "OMQ" from $(\mathcal{ELIH_{\bot}},\CQ)$ with set of constants $\aC$, and let $\A$ be an "ABox" that is "$\T$-consistent" such that $\A \omqsat Q$. Then there is a "pseudo tree" "ABox" $\A^*$ that is "$\T$-consistent", of "width@@ptree" at most $|q|$, of "depth@@ptree" bounded by $|\T|$ and such that all constants from $C$ occur in the "core@@ptree" of $\A^*$ and the following conditions are satisfied:
\begin{enumerate}[(a)]
\item\label{prop:unraveling-a} $\A^*\omqsat Q$;
\item\label{prop:unraveling-b} $\A^* \homto[C-] \A$.
\end{enumerate}
\end{proposition}

\subsection{Canonical Models}\label{canmoddef}
\def\canmod{\ensuremath{{\I_{\A,\T}}}\xspace}
\def\canmodk{\ensuremath{{\I_{\K}}}\xspace}
\def\IK{\canmodk}

In Horn DLs, like $\horndl$ and $\dllitec$, every "consistent" "KB" admits a so-called canonical (or universal) model, which embeds homomorphically into every other model. As one of our proofs will rely upon the structure of the canonical model, we shall recall its construction  for  the DL $\horndl$, adopting the definition given in \cite{bienvenuOntologyMediatedQueryAnswering2015}. 

Given a satisfiable $\horndl$ KB $\K=(\A,\T)$, we consider
the interpretation $\canmod$ 
defined as follows. 
The domain $\Delta^{\I_{\A,\T}}$ of $\canmod$  consists of sequences of the form $a R_{1} M_{1}
\ldots R_{n} M_{n}$ ($n \geq 0$) such that $a \in \const(\A)$, 
each $R_i$ is a role from $\irnames$,  each $M_i$ is a conjunction of concepts from $\cnames \cup \{\top\}$, and the following conditions are satisfied (note that here and later, we shall sometimes abuse notation by treating conjunctions of concepts as sets):
\begin{itemize}
   \item If $n \geq 1$, then  $\K \models  \exists R_1. M_1(a)$ and there is no $M_1' \supsetneq M_1$ such that  
$\K \models  \exists R_1. M_1'(a)$ %
\item For every $1 \leq i < n$, $\T \models M_{i} \sqsubseteq \exists R_{i+1}. M_{i+1} $ 
and there is no $M_{i+1}' \supsetneq M_{i+1}$ such that  $\T \models M_{i} \sqsubseteq \exists R_{i+1}. M'_{i+1} $ 
\end{itemize}
Note that we will suppose that all elements in $\Delta^{\I_{\A,\T}} \setminus \const(\A)$
belong to~$\nulls$.
Concept names and role names are interpreted as follows:
\begin{align*}
A^{\I_{\A,\T}} \,\, = &\,\, \{ a \in \const(\A) \mid  \K \models A(a)
\} \, \cup \\ &\,\,   \{ e \in
\Delta^{\I_{\A,\T}}\setminus \const(\A) \mid e = e' R M \text{ and }
A \text{ a conjunct of } M \}   \\
r^{\I_{\A,\T}} \,\,= &\,\, \{ (a,b) \mid \K \models r(a,b)\} \, \cup \\
&\,\, \{
(e_{1},e_{2})
\mid
e_{2} = e_{1} S  \, M  \text{ and } \T \models S \sqsubseteq r \} \,\cup   \\
& \,\, \{ (e_{2},e_{1})
\mid 
e_{2} = e_{1} S \, M  \text{ and } \T \models S \sqsubseteq r^{-}  \} 
\end{align*}
Intuitively, $\canmod$ completes the "ABox" by adding tree-shaped structures
using unnamed elements in order to satisfy the "TBox" axioms in the least constrained way possible. We call $\I_{\A,\T}$ the \AP ""canonical model"" of $\K=(\A, \T)$. 

By construction, $\canmod$ is a model of the KB $\K=(\A, \T)$. Moreover, it embeds homomorphically into every other model of $\K$: 
$\D_{\I_{\A,\T}} \homto[$C$-] \D_\I$ for every $\I \in \mods(\K)$ and every $C \subseteq \inames$. 
Importantly, this means that for "$C$-hom-closed" queries such as CQs and UCQs, 
we can obtain the certain answers (which are defined w.r.t.\ all models of the KB) by evaluating the query in the canonical model. 
Formally:

\begin{theorem}[{\cite[Theorem 13]{bienvenuOntologyMediatedQueryAnswering2015}}]\!\!\footnote{We cite the paper in which the particular construction we use was presented, but similar results were proven earlier. See \cite[Proposition 2]{DBLP:conf/aaai/EiterOSTX12} for a superficially different but closely related canonical model construction for Horn-$\mathcal{SHIQ}$, which extends $\horndl$.} \label{canmodprops}%
If $\K=(\A, \T)$ is a satisfiable $\horndl$ KB, 
then for every "$C$-hom-closed" query~$q$ (with $C \subseteq \inames$): %
$$\K \models q(\vec{a}) \quad \text{ iff } \quad \D_{\I_{\A,\T}} \models q(\vec{a})$$
\end{theorem}

We shall also make use of the following lemma, which shows that if there is a homomorphism between two ABoxes, 
then there is a corresponding homomorphism between (the extended databases of) their canonical models. 
The proof is not difficult but a bit tedious, so we relegate it to \ref{app:prelims}.

\begin{lemmarep}\label{aboxhomcanmod}
Let $\T$ be an $\horndl$ TBox, and let $\A$ and $\B$ be $\T$-consistent ABoxes. 
If $\A \homto[C-] \B$, then $\D_{\I_{\A,\T}} \homto[C-] \D_{\I_{\B,\T}}$.
\end{lemmarep}
\begin{proof}
Suppose that $\T$ is an $\horndl$ TBox, and $\A$ and $\B$ are $\T$-consistent ABoxes
such that $\A \homto[C-] \B$. Let $h$ be a homomorphism witnessing that $\A \homto[C-] \B$.
We can use $h$ to define a homomorphism $h'$ from $\D_{\I_{\A,\T}}$ to $\D_{\I_{\B,\T}}$
inductively as follows:
\begin{itemize}
\item for every $a \in \const(\A)$, set $h'(a)=h(a)$
\item for every $e S M \in \Delta^{\I_{\A,\T}}$ (with $e \in \Delta^{\I_{\A,\T}}$, $S \in \irnames$, $M \subseteq \cnames$), set $h'(e S  M) = h'(e) S  M'$ %
for some $M'$ 
s.t.\ $M \subseteq M'$ and $h'(e) S  M' \in \Delta^{\I_{\B,\T}}$ %
\end{itemize}
It is easy to see from the canonical model construction that every element of $\Delta^{\I_{\A,\T}}$ has one of the preceding two forms. 
In what follows, we shall prove by induction that $h'$ is well defined (that is, it is always possible to find $M'$ satisfying the conditions of the second item) and that is 
a homomorphism. 
Note that since $h'$ is equal to $h$ on $\const(\A)$, and $h(c) =c$ for all $c \in \const(\A) \cap C$, 
this suffices to show that $h'$ is a $C$-homomorphism from $\D_{\I_{\A,\T}}$ to $\D_{\I_{\B,\T}}$. 

The induction will proceed according to the depth of elements in $\Delta^{\I_{\A,\T}}$, 
where constants in $\const(\A)$ have depth $0$ and an element $a R_{1} M_{1}
\ldots R_{n} M_{n}$ has depth $n$. 
For the base case, 
we show that $h'$ satisfies the required conditions 
when restricted to depth-0 elements, i.e.\ the constants in $\const(\A)$. 
To this end, first suppose that $r(a,b) \in \D_{\I_{\A,\T}}$ for $a,b \in \const(\A)$. 
Then there must exist a (possibly inverse) role $S$ such that $S(a,b) \in \A$ and $\T \models S \sqsubseteq r$.
Since $h$ is a homomorphism from $\A$ to $\B$, it follows that $S(h(a), h(b)) = S(h'(a),h'(b)) \in \B$, 
hence $\B,\T \models r(h'(a),h'(b))$ and $r(h'(a),h'(b)) \in \D_{\I_{\B,\T}}$, as required. 
Next take some $a \in \const(\A)$ such that $D(a) \in \D_{\I_{\A,\T}}$, 
and suppose for a contradiction that 
$D(h'(a)) \not \in \D_{\I_{\B,\T}}$. 
Let us define an interpretation $\J$ with domain $\const(\A) \cup \Delta^{\I_{\B,\T}}$ as follows:
\begin{align*}
A^{\J} \,\, = &\,\, A^{\I_{\B,\T}} \, \cup \, \{ a \mid  a \in \const(\A) \text{ and } h'(a) \in A^{\I_{\B,\T}}\}  \\
r^{\J} \,\,= &\,\, r^{\I_{\B,\T}} \, \cup \, \{ (a,b) \mid  (h'(a),h'(b)) \in r^{\I_{\B,\T}}\} \, \cup  \\
&\,\, \{
(a,e) \mid a \in \const(\A) \text{ and } (h'(a),e) \in r^{\I_{\B,\T}}\} \, \cup \\
& \,\, \{ (e,a) 
\mid 
a \in \const(\A) \text{ and } (e,h'(a)) \in r^{\I_{\B,\T}}\}
\end{align*}
Observe that $\J$ is a model of $\A$. Indeed, if $E(c) \in \A$, then $E(h(c)) \in \B$ (due to $h$ being a 
homomorphism), hence $E(h'(c)) \in \B$ (since $h'(c)=h(c)$), yielding $c \in E^{\J}$. Similarly, if $s(c,d) \in \A$, 
then $s(h(c), h(d)) = s(h'(c),h'(d)) \in \B$, yielding $(c,d) \in s^{\J}$. We can further show that $\J$
is a model of $\T$. This can be done by first proving, by a straightforward induction on the size of concepts, 
that for every (potentially complex) concept $C$, we have $c \in C^{\J}$ iff $h'(c) \in C^{\I_{\B,\T}}$
for every $c \in \const(\A)$, and $e \in C^{\J}$ iff $e \in C^{\I_{\B,\T}}$ for every $e \in \Delta^{\I_{\B,\T}}$. %
Next consider an axiom $C_1 \sqsubseteq C_2 \in \T$. If $c \in C_1^{\J}$ for $c \in \const(\A)$, 
then $h'(c) \in C_1^{\I_{\B,\T}}\}$, hence $h'(c) \in C_2^{\I_{\B,\T}}$ (since $\I_{\B,\T}$ is a model of $\T$),
which yields $c \in C_2^{\J}$. Similarly, if $e \in C_1^{\J}$ for $e \in \Delta^{\I_{\B,\T}}$,
then $e \in C_1^{\I_{\B,\T}}$, hence $e \in C_2^{\I_{\B,\T}}$, so we get $e \in C_2^{\J}$.
It is also easily seen that all role inclusions in $\T$ also hold in $\J$ due to how roles are interpreted in $\J$, following $\I_{\B,\T}$. 
It only remains to observe that $a \not \in A^{\J}$ due to the definition of $\J$ and assumption that $D(h'(a)) \not \in \D_{\I_{\B,\T}}$.
It follows that $\A,\T \not \models D(a)$, which contradicts $D(a) \in \D_{\I_{\A,\T}}$. We can thus conclude that 
$h'$ is a homomorphism of $\D_{\I_{\A,\T}}$ restricted to $\const(\A)$ into $\D_{\I_{\B,\T}}$.

For the induction step, let us suppose that $h'$ is a homomorphism when restricted to elements of $\Delta^{\I_{\A,\T}}$
of depth at most $k$, and take some  $e  S  M \in \Delta^{\I_{\A,\T}}$ of depth $k+1$. 
Since $e  S  M \in \Delta^{\I_{\A,\T}}$, we know from the definition of $\I_{\A,\T}$  that %
$M_e = \{A \in \cnames \mid e \in A^{\I_{\A,\T}}\}$ is such that $\T \models M_e \sqsubseteq \exists S. M$. 
Due to the induction hypothesis,  for every $A \in M_e$, we have $A(h'(e)) \in \D_{\I_{\B,\T}}$, and hence $h'(e) \in (\exists S. M)^{\I_{\B,\T}}$. 
It follows from the definition of $\I_{\B,\T}$ that there must exist $M'$ such that $M \subseteq M'$ and $h'(e) S  M' \in \Delta^{\I_{\B,\T}}$.
This means that the definition of $h'$ is well defined and so $h'(e S  M) = h'(e) S  M'$ for some such set $M'$. 
Note that for every $A \in \cnames$, if $A(e  S  M) \in \D_{\I_{\A,\T}}$, then $A \in M$, hence $A(h'(e)  S  M') \in \D_{\I_{\B,\T}}$, as required. 
Moreover, due to the structure of $\D_{\I_{\A,\T}}$, the only role assertions involving $e  S  M$ and elements of depth at most $k$
are role assertions between $e S   M$ and its `predecessor' $e$, 
which are either of the form $r(e, e  S   M)$ where $\T \models S \sqsubseteq r$
or $r'(e  S   M, e)$ where $\T \models S^- \sqsubseteq r'$. 
Let us consider some such assertion $r(e, e  S  M)  \in \D_{\I_{\A,\T}}$ (the argument for assertions of the form $r'(e  S   M, e)$  is analogous). 
We have already seen that $h'(e) S   M' \in \Delta^{\I_{\B,\T}}$, hence 
we will have $(h'(e), h'(e)  S   M') \in S^{\I_{\B,\T}}$ and thus $r(h'(e), h'(e)  S   M') \in \D_{\I_{\B,\T}}$, as required. 
This completes the argument for the induction step, allowing us to conclude that $h'$ is a homomorphism.
\end{proof}

\subsection{Complexity}\nointro{\NP}%
The Shapley value computation tasks studied in this paper are \emph{function problems}, which return numbers as outputs. Our analysis will reference the following well-known complexity classes for function problems:  $\intro*\FP$, the set of function problems solvable in deterministic polynomial time, and $\intro*\sP$, which comprises those function problems which correspond to the number of accepting runs of some nondeterministic polynomial-time Turing machine.
\AP
We will work with ""polynomial-time Turing reductions"" between computational tasks, and we write $P_1 \intro*\polyeq P_2$ to denote that there are polynomial-time algorithms to compute $P_i$ using unit-cost calls to $P_{3-i}$, for both $i \in \set{1,2}$.

\section{Shapley Value: Definition \& Basic Results}\label{sec:shapley}
In this section, we formally define the "Shapley value", 
recall relevant existing results, and prove a 
new intractability result for computing "Shapley values" in reachability games. 

\subsection{Definition of Shapley Value}
The "Shapley value" \cite{shapley:book1952} was introduced as a means to fairly distribute wealth amongst players in %
a "cooperative game" based upon their respective contributions. A \AP""cooperative game"" is defined as a set of players 
together with a "wealth function" which assigns a numeric value to every subset of the players (called a coalition). 
Note however that for the set of "players", one can utilise any set of elements equipped with a numeric function over its subsets. 
The Shapley value then serves to transform such a numeric function over subsets into a numeric function over the elements themselves.

\AP Formally, a ""wealth function"" is a function $\intro*\scorefun : \partsof{\bse{\scorefun}} \to \lQ$ such that $\scorefun(\emptyset) = 0$, 
 where the notation $\intro*\partsof{S}$ denotes the sets of all subsets of the input set~$S$. We shall refer to the underlying set $\intro*\bse{\scorefun}$ neutrally as the ""base set""  of $\scorefun$ (rather than speaking of players).  
The ""Shapley value"" is a function %
$\AP\intro*\Sh$ that takes a "wealth function" $\scorefun$ as input\footnote{Traditionally, the Shapley value would take a cooperative game $(\bse{\scorefun},\scorefun)$ as input,  but it suffices to take $\scorefun$ as input since the "base set" $\bse{\scorefun}$ is fully determined by the domain of $\scorefun$.} and outputs a function $\Sh_{\scorefun} : \bse{\scorefun} \to \lQ$. %
It is defined as the only such function $\psi$ that satisfies the following axioms:

\begin{enumerate}[leftmargin=\widthof{(wSym)}+\labelsep]
   \item[{\crtcrossreflabel{(wSym)}[Sh:1]}] \textit{Weak Symmetry:}
      if
      $\scorefun(S\cup\{\alpha\}) = \scorefun(S\cup\{\beta\})$ for all $S\subseteq \bse{\scorefun}\setminus\{\alpha,\beta\}$, then $\psi_{\scorefun}(\alpha) = \psi_{\scorefun}(\beta)$;
   \item[{\crtcrossreflabel{(Null)}[Sh:2]}] \AP\textit{Null element:} any (so-called ""null@@player"") "base element" that does not contribute to increasing the wealth (\ie\ such that $\scorefun(S \cup \set \alpha)= \scorefun(S)$ for all $S$) must obtain 0 as contribution;
   \item[{\crtcrossreflabel{(Lin)}[Sh:4]}] \textit{Linearity:} the value of the sum of two "wealth functions" $\scorefun_1 + \scorefun_2$ over the same "base set" is just the sum of values over the separate wealth functions: $\psi_{\scorefun_1+\scorefun_2}(\alpha)=\psi_{\scorefun_1}(\alpha)+\psi_{\scorefun_2}(\alpha)$;
   \item[{\crtcrossreflabel{(Eff)}[Sh:3]}] \textit{Efficiency:} the sum
      $\sum_{\alpha\in \bse{\scorefun}} \psi_{\scorefun}(\alpha)$
      of all contributions equals the total wealth $\scorefun(\bse{\scorefun})$ of the "base set".
\end{enumerate}

\begin{remark}
The precise axioms have seen some variations over the years. In the seminal paper \cite{shapley:book1952}, Shapley presents only 3 axioms: \ref{Sh:2} is omitted and a different stronger alternative axiom for \ref{Sh:3} is used, which implies both \ref{Sh:2} and \ref{Sh:3} as stated here. 

Moreover, the original symmetry axiom is slightly stronger than \ref{Sh:1}, with the drawback of being somewhat more tedious to state. It has recently been shown, however, that the above weak symmetry is sufficient to ensure the unicity of the "Shapley value" \cite{ourpods25arxiv}.
\end{remark}

The "Shapley value" can be equivalently defined in terms of the following %
closed-form formula: %
\AP
\begin{equation}\label{formul:sh1}
	\Sh_{\scorefun}(\alpha) = \frac{1}{|\bse{\scorefun}|!}\sum_{\sigma\in \Totord(\bse{\scorefun})} \!\!\! \left(\scorefun(\sigmaleq{\alpha}) - \scorefun(\sigmaless{\alpha})\right)
\end{equation}
where $\intro*\Totord(\bse{\scorefun})$ denotes the set of all total orderings of $\bse{\scorefun}$ and $\intro*\sigmaless{\alpha}$ (resp.~$\intro*\sigmaleq{\alpha}$) the set of elements that smaller than $\alpha$ (resp.\ smaller than or equal to $\alpha$) by the order $\sigma$. 
Intuitively, Eq.~\eqref{formul:sh1} takes the average marginal contribution $\scorefun(\sigmaleq{\alpha}) - \scorefun(\sigmaless{\alpha}$) of $\alpha$, across all possible orderings $\sigma$ of $\bse{\scorefun}$. 

The following alternative, but equivalent, formula is more computationally efficient because it is a sum over subsets of $\bse{\scorefun}$, rather than orderings, %
by grouping the orderings that have the same $\sigmaless{p}$ together: 
\begin{equation}\label{formul:sh}
   \Sh_{\scorefun}(\alpha) = \sum_{S\subseteq \bse{\scorefun} \setminus \set \alpha } \!\!\!  \frac{|S|!(|\bse{\scorefun}| - |S| -1)!}{|\bse{\scorefun}|!}\left(\scorefun(S \cup \{\alpha\}) - \scorefun(S)\right)
\end{equation}

\subsection{Applying the Shapley Value to Databases}\label{ssec:drshap}
There has been significant interest lately in using the "Shapley value"
to quantify the contribution of database facts to a given query answer.
\AP
The formal setting is as follows: 
the "database" $\D$ is ""partitioned@partitioned database""
into ""endogenous"" and ""exogenous"" ""facts"", \AP$\D = \intro*{\Dn} \dcup \intro*{\Dx}$. %
Note that, in the present paper, we use $A \AP \intro*\dcup B\phantomintro*{\bigdcup}$ to denote the union $A \cup B$ of two \emph{disjoint} sets $A,B$, the symbol $\dcup$ serving to emphasize that the sets are disjoint. %
The "exogenous facts" (in $\Dx$) are treated as obvious or always present, and therefore should not be granted any responsibility. What we want is thus a \AP ""quantitative responsibility measure"" $\phi_{q,\D} : \Dn \to \lQ$ that assigns a numeric responsibility to every "endogenous fact", based on its contribution to making the input Boolean query hold %
 in the (partitioned) "database". Note that we may focus w.l.o.g.\ on responsibility measures for Boolean queries, since we can define the responsibility of a database fact to obtaining the answer $\vec{a}$ to query $q(\vec{x})$ by simply computing the responsibility of the fact w.r.t.\ the associated Boolean query $q(\vec{a})$.

\subsubsection*{Shapley-Based Responsibility Measures}
To obtain such a "measure", it suffices to define a \AP ""wealth function family"" $\intro*\STscorefun \defeq (\intro*\stscorefun_{q,\D})_{q,\D}$ that associates, with every "Boolean query" $q$ and "partitioned database" $\D = \Dn \dcup \Dx$, a "wealth function" $\stscorefun_{q,\D}$ of "base set" $\bse{\stscorefun_{q,\D}} \defeq \Dn$ that reflects the satisfaction of $q$. Intuitively, for every $S\inc \Dn$, the wealth $\stscorefun_{q,\Dx}(S)$ should give a quantitative measure of ``how much does $S\dcup\Dx$ satisfy $q$''. Once a "wealth function family" has been chosen, the "Shapley value" can be applied as a black box to define the "responsibility measure" $\phi_{q,\D} : \alpha\in \Dn \mapsto \Sh_{\stscorefun_{q,\D}}(\alpha)$.

The arguably simplest solution, which will be the focus of this paper, is to employ the so-called \AP ""drastic wealth function""
$\intro*\Dscorefun \defeq (\dscorefun_{q,\D})_{q,\D}$ defined as 
$$\intro*\dscorefun_{q,\D}: S\inc \Dn \mapsto v_{S} - v_{\subexo}$$
where $v_{S} = 1$ (resp.\ $v_{\subexo} = 1$) if $S \dcup \Dx \models q$ (resp.\ if $\Dx \models q$), and $0$ otherwise. This defines the \AP ""drastic Shapley value"" of an "endogenous fact" $\alpha \in \Dn$ as $\Sh_{\dscorefun_{q,\D}}(\alpha)$.
It should be noted that, since $\Dscorefun$ was the only "wealth function family" considered until very recently, most of the literature (including the conference version of the present article) refers to the "drastic Shapley value" simply as ``the Shapley value'' \cite{livshitsShapleyValueTuples2021}. 
However, other 
Shapley-based responsibility measures can be defined by choosing another way to associate wealth functions to queries. 
We refer readers to Section \ref{sec:discuss} for discussion and a comparison with the recently defined WSMS family of measures.

\subsubsection*{Relevance and Desirable Properties}
Several axiomatic properties have been defined and discussed in \cite[]{ourpods25,ourpods25arxiv} to justify that the "drastic Shapley value" is a ``good'' responsibility measure:

\begin{itemize}[leftmargin=\widthof{(wSym-db)}+\labelsep]
   \item[{\crtcrossreflabel{(wSym-db)}[Shdb:1]}] If $q,q'$ define the same Boolean function and $S\cup\{\alpha\}\models q \LRa S\cup\{\beta\}\models q'$ for all $S\subseteq \D\setminus\{\alpha,\beta\}$, then $\phi_{q,\D}(\alpha) = \phi_{q',\D}(\beta)$;\footnote{This axiom was simply denoted (Sym-db) in the proceedings paper \cite{ourKR25}, but the authors recommend (wSym-db) instead, where the w stands for `weak', for reasons detailed in the extended version \cite{ourKR25arxiv}.} 
      \item[{\crtcrossreflabel{(Null-db)}[Shdb:2]}] If a "fact" $\alpha\in\Dn$ is \AP""irrelevant"" (\ie\ $\alpha$ appears in no "minimal support" for $q$ in $\D$, or $\Dx\models q$) then $\phi_{q,\D}(\alpha) = 0$; otherwise $\phi_{q,\D}(\alpha) > 0$.%
   \item[{\crtcrossreflabel{(MS1)}[MS:1]}] All other things being equal, a fact that appears in smaller "minimal supports" should have higher responsibility.
   \item[{\crtcrossreflabel{(MS2)}[MS:2]}] All other things being equal, a fact that appears in more "minimal supports" should have higher responsibility.
\end{itemize}

The formal statements of \ref{MS:1} and \ref{MS:2} are somewhat technical and beyond the scope of this paper, but they can be found in \cite[§B.1]{ourpods25}. Here we shall focus on \ref{Shdb:2}, which is arguably the most critical property, since it intuitively states that the "facts" with responsibility 0 should exactly be those whose addition to a subdatabase  never changes the outcome.
The "drastic Shapley value" satisfies this axiom (along with the three others) for all "Boolean" "monotone" queries \cite{ourpods25arxiv},
which includes the "$C$-hom-closed" queries we study here.
Of course, observe that the "drastic Shapley value" is more informative
than mere "relevance", given \ref{MS:1} and \ref{MS:2}. Indeed, in the forthcoming \cref{ex:shap}, we shall exhibit a case in which all %
considered "assertions" are "relevant", but have different "drastic Shapley values".

\subsection{Drastic Shapley Value Computation in the Database Setting}
Extensive work has been directed at characterizing the complexity of computing the "drastic Shapley value". Given a class $\C$ of "queries", the ""drastic Shapley value computation"" problem on $\C$, denoted $\AP\intro*\dShapley_{\C}$, is the problem of computing, given as input a "query" $q\in \C$, a "partitioned database" $\D$ and an "endogenous fact" $\alpha\in \Dn$, the "drastic Shapley value" $\Sh_{\dscorefun_{q,\D}}(\alpha)$.
We will often consider classes consisting of a single query ($\C = \{q\}$), in which case we simply write $\dShapley_q$.

\subsubsection*{Related Problems in Probabilistic Query Evaluation}

We will exploit known connections between $\dShapley$ and probabilistic query evaluation.
\AP
A ""tuple-independent probabilistic database"" is a pair $\D = (X,\pi)$ where $X$ is a "database" and $\pi: X \to (0,1]$ is a probability assignment.
For a "Boolean query"~$q$, $\AP\intro*\Prob(\D \models q)$ is the probability of $q$ being true, where each "assertion" $\alpha$ has independent probability $\pi(\alpha)$ of being in the "database". 
The problem of computing, given a "tuple-independent probabilistic database" $\D$,  the probability $\Prob(\D \models q)$ is known as the ""probabilistic query evaluation"" problem, or $\intro*\PQE_{q}$.
We consider three restrictions of $\PQE_{q}$, by limiting the probabilities that appear in the image $\Ima(\pi)$ of the probability assignment of the  input "probabilistic database": %
\begin{itemize}
   \item $\intro*\PQEPhalf_{q}$: input $(X,\pi)$ is such that $\Ima(\pi) = \set{\nicefrac 1 2}$;
	\item $\intro*\PQEPhalfOne_q$: input $(X,\pi)$ is such that $\Ima(\pi) \inc \set{\nicefrac 1 2, 1}$;
	\item ""single proper probability query evaluation"" ($\intro*\SPPQE_{q}$): 
	input $(X,\pi)$ is such that $\Ima(\pi) \inc \set{p,1}$ for some $p \in (0,1]$.
\end{itemize}
These restricted versions of $\PQE$ can be also found in the literature under the names of their counting problem counterparts: $\PQEPhalf$ is also known as the ``model counting'' \cite{ourpods24} or ``uniform reliability'' \cite{Amarilli23} problems, and $\PQEPhalfOne$ as the ``generalized model counting'' \cite{kenigDichotomyGeneralizedModel2021} problem.

\subsubsection*{Known Results on Drastic Shapely Value Computation}

\AP
In \cite[Theorem 4.1]{livshitsShapleyValueTuples2021}, a $\FP$/$\sP$-hard dichotomy was established for ""self-join free"" "CQs" (i.e.\ not having two atoms %
with the same relation name). The dichotomy coincides with the $\FP$/$\sP$-hard dichotomy for $\PQE$ \cite[Theorem 5.2]{dalviEfficientQueryEvaluation2004}, and the tractable queries admit a syntactical characterization, known as ""hierarchical"" queries.
\AP
In fact, the $\PQE$ dichotomy extends to the more general class of "UCQs" \cite[Theorem 4.21]{dalviDichotomyProbabilisticInference2012}, where 
the queries for which $\PQE$ is tractable are known as ""safe@@q"" "UCQs" (hence, in particular "hierarchical" "CQs" are "safe@@q").
However, it is an open problem whether "UCQs" (or even "CQs" with "self-joins") also enjoy a dichotomy for $\dShapley$.
Concretely, it is unknown if $\dShapley_q$ is $\sP$-hard when $q$ is an arbitrary "unsafe@@q" "UCQ" (or "CQ").

Recent work has clarified the relation between the two dichotomies by reducing $\dShapley$ to $\PQE$ \cite[Proposition 3.1]{deutchComputingShapleyValue2022a} and reproving the hardness of $\dShapley$ \cite{karaShapleyValueModel2023} by "reduction" from the same model counting problem for \emph{Boolean functions} that had been used to show hardness of $\PQE$ for non-"hierarchical" "self-join free" "CQs" \cite{dalviEfficientQueryEvaluation2004}. 
Further, $\SPPQE$ and $\dShapley$ have been shown to be polynomial-time inter-reducible for many fragments of "hom-closed" queries \cite[Lemmas 4.1, 4.3 and 4.4]{ourpods24}, in particular for "connected@@q" queries without "constants" \cite[Corollary 4.1 (and Proposition 3.3)]{ourpods24}.
\begin{theorem}[{\cite[Corollaries 4.1 and 4.2]{ourpods24}}]\label{thm:homclosedconn-pods-dichotomy} 
	For every "connected@@q" "hom-closed" "Boolean" query $q$, $\SPPQE_{q} \polyeq \dShapley_q$; 
	further, on "graph databases", $\dShapley_q$ is in $\FP$ if $q$ is equivalent to a "safe@@q" "UCQ" and $\sP$-hard otherwise.
\end{theorem}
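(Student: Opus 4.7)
The plan is to prove the equivalence $\SPPQE[q] \polyeq \Shapley{q}$ first, and then derive the dichotomy as a consequence by combining the equivalence with the Dalvi--Suciu dichotomy for $\PQE$ on UCQs.

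For the direction $\Shapley{q} \polyrx \SPPQE[q]$, I would use polynomial interpolation. Given a partitioned database $(\Dn,\Dx)$ with target endogenous fact $\alpha$, I consider two families of probabilistic databases over $\Dx \dcup \Dn$, both with $Im(\pi)\subseteq\{p,1\}$: every fact in $\Dx$ has probability $1$, every fact in $\Dn\setminus\{\alpha\}$ has probability~$p$, and $\alpha$ is either present with probability~$1$ or removed entirely. The probabilities $\Pr[\D\models q]$ in these two cases are polynomials in $p$ of degree at most $|\Dn|-1$ whose coefficients are, up to known $(1-p)$-factors, the counts $M^\epsilon_k=|\{B\subseteq\Dn\setminus\{\alpha\}:|B|=k,\ \Dx\cup B\cup\{\alpha\}^\epsilon\models q\}|$ for $\epsilon\in\{0,1\}$. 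Querying $\SPPQE[q]$ at $|\Dn|$ distinct values of $p$ and solving the resulting Vandermonde system recovers every $M^0_k,M^1_k$, and plugging these into formula~\eqref{formul:sh} directly yields $\Sh(\Dn,\scorefun[q],\alpha)$.

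The reverse reduction $\SPPQE[q] \polyrx \Shapley{q}$ is the main technical obstacle, and this is where the assumptions that $q$ is \emph{connected} and \emph{constant-free} are essential. Given an input $(S,\pi)$ with $Im(\pi)=\{p,1\}$, split $S$ into $S_p$ and $S_1$ and write $\Pr[S\models q]=\sum_{k=0}^{|S_p|} p^k(1-p)^{|S_p|-k} N_k$, where $N_k=|\{B\subseteq S_p:|B|=k,\ S_1\cup B\models q\}|$; it suffices to recover all $N_k$. The plan is to build a family of partitioned databases parameterised by an integer $m$, obtained by taking $S_1$ as exogenous and $S_p$ together with a fresh marked fact $\alpha$ (placed in its own dedicated connected gadget) plus $m$ disjoint ``dummy'' connected copies of a small gadget that cannot by itself entail $q$ as endogenous. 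Because $q$ is hom-closed, connected, and constant-free, any minimal support is contained in a single connected component, so the dummies inflate the coalition sizes (thus reshaping the Shapley weights) without ever contributing to entailment over $S$. The Shapley value of $\alpha$ in this construction then decomposes as an explicit linear combination of the $N_k$ whose coefficients depend on $m$; ranging $m$ over $0,1,\ldots,|S_p|$ gives an invertible linear system from which all $N_k$ can be extracted, and hence $\Pr[S\models q]$ for the specific $p$.

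The dichotomy then follows. On the tractable side, if $q$ is equivalent to a safe UCQ then $\PQE[q]\in\FP$ by Dalvi--Suciu, so $\SPPQE[q]\in\FP$, and the equivalence transfers tractability to $\Shapley{q}$. For hardness, the main obstacle is that Dalvi--Suciu's $\sP$-hardness proof for non-safe UCQs uses arbitrary probability assignments, whereas $\SPPQE$ restricts to $Im(\pi)\subseteq\{p,1\}$. On graph databases, however, one can show that the canonical hardness patterns witnessing unsafety can be reproduced using only probabilities in $\{p,1\}$: auxiliary tuple probabilities can be absorbed into probability-$1$ edges of a bipartite gadget, while connectedness of $q$ routes the hard counting instance through a single connected component, giving a direct reduction from an $\sP$-hard counting problem (e.g.\ $\#$PP2DNF) to $\SPPQE[q]$. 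The equivalence then lifts this to $\sP$-hardness of $\Shapley{q}$.
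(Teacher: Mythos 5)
First, a framing point: the paper does not prove this statement at all --- it is imported verbatim from \cite[Corollaries 4.1 and 4.2]{ourpods24} --- so there is no in-paper proof to compare against. Judged on its own terms, your proposal correctly reconstructs the architecture of the equivalence. The interpolation argument for $\Shapley{q} \polyrx \SPPQE[q]$ is sound and standard, and your padding construction for the converse is the same device this paper itself deploys in \Cref{prop:graph} and in the proof of \Cref{th:non-rw}: a marked fact whose marginal contribution is nonzero exactly on the coalitions to be counted, plus size-inflating padding, yielding a linear system whose solvability rests on the invertibility of the matrix with entries $(i+j)!$. One caveat there: with $m$ \emph{independently optional} disjoint dummies, the coefficients of your system become binomial-weighted sums over how many dummies happened to join the coalition, and invertibility is no longer the off-the-shelf determinant fact; the usual fix is to make the padding a chain that is only useful when jointly present (as in \Cref{prop:graph}), which pins the relevant coalition sizes and recovers the known invertible matrix.

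The genuine gap is in the hardness half of the dichotomy. Your plan to ``reproduce the canonical unsafety gadgets with probabilities in $\{p,1\}$'' is not a routine adaptation: the Dalvi--Suciu hardness proofs for unsafe UCQs use arbitrary tuple probabilities essentially (interpolating over the probability values themselves), and showing that hardness survives the restriction to $Im(\pi)=\set{\nicefrac 1 2, 1}$ is precisely the generalized model counting dichotomy of Kenig and Suciu --- a substantial independent theorem that the cited corollary invokes, not something that can be absorbed into a bipartite gadget in a paragraph. Moreover, the ``otherwise'' branch of the statement also covers connected hom-closed queries that are not equivalent to \emph{any} UCQ (unbounded queries such as reachability); for these the required ingredient is the $\sP$-hardness of $\PQEPhalf$ for unbounded hom-closed queries on graph databases due to Amarilli et al., which your proposal does not address and which is proved by entirely different techniques. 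Without these two hardness results (cited or reproved in full), the $\sP$-hard side of the dichotomy does not go through.
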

\AP
In the context of the previous statement, a ""graph database"" is a "database" restricted to relations of arity 1 or 2 (hence an "ABox" can be seen as a "graph database").
The result above relies crucially on the $\sP$-hardness of $\PQEPhalf$ (and hence of $\SPPQE$) for "non-FO-rewritable" (\aka\ ""unbounded"")
"hom-closed" queries on "graph databases" \cite[Theorem 1.3]{Amarilli23}, and the $\FP$/$\sP$-hard dichotomy of $\PQEPhalfOne$ for "UCQs" \cite[Theorem 2.2]{kenigDichotomyGeneralizedModel2021}.

\subsubsection*{Purely Endogenous Setting}
While the "endogenous"/"exogenous" partition is a useful mechanism, it is also quite natural to consider scenarios in which no such distinction is made
and all database facts are treated as "endogenous". 
\AP
Formally, we will write  $\AP\intro*\dnShapley_{\C}$ (or $\dnShapley_{q}$ when $\C = \{q\}$) to refer to $\dShapley_{\C}$ restricted to ""purely endogenous"" databases, \ie\ "partitioned databases" with only "endogenous" "assertions" (equivalently, having $\Dx = \emptyset$).

While \cite[§6.1]{ourpods24} defines $\dnShapley$ and provides a couple of results, the complexity of the problem remains mostly open. For instance, to the best of our knowledge, it is currently not known if $\dnShapley_q$ is hard for any single "CQ" (or even "UCQ") $q$. In fact, the same can be said for the related probabilistic problem $\PQEPhalf$.
On the other hand, we cannot either point to any query $q$ for which $\dnShapley_q$ is known to be tractable but $\dShapley_q$ isn’t.
As we shall soon see, however, the hardness of $\dnShapley$ can easily be obtained for some other queries, such as those expressing the notion of reachability, and we shall phrase our reductions for the "purely endogenous" problem whenever possible.%

\subsection{Hardness Result for Reachability}
\label{sec:reach-hardness}

This subsection shows the $\sP$-hardness of the "drastic Shapley value computation" of \AP""graph reachability"", %
which, as we shall see in \Cref{sec:reach}, implies the $\sP$-hardness of several problems in the setting of ontologies.

\AP
Consider the "Boolean" query which asks whether there is a directed path from a vertex $s$ to a vertex $t$ in a (directed) graph. We can represent a graph as a "graph database" using a single binary relation $r$, with $s,t$ "constants", in which case the $s$-$t$-reachability query can be expressed as $\intro*\qreach$ in standard regular path query (\AP""RPQ"") notation. 
As we show next, computing the "drastic Shapley value" for this simple query is already $\sP$-hard, even in the restricted case of "purely endogenous" databases.

\begin{proposition}\label{prop:graph}
	$\dnShapley_{\qreach}$ is $\sP$-hard. %
\end{proposition}
\begin{proof}
The result follows via a "reduction" akin to the one in \cite[Proposition 4.6]{livshitsShapleyValueTuples2021}, but from a different $\sP$-hard task, namely $\intro*\numSTConn$,
which is the task of, given a graph $G$ and vertices $s,t$ thereof, counting the number of subgraphs of $G$ 
which contain a path from $s$ to $t$ 
\cite[problem 11]{valiantComplexityEnumerationReliability1979}. This "reduction" follows a technique that will be used again in later proofs, which consists in producing several related variants of an instance, so that when applying \Cref{formul:sh} we obtain a system of linear combinations of the desired values. It then suffices to show that this system is invertible in order to obtain the desired values in polynomial time.

\begin{figure}[tb]\centering
\begin{tikzpicture}%
	\small
	\begin{pgfonlayer}{nodelayer}
		\node [draw, cloud, minimum height=7mm, minimum width=12mm] (0) at (-.5, 0) {$G$};
		\node [draw, circle,fill=white] (1) at (-1, 0) {$s$};
		\node [draw, circle,fill=white] (2) at (0, 0) {$t$};
		\node [draw, circle] (3) at (-2, 0) {\phantom{$s$}};
		\node [] at (3) {$s_i$};
		\node [] (4) at (-3, 0) {$\cdots$};
		\node [draw, circle] (5) at (-4, 0) {\phantom{$s$}};
		\node [] at (5) {$s_2$};
		\node [draw, circle] (6) at (-5, 0) {\phantom{$s$}};
		\node [] at (6) {$s_1$};
		\node [] (7) at (-6, 0) {$G_i$:};
	\end{pgfonlayer}
	\begin{pgfonlayer}{edgelayer}
		\draw [->, >=stealth] (6) to (5);
		\draw [->, >=stealth] (5) to (4);
		\draw [->, >=stealth] (4) to (3);
		\draw [->, >=stealth] (3) to (1);
		\draw [->, >=stealth,bend left=15,color=blue] (6.north east) to node[midway, below, sloped,color=blue] {$\mu$} (2.north west);
	\end{pgfonlayer}
\end{tikzpicture}
\caption{Illustration of the graph $G_i$.}
\label{fig:Gi}
\end{figure}

Let $G=(V,E)$ a directed graph (formally, we see $E$ as a database over the single relation $r$). 
Denote $m=|E|$ and build for every $i\in [m]$ the graph $G_i = (V_i,E_i)$ by adding a set of edges $C_i$ forming a path $s_1 \to s_2 \to \dots \to s_i \to s$ and an extra edge $\mu$ from $s_1$ to $t$ whose Shapley value we will measure, for the reachability from $s_1$ to $t$ (see \Cref{fig:Gi}). Recall \Cref{formul:sh}:
\begin{align*}
   \Sh_{\dscorefun_{\qreach[s_1,t],E_i}}(\mu) = \sum_{\X\inc E_i\setminus\{\mu\}} \gamma_\X \cdot
   (\dscorefun_{\qreach[s_1,t],E_i}(\X\cup\{\mu\}) - \dscorefun_{\qreach[s_1,t],E_i}(\X))
\end{align*}
where $\gamma_\X\defeq \frac{|\X|!(|E_i| - |\X| - 1)!}{|E_i|!}$. Note that for simplicity we write $\dscorefun_{\qreach[s_1,t],E_i}$ instead of $\dscorefun_{\qreach[s_1,t],(E_i,\emptyset)}$ since $\dnShapley_{\qreach}$ requires a purely "endogenous" input.

Here by definition $\dscorefun_{\qreach[s_1,t],E_i}(\X)=1$ if there exists a path from $s_1$ to $t$ in $(V_i,\X)$ and $0$ otherwise. Therefore, $\dscorefun_{\qreach[s_1,t],E_i}(\X\cup\{\mu\}) - \dscorefun_{\qreach[s_1,t],E_i}(\X) = 0$ if $\X$ already connects $s_1$ to $t$, \ie\ if $\X\cap E$ connects $s$ to $t$ (denoted $s\xrightarrow{\X\cap E}t$) and the other edges of $\X$ exactly form the set $C_i$. Otherwise, $\dscorefun_{\qreach[s_1,t],E_i}(\X\cup\{\mu\}) - \dscorefun_{\qreach[s_1,t],E_i}(\X) = 1$ since $\mu$ alone fulfils the connectedness condition. Hence, we can isolate the contribution of all terms such that $\dscorefun_{\qreach[s_1,t],E_i}(\X\cup\{\mu\}) - \dscorefun_{\qreach[s_1,t],E_i}(\X) = 0$:

\begin{align*}
   \Sh_{\dscorefun_{\qreach[s_1,t],E_i}}(\mu) -\sum_{\X\inc E_i\setminus\{\mu\}} \gamma_\X = - \sum_{\substack{\X\inc E_i\setminus\{\mu\}\\\X\setminus E = C_i\\s\xrightarrow{\X\cap E}t}} \gamma_\X
\end{align*}
The left-hand side of this equation can be simplified as $\Sh_{\dscorefun_{\qreach[s_1,t],E_i}}(\mu) -1$, and the right-hand side can be expressed as follows in terms of the numbers $cs_j$ of subgraphs of $G$ of size $j$ that connect $s$ to $t$, by grouping the terms in the sum by $|\X\cap E| \eqdef j$.
\[
   \Sh_{\dscorefun_{\qreach[s_1,t],E_i}}(\mu) -1 = -\sum_{j=0}^{m} \frac{(i+j)!(m-j)!}{(m+i+1)!}cs_j
\]
In other words, we obtain a linear system of equations $A \cdot \mathbf{x}=\mathbf{y}$, where 
$\mathbf{y}$ is the $m$-vector of the different values of $\Sh_{\dscorefun_{\qreach[s_1,t],E_i}}(\mu) -1$ and $\mathbf{x}$ is the $m$-vector of variables corresponding to the values $cs_j$. If the system can be solved, we obtain ---with an oracle for computing the $\Sh$-values plus polynomial-time arithmetic computations--- the values of all the $cs_j$ values, whose sum is precisely the solution to the $\numSTConn$ instance we wish to solve.

Finally, we can show that the matrix $A$ is invertible by studying its determinant: multiplying every row by $(m+i+1)!$ and dividing every column by $(m-j)!$ reduces $A$ to the matrix of general term $(i+j)!$, which is known to be invertible  \cite[proof of Theorem 1.1]{bacherDeterminantsMatricesRelated2002}.
\end{proof}

\section{Drastic Shapley Value Computation in the Ontology Setting: Definitions and First Intractability Results}\label{sec:reach}

Now that we that have seen how the "Shapley value" has been applied in the "database" setting, 
we can adapt the definitions and techniques to provide quantitative explanations in the ontology setting.

\subsection{Axiom Entailment}\label{ssec:axiomentail}
Before considering queries, we first consider an even simpler application of the "drastic Shapley value" to ontologies, in which we focus solely on the "TBox" and determine which "axioms" are most responsible for a given "TBox" entailment.
Formally, we consider a \AP""partitioned TBox"" $\T = \T_{\subendo}\dcup\T_{\subexo}$ with "endogenous" and "exogenous" elements like "partitioned databases" and a "concept inclusion" $A\ic B$ whose entailment we wish to explain. We can then define a "wealth function" $\dscorefun_{A\ic B,\T}$ whose "base set" is $\T_{\subendo}$ as follows: for every $S \inc \T_{\subendo}$, $\dscorefun_{A\ic B,\T}(S) \defeq v_{S} - v_{\subexo}$ where $v_{S} = 1$ (resp.\ $v_{\subexo} = 1$)  if $S\cup\T_{\subexo}\models A\ic B$ (resp.\ $\T_{\subexo}\models A\ic B$), and $0$ otherwise. The "wealth function" $\dscorefun_{A\ic B,\T}$ can then be fed into the "Shapley value" $\Sh$ to provide a quantitative responsibility measure for axiom entailment. 

\begin{example}\label{ex:tbox-shap}
Consider the "TBox" $\T$ depicted in the top half of \Cref{fig:ex-kb}, which we shall use as a running example.
It contains axioms about ingredients of recipes: for instance, the axiom $\alpha = \exists \mathsf{hasIngr}. \mathsf{FishBased} \ic \mathsf{FishBased}$ intuitively translates as `anything that has a fish-based ingredient is fish-based itself'.
A confused biologist might want to explain why this ontology entails the "concept inclusion" $\mathsf{Crustacean} \ic \mathsf{FishBased}$. She could decide to set the whole of $\T$ as "endogenous" since she might not have any reason to discriminate certain "axioms" from the others.
In this instance, the measure $\Sh_{\dscorefun_{\alpha%
,\T}}$%
assigns the same score of $\frac{1}{2}$ to the "axioms" $\mathsf{Crustacean}\ic \mathsf{Seafood}$ and $\mathsf{Seafood}\ic \mathsf{FishBased}$, and 0 to all the others.
\end{example}

\begin{figure}[tb]
	\small\centering
	\[\hspace{-1mm}\begin{array}{ll}
		\mathsf{FishBased}\sqcap\mathsf{MeatBased}\ic \mathsf{LandSea}&
			\mathsf{Fish}\ic \mathsf{FishBased}\\
		\exists \mathsf{hasIngr}. \mathsf{FishBased} \ic \mathsf{FishBased}&
			\mathsf{Seafood}\ic \mathsf{FishBased}\\
		\exists \mathsf{hasIngr}. \mathsf{MeatBased} \ic \mathsf{MeatBased}&
			\mathsf{Crustacean}\ic \mathsf{Seafood}\\
		\mathsf{hasSauce}\ic \mathsf{hasIngr}&
			\mathsf{Meat}\ic \mathsf{MeatBased}\\[.3em]
	  \hline
	  \multicolumn{2}{c}{\begin{tikzpicture}[]
	\coordinate (00) at (-2, 1);
	\coordinate (01) at (2, 1);
	\coordinate (02) at (-2, -0.5);
	\coordinate (03) at (2, -0.5);
	\coordinate (04) at (2, 1.55);
	\coordinate (05) at (2, 0.05);
	\begin{pgfonlayer}{nodelayer}
		\node [draw,minimum height=4ex,rounded corners] (0) at (00) {\textit{poulardeNantua}};
		\node [draw,minimum height=4ex,rounded corners] (1) at (01) {\textit{chicken}};
		\node [draw,minimum height=4ex,rounded corners] (2) at (02) {\textit{nantuaSauce}};
		\node [draw,minimum height=4ex,rounded corners] (3) at (03) {\textit{crayfish}};
		\node [] (4) at (04) {$\mathsf{Meat}$};
		\node [] (5) at (05) {$\mathsf{Crustacean}$};
	\end{pgfonlayer}
	\begin{pgfonlayer}{edgelayer}
		\draw [->, >=stealth] (0) to node[midway, above=-.1,sloped] {$\mathsf{hasIngr}$} node[midway,below,sloped] {$e_1$} (1);
		\draw [->, >=stealth] (0) to node[midway, left] {$\mathsf{hasSauce}$} node[midway,right] {$e_3$} (2);
		\draw [->, >=stealth] (0) to node[midway, above=-.1,sloped] {$\mathsf{hasIngr}$} node[midway,below,sloped] {$e_2$} (3.north west);
		\draw [->, >=stealth] (2) to node[midway, above=-.1,sloped] {$\mathsf{hasIngr}$} node[midway,below,sloped] {$e_4$} (3);
	\end{pgfonlayer}
\end{tikzpicture}}
	\end{array}\]
	\caption{An example "KB", with data and knowledge about a recipe from \protect\cite{escoffierGuideCulinaireAidememoire1903}. The arrows represent "role assertions" and labels on top of boxes (\eg\ {\sf Meat}) represent "concept assertions".}
	\label{fig:ex-kb}
\end{figure}

In %
\Cref{ex:tbox-shap}, the considered entailment 
$\mathsf{Crustacean} \ic \mathsf{FishBased}$ holds %
due to a %
chain of two "axioms" $\mathsf{Crustacean} \ic \mathsf{Seafood}$ and $\mathsf{Seafood}  \ic \mathsf{FishBased}$. As such a chain can obviously be made arbitrarily long, 
we can use axiom entailment to encode graph reachability, even if we work with the simplest possible "DL" $\AP\intro*\Lmin$ containing only "concept name" "inclusions@@concept".
In light of \Cref{prop:graph}, the computation of $\Sh_{\dscorefun_{A\ic B,\T}}$ will therefore be $\sP$-hard.

\begin{proposition}\label{prop:tbox}
The problem of computing, given a $\Lmin$ "partitioned TBox" $\T$, "concept names" $A,B$, and an "axiom" $\mu\in\T$, the value $\Sh_{\dscorefun_{A\ic B,\T}}(\mu)$ is $\sP$-hard, even if $\T$ is assumed to be purely "endogenous".
\end{proposition}
\begin{proof}
   To reduce from $\dnShapley_{\qreach}$, let $\D$ be an input database with only $r$ relations and two distinguished "constants" $s,t$. Consider the "TBox" $\T_{\D}\defeq \{A_c \ic A_d \mid r(c,d)\in \D\}$ (which consists of only "concept name" "inclusions@@concept") and the "concept inclusion" $A_s \ic A_t$. By construction, $t$ being reachable from $s$ is equivalent to $A_s \ic A_t$ being entailed,
hence the "wealth functions" $\dscorefun_{A_s\ic A_t,\T_{\D}}$ and $\dscorefun_{\qreach,{\D}}$ are isomorphic.
\end{proof}

\subsection{Query Entailment}
We next consider the application of the "drastic Shapley value" to explaining query entailment \wrt\ a "DL" KB. Formally, we extend "partitioned databases" into \AP""partitioned KBs"" $\K=(\A_{\subendo}\dcup\A_{\subexo},\T_{\subendo}\dcup\T_{\subexo})$ and the task is to compute the "drastic Shapley value" of the statements in $\A_{\subendo}\cup\T_{\subendo}$, in order to quantify their responsibility in $\K$ entailing a Boolean query $q$. %
Recall that the restriction to Boolean queries is w.l.o.g.\ since we can define the responsibility to obtaining the answer $\vec{a}$ to query $q(\vec{x})$ as the responsibility to the associated Boolean query $q(\vec{a})$.
From there, we define a "wealth function family" $(\dscorefun_{q,\K})_{q,\K}$ with "base set" $X_{\dscorefun_{q,\K}} \defeq \K_{\subendo}$ by setting, 
for every $S_a \inc \A_{\subendo}$ and $S_t \inc \T_{\subendo}$, $\dscorefun_{q,\K}(S_a \cup S_t) \defeq v_{S} - v_{\subexo}$, where $v_{S} = 1$ (resp.\ $v_{\subexo} = 1$) if $(S_a\dcup\A_{\subexo},S_t\dcup\T_{\subexo}) \models q$ (resp.\ $(\A_{\subexo},\T_{\subexo})\models q$), and 0 otherwise.
This measure can then be applied as showcased by \Cref{ex:shap}.

\begin{example}\label{ex:shap}
Now consider the full $\horndl$ "KB" defined in \Cref{fig:ex-kb},
which extends the "TBox" seen in \Cref{ex:tbox-shap} (top half) with an "ABox" (bottom half) that contains information on some ingredients and recipes.

A user of this "KB" might obtain \textit{poulardeNantua} as an answer to the query $\mathsf{LandSea}(x)$, and wonder which ingredients are the most responsible for this fact.
She can thus set everything but the "role assertions" (which specify ingredients) as "exogenous" and compute the "drastic Shapley values" for the "Boolean" query $\mathsf{LandSea}(poulardeNantua)$.
The modeling choice to set only "role assertions" as "endogenous" corresponds to considering the background knowledge provided by the "TBox" and the "concept assertions" as being external to responsibility attribution, since they are not part of recipes.\footnote{It is especially important in this scenario to exclude the "TBox" "axioms" because while the "axiom" $\mathsf{Meat}\ic \mathsf{MeatBased}$ is explicitly part of the KB, the "inclusion@@concept" $\mathsf{Crustacean}\ic \mathsf{FishBased}$ is only indirectly inferred. This difference would lower the scores of the fish-based ingredients relative to the meat-based ones if the "TBox" "axioms" were also set as  endogeneous. } 

We can compute the values via Eq.~\eqref{formul:sh1}. There are $4!=24$ possible orderings of the $4$ "endogenous" "role assertions" $\{e_1,e_2,e_3,e_4\}$. 
14 out of %
24 orderings are s.t.\ $\dscorefun_{q,\K}(\sigmaleq{e_1}) - \dscorefun_{q,\K}(\sigmaless{e_1})=1$, and similarly 6 for $e_2$, 2 for $e_3$, and 2 for $e_4$, making the respective "drastic Shapley values": \nicefrac{14}{24}, \nicefrac{6}{24}, \nicefrac{2}{24} and \nicefrac{2}{24}.

As expected, $e_1$ has the highest responsibility because it is necessary to satisfy the query, then comes $e_2$ that only needs to be combined with $e_1$ and finally $e_3$ and $e_4$ that must be used together in addition to $e_1$.
\end{example}

Of course this setting is almost a direct extension of the one considered in \Cref{ssec:axiomentail}, hence the drastic Shapley value computation will be $\sP$-hard, even for the simplest ontologies: 

\begin{proposition}\label{prop:kb1}
The problem of computing "drastic Shapley values" for Boolean "CQ"s %
over "partitioned KBs" on $\Lmin$ is $\sP$-hard. Hardness holds even 
for queries given as "ABox" "assertions" and "purely endogenous" KBs. 
\end{proposition}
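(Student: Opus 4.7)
The plan is to reduce from $\Shapleyn{\qreach}$, which is $\sP$-hard by \Cref{prop:graph}, along the same lines as the proof of \Cref{prop:tbox} but encoding the reachability witness at the query level rather than the TBox-entailment level. Given a purely endogenous graph database $G = (V, E)$ with distinguished vertices $s, t$ and a designated edge $\mu \in E$, I introduce a fresh concept name $A_v$ for each $v \in V$ and a fresh individual $c$, and build the partitioned KB $\K = (\A_\subendo \dcup \A_\subexo, \T_\subendo \dcup \T_\subexo)$ defined by
\[
\A_\subexo = \{A_s(c)\}, \quad \A_\subendo = \emptyset, \quad \T_\subexo = \emptyset, \quad \T_\subendo = \{A_x \ic A_y \mid (x,y) \in E\}.
\]
Take the query to be the atomic Boolean CQ $q = A_t(c)$, which is literally an ABox assertion, and designate as the player for Shapley value computation the axiom $\mu' = A_{x_0} \ic A_{y_0}$ corresponding to $\mu = (x_0, y_0)$. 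The construction is clearly polynomial, and the endogenous TBox lies in $\Lmin$.

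The core of the argument is that, via the bijection between edges and axioms, the two cooperative games are isomorphic. Concretely, for any $S \subseteq E$ corresponding to $S' \subseteq \T_\subendo$, an easy induction on the length of a derivation chain shows that $(\{A_s(c)\}, S') \models A_t(c)$ if and only if there is a directed path from $s$ to $t$ in $(V, S)$; this is precisely the condition that the $\qreach[st]$ query holds on the subgraph. Because the baseline $v_\subexo = 0$ on both sides (assuming WLOG $s \neq t$), the wealth functions coincide pointwise, and hence the Shapley value of $\mu'$ in the KB game equals the Shapley value of $\mu$ in the reachability game. \Cref{prop:graph} then yields the desired $\sP$-hardness.

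I do not anticipate a genuine obstacle here: the content of the proof is the bijection/isomorphism argument, and the only point deserving care is making sure that the single exogenous assertion $A_s(c)$ (which anchors the source vertex but is not itself a player) does not interfere with the bijection between coalitions and subgraphs—this is immediate since $A_s(c)$ is always present on both sides of the wealth-function comparison. Since $q$ is an atomic assertion, the stronger form of the statement holds simultaneously.
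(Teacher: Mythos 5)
Your reduction is exactly the one the paper uses: the same TBox $\{A_x \ic A_y \mid (x,y)\in E\}$ set as endogenous, the same exogenous ABox $\{A_s(c)\}$, the same atomic query $A_t(c)$, and the same game-isomorphism argument transferring \Cref{prop:graph}. The proposal is correct; your extra remarks about the baseline $v_\subexo=0$ and the derivation-length induction just make explicit what the paper leaves implicit.
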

\begin{proof}
   We build a "TBox" to reduce from $\dnShapley_{\qreach}$:
   let $\D$ be an input database with only $r$ relations and two distinguished "constants" $s,t$, from which we define $\T_{\D}\defeq \{A_c \ic A_d \mid r(c,d)\in \D\}$, $\A\defeq \{A_s(o)\}$ and the "ABox" "assertion" query $q \defeq A_t(o)$, where $o$ is a "constant".
If we had chosen to accept "exogenous" elements, we could have set the "ABox" as "exogenous" and the "TBox" as "endogenous": in that scenario a subset $\X\inc\T_{\D}$ is such that $(\A,\X)\models q$ iff $\X\models A_s\ic A_t$ iff $\X$ defines a subset of ${\D}$ which admits a path from $s$ to $t$. The "wealth function" $\dscorefun_{q,(\T_{\D},\A)}$ %
is therefore isomorphic to $\dscorefun_{\qreach,\D}$. %

   Since we actually also need to make the "ABox" "endogenous", we will additionally need to have the assertion $A_s(c)$ in $\X$ for $q$ to be entailed. The "wealth function" will be slightly different to the one for $\dnShapley_{\qreach}$:
   a subset $\X\inc\A\cup\T_{\D}$ is such that $\X\models q$ iff $\A\subseteq \X$ and $\X\models A_s\ic A_t$ iff $\A\subseteq \X$ and $\X\cap \T_{\D}$ defines a subset of ${\D}$ which admits a path from $s$ to $t$.
   To conclude, we use \Cref{lem:reachmodified} stated thereafter: the corresponding "wealth function" %
   for $(\T_{\D},\A)$ is isomorphic to  $\reachmodscorefun$ with $|N|=1$ on $\D\uplus N$, by mapping the single element of $\A$ to the single element of $N$ and every "axiom" $A_c \ic A_d$ in $\T_{\D}$ to the corresponding $r(c,d)\in \D$.
\end{proof}

\begin{toappendix}
\label{app:reach}
\end{toappendix}
\begin{lemmarep}\label{lem:reachmodified}
   Consider the "base set" $X \defeq E\uplus N$ consisting of the edges of a directed graph $G = (V,E)$ and a (non-empty) 
   finite set $N$ of `necessary elements'. Define the "wealth function" $\intro*\reachmodscorefun$ on every $S\inc X$ by $\reachmodscorefun(S) = 1$ if $N\subseteq S$ and the graph $(V,E\cap S)$ contains a path between $s$ and $t$, which are two distinguished vertices in $V$. Then the problem of computing the Shapley value of $\reachmodscorefun$ for any graph $G$ and distinguished pair $(s,t)$ is $\sP$-hard.
\end{lemmarep}
\begin{proofsketch}
The proof is provided in \ref{app:reach},
but only for the sake of completeness since it is almost identical to the one of \Cref{prop:graph}.
The only major difference is that the final matrix will have general terms $(i+j+|N|)!$ instead of $(i+j)!$. This poses no issue because \cite[proof of Theorem 1.1]{bacherDeterminantsMatricesRelated2002} proves that any matrix of general term $(i+j+k)!$ with $k$ a non-negative integer constant is invertible. 
\end{proofsketch}
\begin{proof}
The proof is by reduction from the $\numSTConn$ task, which is known to be $\sP$-hard  \cite[Problem~11]{valiantComplexityEnumerationReliability1979}.  Let $G=(V,E)$ a directed graph. 
Let $m=|E|$ and build for every $i\in [m]$ the graph $G_i = (V_i,E_i)$ by adding a set of edges $C_i$ forming a path $s_1 \to s_2 \to \dots \to s_i \to s$ and an extra edge $\mu$ from $s_1$ to $t$ whose Shapley value we will measure, for the reachability from $s_1$ to $t$ (see \Cref{fig:Gi} on page \pageref{fig:Gi}), with the added set $N$ of necessary elements. Recall \Cref{formul:sh}:
\begin{align*}
\Sh(E_i,\scorefun,\mu) = \sum_{\X\inc N\cup E_i\setminus\{\mu\}} \gamma_\X \cdot
(\scorefun(\X\cup\{\mu\}) - \scorefun(\X))
\end{align*}
where $\gamma_\X\defeq \frac{|\X|!(|E_i| - |\X| - 1)!}{|E_i|!}$.

Here by definition $\scorefun(\X)=1$ if $N\inc \X$ and there exists a path from $s_1$ to $t$ in $(V_i,\X)$ and $0$ otherwise. Therefore $\scorefun(\X\cup\{\mu\}) - \scorefun(\X) = 0$ in the followin two mutually exclusive cases:
\begin{enumerate}[(i)]
   \item\label{cas:reachmod1} $\X$ does not contain all necessary elements, \ie\ $N\not\inc\X$;
   \item\label{cas:reachmod2} $N\inc \X$ and $\X$ already connects $s_1$ to $t$, \ie\ if $\X\cap E$ connects $s$ to $t$ (denoted $s\xrightarrow{\X\cap E}t$) and the other elements of $\X$ exactly form the set $C_i\cup N$.
\end{enumerate}

In all other cases, $\scorefun(\X\cup\{\mu\}) - \scorefun(\X) = 1$ since $N\cup\{\mu\}$ suffices to fulfil the connectedness condition. Hence, we can isolate on the right-hand side the contribution of all terms such that $\scorefun(\X\cup\{\mu\}) - \scorefun(\X) = 0$:

\begin{equation}\label{eq:reachmod}
   \Sh(E_i,\scorefun,\mu) -\sum_{\X\inc E_i\setminus\{\mu\}} \gamma_\X = - \sum_{\substack{\X\inc N \cup E_i\setminus\{\mu\}\\ N\not\inc\X}} \gamma_\X - \sum_{\substack{\X\inc N \cup E_i\setminus\{\mu\}\\\X\setminus E = C_i \cup N\\s\xrightarrow{\X\cap E}t}} \gamma_\X
\end{equation}
The left-hand side of \Cref{eq:reachmod} can be simplified as $\Sh(E_i,\scorefun,\mu) -1$. The first sum of the right-hand side, which we denote by $\Sigma_i$, represents the case \ref{cas:reachmod1},  and can be expressed as follows:

\begin{align*}
 \Sigma_i \defeq \sum_{\substack{\X\inc N \cup E_i\setminus\{\mu\}\\ N\not\inc\X}} \gamma_\X
   &= \sum_{j=0}^{|N|+|E_i|-1} \sum_{\substack{\X\inc N \cup E_i\setminus\{\mu\}\\ N\not\inc\X \\ |\X| = j}}\frac{(i+j+|N|)!(m-j)!}{(m+i+|N|+1)!}
\end{align*}

To compute the above formula, it therefore suffices to count all subsets of $N \cup E_i$ of size $j$ that do not contain the whole of $N$. That is, the $\binom{|N|+m}{j}$ total subsets of size $j$ minus the $\binom{m}{j-|N|}$ that contain the whole of $N$. This sum $\Sigma_i$ is therefore computable in polynomial time.\medskip

Next observe that the second sum of the right-hand side on \Cref{eq:reachmod} (which represents the case \ref{cas:reachmod2}) can be expressed as follows in terms of the numbers $cs_j$ of subgraphs of $G$ of size $j$ that connect $s$ to $t$, by grouping the terms in the sum by $|\X\cap E|=j$.
\[
\Sh(E_i,\scorefun,\mu) -1 +\Sigma_i = -\sum_{j=0}^{m} \frac{(i+j+|N|)!(m-j)!}{(m+i+|N|+1)!}cs_j
\]

In other words, we obtain a linear system of equations $A \cdot \mathbf{x}=\mathbf{y}$, where 
$\mathbf{y}$ is the $m$-vector of the different values of $\Sh(E_i,\scorefun,\mu) -1+\Sigma_i$ and $\mathbf{x}$ is the $m$-vector of variables corresponding to the values $cs_j$. If the system can be solved, we obtain ---with an oracle for computing $\Sh(\cdot,\reachmodscorefun,\cdot)$ plus polynomial-time arithmetic computations--- the values of all the $cs_j$ values, whose sum is precisely the solution to the $\numSTConn$ instance we wish to solve.

Finally, we can show that the matrix $A$ is invertible by studying its determinant: multiplying every row by $(m+i+|N|+1)!$ and dividing every column by $(m-j)!$ reduces $A$ to the matrix of general term $(i+j+|N|)!$ where $|N|$ is a constant integer, which is known to be invertible  \cite[proof of Theorem 1.1]{bacherDeterminantsMatricesRelated2002}.
\end{proof}

\subsection{Exogenous Ontologies}

Since \Cref{prop:tbox,prop:kb1} show that computing "drastic Shapley values" of "axioms" is inevitably intractable,
a natural restriction to consider in search of a tractable fragment consists in treating all "TBox" "axioms" as "exogenous", like in \Cref{ex:shap}.
Looking back at this example, the entailment of $\mathsf{FishBased}(\textit{poulardeNantua})$, which eventually leads to the entailment of the desired $\mathsf{LandSea}(\textit{poulardeNantua})$, requires one of two chains of ingredients: $\textit{poulardeNantua} \xrightarrow{\mathsf{hasIngr}} \textit{crayfish}$ or $\textit{poulardeNantua} \xrightarrow{\mathsf{hasSauce}} \textit{nantuaSauce} \xrightarrow{\mathsf{hasIngr}} \textit{crayfish}$. Intuitively, we find reachability again, this time expressed within the data itself.
This behaviour will naturally make  "drastic Shapley value" computation $\sP$-hard, and it is caused by the "axiom" $\exists \mathsf{hasIngr}. \mathsf{FishBased} \ic \mathsf{FishBased}$, 
which can be found in any "DL" %
at least as expressive as $\EL$.

Recall that, according to our definitions, $(\L,\IQ)$ where $\L$ is a "DL" denotes the class of all "OMQ"s of the form $(\T,q)$, where $\T$ in an $\L$ "TBox" %
and $q$ is a "CQ" of the form $A(c)$, for $A\in\cnames$ and $c\in\inames$. In particular, $(\L,\IQ)$ is a class of "queries" and $\dShapley_{(\L,\IQ)}$ denotes the problem of computing "drastic Shapley values" for any such "OMQ" over any "partitioned ABox" ($\T$ is implicitly treated as purely "exogenous").

\begin{proposition}\label{prop:kb2}
   $\dnShapley_{(\EL,\IQ)}$ is $\sP$-hard, even when restricted to ontologies with only one axiom.
\end{proposition}
\begin{proof}
   Let $\D$ be an input database with only $r$ relations and two distinguished "constants" $s,t$, from which we define the "ABox" $\A\defeq\{A(c_t)\}\cup\{r(c_x,c_y)\mid (x,y)\in E\}$.
   Then define the "instance query" $q \defeq A(c_s)$ and %
   $\EL$ "TBox" $\T\defeq\{\exists r. A \ic A\}$.
   A subset $\X\inc\A$ is such that $(\X,\T)\models q$ iff it contains the fact $A(c_t)$ and the rest defines a subset of $\D$ where there is a path from $s$ to $t$.
   If we take $N$ to be a singleton that corresponds to the "fact" $A(c_t)$, the resulting "wealth function" will be precisely isomorphic to $\reachmodscorefun$ from \Cref{lem:reachmodified} on $\D\uplus N$, which yields the desired hardness result.
\end{proof}

In fact, we can further establish $\sP$-hardness for any DL that allows for conjunction of concept names, which notably includes $\dlliteh$. This requires adopting a different proof strategy (not relying upon reachability) and giving up the restriction to bounded-size TBoxes. %

\begin{proposition}\label{prop:kb3}
   Let $\L_{\sqcap}$ be the "DL" in which TBox axioms take the form of concept inclusions built solely from concept names and conjunction ($\sqcap$). 
   Then $\dnShapley_{(\L_{\sqcap},\IQ)}$ is $\sP$-hard.\end{proposition}
\begin{proof}
  We reduce from the %
  problem of counting the number of vertex covers (\ie\ sets of vertices that touch all edges) in a bipartite graph. 
  This problem is known to be $\sP$-hard \cite[Problem 1]{provanComplexityCountingCuts1983}.
Given a (bipartite) graph $G=(V,E)$, we define the instance:
\begin{align*}
   \T_{G} \defeq   &
   \{A_u  \ic B_{(u,v)}, A_v  \ic B_{(u,v)} \mid (u,v) \in E\} \cup \{\bigsqcap_{e\in E} B_e \ic C\} \\
   \A_G \defeq & \left\{A_u(c) \mid u\in V\right\} \\
   q_G \defeq & \, C(c)
\end{align*}
It is easy to see that a subset  $\X \inc \A_G$ satisfies the "OMQ" $(\T_G,q_G)$ iff $\{u \mid A_u(c) \in \X\}$ is a vertex cover of $G$.
From there we can open the playbook of the proof of \Cref{prop:graph} and build variants $(\A_i,\T_i)$ of $(\A_G,\T_G)$ by replacing the axiom $\sqcap_{e\in E} B_e \ic C$ with $\bigsqcap_{e\in E} B_e \sqcap \bigsqcap_{k=1}^i D_k \ic C$ and adding to $\A_G$ the assertions $D_k(c)$ for every $k\in [i]$ and $\mu \defeq C(c)$. These modifications have the same effect as the ones used for \Cref{prop:graph} and depicted in \Cref{fig:Gi}
in the following sense: a subset $\X \inc \A_i$ satisfies $(\T_i,q_i)$ if $\mu \in \X$ or if $\X = \{D_k(c) \mid k\in [i]\}\cup \X'$ where $\X'$ is a satisfying subset of the original construction. We therefore obtain the desired reduction using the exact same reasoning. 
\end{proof}

Interestingly, we can show that the Shapley value computation problem $\dShapley_{(\L,\IQ)}$ %
 is tractable 
if we consider prominent DL-Lite dialects which do not allow for concept conjunction. 
We shall phrase the next result for $\dlliter$, but it holds equally well for other DL-Lite dialects which enjoy tractable instance checking and which satisfy the following \emph{singleton support property}: every minimal support of an "IQ" consists of a single assertion. These conditions are satisfied in particular by
$\mathsf{DL\text{-}Lite}_\mathcal{F}$ and $\mathsf{DL\text{-}Lite}_\mathcal{A}$ \cite{DBLP:journals/jar/CalvaneseGLLR07}, as is apparent when examining the shape of the rewritings of "IQ"s in these logics. %

\begin{proposition}\label{prop:dllite-atomic}
   $\dShapley_{(\dlliter,\IQ)} \in \FP$
\end{proposition}
\begin{proof}%
Suppose %
that our input is the "IQ" $q=A(c)$, $\dlliter$ TBox $\T$, and ABox $\A$ partitioned into $\A_{\subendo}\dcup\A_{\subexo}$. 
We consider each assertion $\alpha \in \A$ in turn, and we test whether $\{\alpha\} \models (\T,A(c))$, 
letting $\{\alpha_1, \ldots, \alpha_m\} \subseteq \A$ be the set of all assertions for which the test succeeds. 
Due to the singleton support property of \dlliter, 
we know that for every $\A' \subseteq \A$, we have $\A' \models (\T,A(c))$ iff $\{\alpha_1, \ldots, \alpha_m\} \cap \A' \neq \emptyset$ iff $\A' \models \alpha_1 \vee \dotsb \vee  \alpha_m$. 

We can thus completely disregard $\T$ (since it is exogenous) and 
compute the drastic Shapley values for $q^*=\alpha_1 \vee \dotsb \vee  \alpha_m$ and the partitioned ABox $\A_{\subendo}\dcup\A_{\subexo}$. 
We observe that if $\A_{\subexo}$ contains some $\alpha_i$, then all assertions in $\A_{\subendo}$ have value zero. Otherwise, only the assertions $\alpha_i$ will have a non-zero drastic Shapley value, and their values can be easily computed using Eq.~\eqref{formul:sh} and the observation that there are precisely ${|\A_{\subendo}|-m \choose k}$ subsets $S\subseteq \A_{\subendo} \setminus  \{\alpha_i\}$ of size $k$ such that $\scorefun_{q^*}(S \cup \{\alpha_i\}) - \scorefun_{q^*}(S)=1$. 
\end{proof}

Determining for which classes of \emph{non-atomic} "CQ"s the previous proposition holds w.r.t.\ DL-Lite ontologies is challenging, 
as it would require us to first establish a full complexity characterization for plain "CQ"s (without an ontology, on a binary signature), which remains an open question. 

\subsection{Approximation and Relevance}
\label{ssex:approximation}

In view of the hardness results of \Cref{prop:tbox,prop:kb1,prop:kb2,prop:kb3}, an alternative would be to give up on the precise "drastic Shapley value" and instead find an approximation, or at the very least distinguish between elements having a zero or non-zero "drastic Shapley value", because, as discussed at the end of \Cref{ssec:drshap}, this would mean identifying the "relevant" facts.

Previous work on approximating "drastic Shapley values" in the context of "databases" \cite{livshitsShapleyValueTuples2021,khalilComplexityShapleyValue2023} considered so-called Fully Polynomial Randomized Approximation Schemes (FPRAS), and more concretely the following two variants.
\AP An ""additive FPRAS"" for a function $f$ is a randomized algorithm $A(x,\eps,\delta)$ that takes an instance $x$ for $f$ as well as two parameters $\eps,\delta \in (0;1)$, whose runtime is polynomial in $x, \nicefrac{1}{\eps}, \log(\nicefrac{1}{\delta})$, and such that:
\[\Prob\left[  f(x)-\eps \le A(x,\eps,\delta) \le f(x) + \eps  \right] \ge 1-\delta\]
\AP A ""multiplicative FPRAS"" (often known as simply `FPRAS') is of the same form but must instead satisfy:
\[\Prob\left[  \frac{f(x)}{1+\eps} \le A(x,\eps,\delta) \le (1+\eps)f(x) \right] \ge 1-\delta\]

The former variant is fairly easy to obtain, as the following shows.

\begin{proposition}[Implicitly found in \protect{\cite[§4.3]{livshitsShapleyValueTuples2021}}]\label{prop:addfpras}
   Let $\scorefun$ be a "Boolean" and "monotone" "wealth function" that can be computed in polynomial time. Then there exists an "additive FPRAS" for $\Sh_{\scorefun}$.
\end{proposition}
\begin{proof}
   We approximate Eq.~\eqref{formul:sh1} by the average value of $\scorefun(\sigmaleq{\alpha}) - \scorefun(\sigmaless{\alpha})$ over $\bigO(\frac{\log(\nicefrac{1}{\delta})}{\eps^2})$ orderings $\sigma$ drawn uniformly at random.
   The result is an "additive FPRAS" by the Chernoff-Hoeffding bound \cite[Theorem 1 (2.3)]{hoeffdingProbabilityInequalitiesSums1963}.
\end{proof}

However, "additive FPRASes" have the key weakness of not allowing us to determine which facts are relevant, which has been argued to be the most fundamental requirement for a responsibility measure \cite[(Null-db) axiom]{ourpods25}. For this reason, "multiplicative FPRASes" are far more interesting, as they do decide relevance. Sadly, in the case of $\dnShapley_{\qreach}$ it has been shown in \cite[Theorem 5.1]{khalilComplexityShapleyValue2023} that no "multiplicative FPRAS" can be found unless\footnote{The conference proceedings version \cite{BienvenuFL24} of the present article says `$\mathsf{BPP} \inc \NP$' ---it is a typo.} $\NP\inc\mathsf{BPP}$, because merely deciding "relevance" is $\NP$-hard.

\begin{proposition}\label{prop:hardapprox}
   The relevance problems (that is, deciding if the "drastic Shapley value" is zero or not instead of giving the precise value) associated with the problems of \Cref{prop:graph,prop:tbox,prop:kb1,prop:kb2}
   are $\NP$-hard. As a consequence, no "multiplicative FPRAS" can be found unless $\NP\inc\mathsf{BPP}$.
   \end{proposition}
\begin{proof}
   Since all hardness proofs are derived from the hardness of $\dnShapley_{\qreach}$, the elements that represent graph edges are relevant iff their associated edge is part of a simple path from $s$ to $t$. The latter problem of deciding if a given edge lies on a simple path from $s$ to $t$  is known to be $\NP$-complete by a standard reduction from the $\NP$-complete subgraph homeomorphism problem \cite[Theorem 2]{fortuneDirectedSubgraphHomeomorphism1980}.%
   \footnote{It can also be seen as a direct consequence of the fact that checking whether there is a simple path with label in $r^*$ between two vertices of a "graph database" is $\NP$-hard by \cite[Theorem 2]{DBLP:journals/siamcomp/MendelzonW95} or, equivalently, whether the "RPQ" $\qreach$ holds in a "graph database" under the so-called `simple-path semantics'.}
\end{proof}

\section{A Dichotomy for OMQs in $\mathcal{ELHI_{\bot}}$}\label{sec:rw}
\label{sec:dichotomy-OMQ}
The results of \Cref{sec:reach} show that allowing "TBoxes" to be part of the input makes the "drastic Shapley value computation" problems $\sP$-hard even when restricted to   ontology languages of limited expressivity and with all TBox axioms set as exogenous. %
This suggests the interest of %
analyzing the complexity of "drastic Shapley value computation" 
at the level of individual ontology-mediated queries, with only the data ("ABox") as input.
Such a non-uniform approach to complexity analysis has previously been undertaken for several OMQA settings \cite{DBLP:conf/aaai/HernichLW17,DBLP:journals/lmcs/LutzSW19,DBLP:journals/ai/LutzS22}, 
in particular in the context of probabilistic OMQA \cite{jungOntologyBasedAccessProbabilistic2012}. 
Moreover, this perspective aligns nicely with the formulation of "drastic Shapley value computation" for database queries
and shall allow us to transfer results from the database setting. %

In the present%
, we will be interested in applying the existing dichotomy result for database queries (recalled in Theorem \ref{thm:homclosedconn-pods-dichotomy}) in a ``black-box'' fashion to obtain results for OMQs. To do so, we need  to
 identify a class of "OMQs" that is "hom-closed" and "connected@@q". 
This is the purpose of the following lemma, which shows that the addition of an $\horndl$ ontology preserves the "connectedness@@q" and "hom-closed" properties of queries.

\AP

\begin{lemma}\label{lem:cnx}
	Let $q$ be a "connected@@q" $\aC$-"hom-closed" query and $\T$ an $\horndl$ "ontology". Then the "OMQ" $Q\defeq(\T,q)$ is a "connected@@q" $\aC$-"hom-closed" query.
\end{lemma}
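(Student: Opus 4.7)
The plan is to view $Q$ as a query (with $\A \models Q$ iff $(\A,\T) \models q$) and to verify separately that $Q$ is $\aC$-"hom-closed" and that every "minimal support" of $Q$ is connected.

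For $\aC$-hom-closure, suppose $\A \models Q$ and $h : \A \homto[C-] \A'$ is a "$C$-homomorphism". The main case is when both $\A$ and $\A'$ are "$\T$-consistent", in which case $\I_{\A,\T} \models q$ by the standard "canonical model" characterization. It then suffices to lift $h$ to a "$C$-homomorphism" $\I_{\A,\T} \homto[C-] \I_{\A',\T}$, which can be done by induction on the $\horndl$ canonical model construction: each step that grafts a fresh witness above an existing element of $\I_{\A,\T}$ to satisfy an existential axiom is mirrored above the $h$-image of that element in $\I_{\A',\T}$. Since $q$ is $\aC$-"hom-closed", $\I_{\A,\T} \models q$ transfers to $\I_{\A',\T} \models q$, yielding $\A' \models Q$. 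If $\A'$ is "inconsistent" with $\T$, then $\A' \models Q$ trivially. If instead $\A$ is "inconsistent" with $\T$, one appeals to the fact that in Horn DLs any derivation of $\bot$ from $\A$ using $\T$ remains valid after replacing each used "assertion" by its $h$-image in $\A'$, so $\A'$ is "inconsistent" with $\T$ as well.

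For connectedness of $Q$, fix a "minimal support" $\A$ of $Q$. Assume first $\A$ is "$\T$-consistent", so $\I_{\A,\T} \models q$. Since $q$ is "hom-closed", some "minimal support" $\S$ of $q$ admits a "$C$-homomorphism" $g : \S \to \I_{\A,\T}$. Since $q$ is connected, $\S$ is connected, so $g(\S)$ is a connected subset of $\I_{\A,\T}$. The key observation is then that if $\A = \A_1 \dcup \cdots \dcup \A_k$ is the decomposition of $\A$ into its "connected components@@db", then $\I_{\A,\T}$ is the disjoint union $\I_{\A_1,\T} \dcup \cdots \dcup \I_{\A_k,\T}$, because the canonical construction only attaches fresh tree-shaped structure above existing elements and never bridges distinct components. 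Thus $g(\S) \subseteq \I_{\A_j,\T}$ for some $j$, which gives $\A_j \models Q$, and minimality of $\A$ forces $\A = \A_j$, so $\A$ is connected. When $\A$ is instead "inconsistent" with $\T$, any minimal such "ABox" in $\horndl$ witnesses $\bot$ via a derivation of $C(c)$ for some concept inclusion $C \sqsubseteq \bot \in \T$ and "constant" $c$; unfolding such a derivation yields a connected tree-shaped set of "assertions" rooted at $c$, so $\A$ is again connected.

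The main obstacle will be formalizing the two ``functoriality'' properties of the $\horndl$ canonical model used above: its compatibility with $\aC$-"homomorphisms", and its decomposition along "connected components@@db". Both are essentially folklore but require some care in handling unnamed elements (to ensure the lifted homomorphism is well-defined on the fresh witnesses introduced by the canonical construction) and inverse roles (to verify that no unexpected identifications or cross-component structure arise). The inconsistency case likewise needs a brief justification that Horn inconsistency propagates under "homomorphisms" and that its minimal witnesses are connected.
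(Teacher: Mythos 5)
Your proposal is correct and follows essentially the same route as the paper: both reduce to the canonical model $\I_{\A,\T}$, use its compatibility with $\aC$-"homomorphisms" for hom-closure and its decomposition into the canonical models of the "connected components@@db" of $\A$ for connectedness, and treat the "inconsistent" cases separately. The only cosmetic difference is that the paper packages the inconsistency reasoning into an auxiliary query $q_\bot$ whose hom-closure and connectedness it reads off from known Datalog "rewritings" of "ABox" inconsistency, whereas you sketch the same facts directly via Horn derivations.
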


\begin{proof} 
Consider $q$ and $\T$ as in the lemma statement. To show that the OMQ $Q\defeq(\T,q)$ is "connected@@q", 
we need to show that every "minimal support" for $Q$ is "connected@@q".
Let us thus take some "minimal support" $\A$ of $Q=(\T,q)$. 

We first consider the case where $\A$ is "$\T$-consistent". 
The "knowledge base" $(\A,\T)$ thus 
admits a "canonical model" $\I_{\A,\T}$ relative to $\T$, constructed in the manner detailed in Section \ref{canmoddef}.  %
Due to the properties of the canonical model, recalled in Theorem \ref{canmodprops}, and the fact that $(\A,\T)\models q$ (since $\A$ is a "minimal support" of $Q$)
and $q$ is $\aC$-"hom-closed", it follows that $\I_{\A,\T}\models q$. 
In particular,  this means that the associated extended database
$\D_{\I_{\A,\T}}$ must contain a "minimal support" $S$  of $q$. Moreover, 
since $q$ is "connected@@q",  $S$ must be contained in a single "connected component@@q" of $\D_{\I_{\A,\T}}$. 
However, it can be easily seen from the definition of $\I_{\A,\T}$
that every "connected component@@db" of $\D_{\I_{\A,\T}}$ has the form $\D_{\I_{\A^*,\T}}$ for some "connected component@@db" $\A^*$ of $\A$. 
It follows that $S \inc \D_{\I_{\A^*,\T}}$ for some "connected component@@db" $\A^*$ of $\A$. 
Since $\D_{\I_{\A^*,\T}}$ contains $S$, and $S$ is a "minimal support" for $q$, we have $\I_{\A^*,\T}\models q$. Thus, $(\A^*,\T)\models q$, or equivalently, $\A^*\omqsat Q$. 
As $\A$ was assumed to be a "minimal support" of $Q$, we necessarily have $\A^* = \A$ and thus $\A$ is "connected@@db".

Now suppose instead that the considered "minimal support" $\A$ of $Q=(\T,q)$
is not "$\T$-consistent", i.e.\ $\A$ is a minimal $\T$-inconsistent subset. 
Without loss of generality, we may assume that 
$\T$ is in "$\bot$-normal form". 
Now let $\T'$
be the TBox obtained from $\T$ by replacing each $\bot$ by $A_\bot$, where
$A_\bot$ is a fresh concept name. By Lemma \ref{incons-bot},
 we know that for any ABox $\B$ that 
doesn't mention $A_\bot$, the following are equivalent: (i) $(\B,\T)$ is inconsistent,
and (ii) $(\B, \T') \models \exists x. A_\bot(x)$. It follows that $\A$ must also be a 
"minimal support" of the OMQ $(\T', \exists x. A_\bot(x))$. Moreover, $\A$ is 
$\T'$-consistent since $\T'$ does not contain $\bot$, and the atomic CQ $\exists x. A_\bot(x)$ is trivially "connected@@q". 
Thus, we can apply the same reasoning as in the preceding paragraph to show that $\A$ is "connected@@q".

It remains to show that $Q$ is $\aC$-"hom-closed". To this end, consider "ABoxes" $\A, \B$ such that 
$\A \models Q$ and $\A \homto[C-] \B$, and suppose for a contradiction that $\B \not \models Q$. 
Note that this means in particular that $\B$ is $\T$-consistent. 
First suppose that $\A$ is also $\T$-consistent. 
Then by Lemma \ref{aboxhomcanmod},
$\A \homto[C-] \B$ implies $\D_{\I_{\A,\T}} \homto[C-] \D_{\I_{\B,\T}}$. 
Since $\A \models Q$, we have $\D_{\I_{\A,\T}}\models q$. 
As $q$ is $\aC$-"hom-closed", 
this yields $\D_{\I_{\B,\T}}\models q$, hence $\B \models Q$ as required. 
Now suppose instead that $\A$ is $\T$-inconsistent, and 
let $\T'$ be defined as in the previous paragraph (replacing $\bot$ by fresh concept $A_\bot$). 
Then by Lemma \ref{incons-bot}, 
$(\A, \T') \models \exists x. A_\bot(x)$, so $\D_{\I_{\A,\T'}} \models \exists x. A_\bot(x)$. 
Again applying Lemma \ref{aboxhomcanmod}, we can use $\A \homto[C-] \B$
to infer $\D_{\I_{\A,\T'}} \homto[C-] \D_{\I_{\B,\T'}}$. It follows that $\I_{\B,\T'} \models \exists x. A_\bot(x)$,
so $(\B,\T') \models \exists x. A_\bot(x)$. With a further application of Lemma \ref{incons-bot},
we can deduce that $\B$ is $\T$-inconsistent, contrary to our assumption. 
This concludes our proof that $Q$ is $\aC$-"hom-closed". 
\end{proof}

With \Cref{lem:cnx} at hand, we can now directly apply \Cref{thm:homclosedconn-pods-dichotomy} (originally shown in \cite{ourpods24})
to the class of
 "OMQ"s whose base query is "constant-free" and "connected@@q" and whose ontology is expressed in $\horndl$. This yields an $\FP$/$\sP$-hard complexity dichotomy for the drastic Shapley value computation 
 task via the tight correspondence with probabilistic query evaluation.

\begin{theorem}\label{th:cnx-hom-closed}
   For every "connected@@q" ("constant-free") "hom-closed" query $q$ and $\horndl$ ontology $\T$, we have $\dShapley_{(\T,q)} \polyeq \SPPQE_{(\T,q)}$. Further, the problem is in $\FP$  if the "OMQ" ($\T,q$) can be "rewritten" into a "safe@@q" "UCQ" and %
	$\sP$-hard otherwise.
\end{theorem}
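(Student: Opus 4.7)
The plan is essentially a two-step black-box application: \Cref{lem:cnx} lifts the "connectedness@@q" and "hom-closure" from $q$ to the "OMQ" $Q \defeq (\T,q)$, after which \Cref{thm:homclosedconn-pods-dichotomy} takes over and yields both the polynomial-time interreducibility with $\SPPQE$ and the $\FP$/$\sP$-hard dichotomy.

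First, I would view $Q$ as a "Boolean" query on "ABoxes" via $\A \models Q$ iff $(\A,\T) \models q$. Since $q$ is "constant-free", it is $\emptyset$-"hom-closed", so \Cref{lem:cnx} applies with $\aC = \emptyset$ and yields that $Q$ is "connected@@q" and "hom-closed". Because every "ABox" only uses unary and binary relation symbols (\ie "concept names" and "role names"), "ABoxes" fall under the notion of "graph databases", so the hypotheses of \Cref{thm:homclosedconn-pods-dichotomy} are met when $Q$ is regarded as a Boolean query over "graph databases".

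Next, the first half of \Cref{thm:homclosedconn-pods-dichotomy} applied to $Q$ gives directly $\Shapley{Q} \polyeq \SPPQE[Q]$. The second half gives the dichotomy: $\Shapley{Q}$ is in $\FP$ when $Q$ is equivalent (on "graph databases", \ie on "ABoxes") to a "safe" "UCQ", and $\sP$-hard otherwise. To match the phrasing of the statement, it only remains to observe that equivalence of $Q$ to a "safe" "UCQ" on "ABoxes" coincides exactly with "rewritability" of $(\T,q)$ into a "safe" "UCQ": by definition, a query $q^*$ is a "rewriting" of $(\T,q)$ precisely when $\A \models (\T,q) \iff \A \models q^*$ for every "ABox" $\A$, which is the same as saying that $q^*$ is equivalent to $Q$ as a Boolean query on the class of "ABoxes".

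The proof is thus a short combination of the previous lemma and the cited database dichotomy; I do not anticipate any real obstacle. The only point that would warrant a line of justification is the alignment of "rewriting" (as defined in \Cref{sec:prelims}) with "query equivalence on ABoxes", together with the routine observation that "ABoxes" are "graph databases" so that \Cref{thm:homclosedconn-pods-dichotomy} applies unchanged.
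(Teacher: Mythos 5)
Your proposal is correct and matches the paper's own proof exactly: the paper likewise applies \Cref{lem:cnx} to conclude that $(\T,q)$ is a "connected@@q" ($\emptyset$-)"hom-closed" query and then invokes \Cref{thm:homclosedconn-pods-dichotomy} as a black box. The extra remarks you add (ABoxes are "graph databases", and "rewriting" of an OMQ coincides with query equivalence over ABoxes) are precisely the implicit glue the paper relies on.
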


\begin{proof}
	For OMQs $(\T,q)$ from the considered class, %
	\Cref{lem:cnx} states that $(\T,q)$ is a "connected@@q" ($\emptyset$-)"hom-closed" query,  hence \Cref{thm:homclosedconn-pods-dichotomy} gives the desired results.
\end{proof}

While the preceding dichotomy result shows that $\dShapley_{(\T,q)}$ is either in $\FP$ or $\sP$-hard, 
one naturally would like to be able to check which complexity applies to a given OMQ. 
We can show that this is possible for some common classes of OMQs. Indeed, 
it follows from results on probabilistic query evaluation that it is decidable  %
whether or not a given "UCQ" is "safe@@q"
 \cite[implicit]{dalviDichotomyProbabilisticInference2012}.
Consequently,
the dichotomy given by \Cref{th:cnx-hom-closed} is effective whenever "first-order rewritability" is decidable for the considered class of "OMQ"s and a "first-order rewriting" can always be effectively computed when one exists. This is in particular true for OMQs formulated in $(\horndl,\CQ)$ \cite[Theorem 5]{bienvenuFirstOrderrewritabilityContainment2016a}.

Furthermore, \cite[Theorem 5]{jungOntologyBasedAccessProbabilistic2012} gives a syntactic characterisation of which "constant-free" "connected@@q" "OMQ"s in $(\dllitec,\CQ)$ are equivalent to "safe@@q" "UCQ"s. By \Cref{th:cnx-hom-closed}, it also characterizes "constant-free" "connected@@q" "OMQ"s $Q\in(\dllitec,\CQ)$ for which $\dShapley_{Q}\in\FP$.

\section{Strengthening the $\mathsf{\#P}$-Hardness Result}\label{sec:non-rw}
\label{sec:strengthening}
\begin{toappendix}
\label{app:non-rw}
\end{toappendix}

The dichotomy of \Cref{th:cnx-hom-closed} is limited in two respects:
\begin{itemize}
\item The result only covers the class of "connected@@q" "constant-free" queries,
because this is the scope of \Cref{thm:homclosedconn-pods-dichotomy}. However, if we wish to compute Shapley values of answers to non-Boolean queries,
we must first instantiate the free variables with the constants from the answer tuple, and then consider the resulting query with constants. 
\item When an "OMQ" is seen as an abstract query, the distinction between "consistent" and "inconsistent" "ABoxes" is lost. 
However, one might be interested in explaining answers to a query only over "consistent" "ABoxes".
\end{itemize}
When considering "OMQ"s in $(\horndl,\UCQ)$, both of these points can be improved upon by studying the properties of such "OMQ"s that are "non-FO-rewritable".

\begin{theoremrep}\label{th:non-rw}
Let $q$ be a "UCQ" and $\T$ a $\horndl$ "ontology". If the "OMQ" $Q\defeq\withT{q}$ is "non-FO-rewritable" "w.r.t. consistent ABoxes", then $\dShapley_{Q}$ and $\PQEPhalfOne_{Q}$ on "consistent" "ABoxes" are both $\sP$-hard.
\end{theoremrep}
\begin{proof}
Once we have proven the necessary claims, we can prove the main theorem
by building the same "reduction" from $\numBipIndep$ that was used to prove the $\sP$-hardness of $\dShapley_{q_{\mathsf{ARB}}}$ for the "CQ" $q_{\mathsf{ARB}}\defeq A(x),R(x,y),B(y)$ \cite[Proposition 4.6]{livshitsShapleyValueTuples2021}. 
Recall that we are trying to compute the number of independent sets in some input bipartite graph $G=(X,Y,E)$. For convenience, we also denote by $V\defeq X\cup Y$ the set of all vertices in $G$.

Since the possible number of "splittable" "interfaces" is bounded as a consequence of \Cref{clm:splittable}, we can choose a large enough value for $k$ so that $\P$ contains an "unsplittable" interface $(a_\chi,a_{\chi+1})$. From this we can build an $\AG$ which, by \Cref{clm:unsplittable}, is such that the subsets $\X\inc\AGn$ \st\ $\X\dcup\AGx\nvDash Q$ are in a size-preserving bijection with the independent sets of $G$. Also note that $\AG$ can be $\aC$-"homomorphically" mapped back to a "consistent" "minimal support" for $Q$. Since we only consider $\aC$-"hom-closed" "inconsistency" conditions, $\AG$ and all of its subsets are "consistent", hence the "reduction" is indeed limited to "consistent" data.\medskip

At this stage we have established that the number of independent subsets of $G$ equals the complement of the number of $\X\inc\AGn$ \st\ $\X\dcup\AGx\nvDash Q$. This number is $2^{|\AGn|}$ times the probability that $\AG\models q$ if the "endogenous" (resp. "exogenous") "assertions" all have probability \nicefrac{1}{2} (resp. 1), which proves the $\sP$-hardness of $\PQEPhalfOne_{Q}$, as desired. In order to conclude the proof, we now show the hardness of $\dShapley_{Q}$ as well. The remainder of the proof is formally identical to the arguments that can be found in the proof of \cite[Proposition 4.6]{livshitsShapleyValueTuples2021}, and it is only presented here for sake of completeness.\medskip

The general idea is the same as for the proof of \Cref{prop:graph}: since some carefully chosen "drastic Shapley values" can be expressed as a linear combination of, in this instance, the numbers $\mathsf{IS}(G,k)$ of independent sets of a given size $k$ in $G$, we can build a family of instances to obtain an invertible linear system from which the $\mathsf{IS}(G,k)$, whose sum is the value we desire, can be obtained in polynomial time. In detail, we build a family $(G_i)_{i\in\ldb0,|V|+1\rdb}$ of graphs by adding vertices and edges to $G$, as depicted in \Cref{fig:reduction-lbkm}. Formally we add a vertex $x_0$ to $X$, and $i$ vertices $y_1,\dots,y_i$ to $Y$. Then we add the edges $\{(x_0,y)\mid y\in Y\}$ if $i=0$ and the edges $\{(x_0,y_k) \mid k\in [i]\}$ otherwise.

\begin{figure}[tb]
\centering
\begin{tikzpicture}[scale=.6]
\coordinate (00) at (-8, 1);
\coordinate (01) at (-8, 0);
\coordinate (02) at (-8, -1);
\coordinate (03) at (-6, -1);
\coordinate (04) at (-6, 0);
\coordinate (05) at (-6, 1);
\coordinate (06) at (-7, 1.5);
\coordinate (07) at (-3, 1);
\coordinate (08) at (-3, 0);
\coordinate (09) at (-3, -1);
\coordinate (010) at (-1, -1);
\coordinate (011) at (-1, 0);
\coordinate (012) at (-1, 1);
\coordinate (013) at (-2, 1.5);
\coordinate (014) at (-3, -2);
\coordinate (015) at (1, 1);
\coordinate (016) at (1, 0);
\coordinate (017) at (1, -1);
\coordinate (018) at (3, -1);
\coordinate (019) at (3, 0);
\coordinate (020) at (3, 1);
\coordinate (021) at (2, 1.5);
\coordinate (022) at (1, -2);
\coordinate (023) at (3, -2);
\coordinate (024) at (4.5, 0);
\coordinate (025) at (6, 1);
\coordinate (026) at (6, 0);
\coordinate (027) at (6, -1);
\coordinate (028) at (8, -1);
\coordinate (029) at (8, 0);
\coordinate (030) at (8, 1);
\coordinate (031) at (7, 1.5);
\coordinate (032) at (6, -2);
\coordinate (033) at (8, -2);
\coordinate (034) at (8, -3);
\coordinate (035) at (8, -2.5);
\begin{pgfonlayer}{nodelayer}
	\node [draw, circle] (0) at (00) {};
	\node [draw, circle] (1) at (01) {};
	\node [draw, circle] (2) at (02) {};
	\node [draw, circle] (3) at (03) {};
	\node [draw, circle] (4) at (04) {};
	\node [draw, circle] (5) at (05) {};
	\node [] (6) at (06) {G};
	\node [draw, circle] (7) at (07) {};
	\node [draw, circle] (8) at (08) {};
	\node [draw, circle] (9) at (09) {};
	\node [draw, circle] (10) at (010) {};
	\node [draw, circle] (11) at (011) {};
	\node [draw, circle] (12) at (012) {};
	\node [] (13) at (013) {$G_0$};
	\node [draw, circle, thick, fill=lightgray] (14) at (014) {};
	\node at (014) {\scriptsize $x_0$};
	\node [draw, circle] (15) at (015) {};
	\node [draw, circle] (16) at (016) {};
	\node [draw, circle] (17) at (017) {};
	\node [draw, circle] (18) at (018) {};
	\node [draw, circle] (19) at (019) {};
	\node [draw, circle] (20) at (020) {};
	\node [] (21) at (021) {$G_1$};
	\node [draw, circle, thick, fill=lightgray] (22) at (022) {};
	\node at (022) {\scriptsize $x_0$};
	\node [draw, circle, thick, fill=lightgray] (23) at (023) {};
	\node at (023) {\scriptsize $y_1$};
	\node [] (24) at (024) {\dots};
	\node [draw, circle] (25) at (025) {};
	\node [draw, circle] (26) at (026) {};
	\node [draw, circle] (27) at (027) {};
	\node [draw, circle] (28) at (028) {};
	\node [draw, circle] (29) at (029) {};
	\node [draw, circle] (30) at (030) {};
	\node [] (31) at (031) {$G_j$};
	\node [draw, circle, thick, fill=lightgray] (32) at (032) {};
	\node at (032) {\scriptsize $x_0$};
	\node [draw, circle, thick, fill=lightgray] (33) at (033) {};
	\node at (033) {\scriptsize $y_1$};
	\node [draw, circle, thick, fill=lightgray] (34) at (034) {};
	\node at (034) {\scriptsize $y_j$};
	\node [] (35) at (035) {\dots};
\end{pgfonlayer}
\begin{pgfonlayer}{edgelayer}
	\draw (0) to (5);
	\draw (1) to (3);
	\draw (2) to (4);
	\draw (7) to (12);
	\draw (8) to (10);
	\draw (9) to (11);
	\draw (15) to (20);
	\draw (16) to (18);
	\draw (17) to (19);
	\draw (14) to (12);
	\draw (14) to (11);
	\draw (14) to (10);
	\draw (22) to (23);
	\draw (25) to (30);
	\draw (26) to (28);
	\draw (27) to (29);
	\draw (32) to (33);
	\draw (32) to (34);
\end{pgfonlayer}
\end{tikzpicture}
\caption[Graph variants for the reduction]{Variants of the graph $G$ for the "reduction". Adapted from \cite[Figure 4]{livshitsShapleyValueTuples2021}.}
\label{fig:reduction-lbkm}
\end{figure}

Recall that there exists a bijection $\etardx$ between the "endogenous facts" of $\AG$ and the vertices in $G$. We shall pay special attention to the "fact" $\mu \defeq \etardx^{-1}(x_0)$.

We first focus on the "drastic Shapley value" of $\mu$ in $\AG[G_0]$ in order to define and compute two values we will need later. For this we consider the orderings $\sigma$ such that the arrival of $\mu$ doesn’t change the score function. This happens in the following two mutually exclusive cases:
\begin{enumerate}[(1)]
\item\label{cas1.1} $\etardx(\sigmaless{\mu})\cap Y = \emptyset$
\item\label{cas1.2} $\exists x,y\in \etardx(\sigmaless{\mu}).\; (x,y)\in E$
\end{enumerate}
The first case is when there is no $Y$ vertex to match $x_0$, and the second is when there already is a matching between two vertices meaning those added by $x_0$ are redundant. The number of orderings that satisfy \ref{cas1.1} is:%
\[P_0^1\defeq\frac{(|V|+1)!}{|Y|+1}\]

This is because every element of $\etardx^{-1}(Y)\cup\{\mu\}$ has the same $\nicefrac{1}{|V+1|}$ probability of arriving first among these, and the total number of orderings is $(|V|+1)!$.
To compute the number of orderings that satisfy \ref{cas1.2}, we denote by $\mathsf{NIS}(G,j)$ the set of all subsets of $V$ of size $j$ that are \textit{not} independent in $G$. We can then express the number of orderings that satisfy \ref{cas1.2} as:%
\[P_0^2\defeq\sum_{j=2}^{|V|} |\mathsf{NIS}(G,j)|\cdot j! \cdot (|V|-j)!\]

This is because, if $|\sigmaless{\mu}|=j$, we must have $\etardx(\sigmaless{\mu})\in\mathsf{NIS}(G,j)$, then we can have any ordering of the $j$ elements of $\sigmaless{\mu}$ and of the $|V|-j$ elements of $\sigma_{>\mu}$ (\ie\ the elements that appear strictly after $\mu$ in $\sigma$). We can now compute $P_0^2$ from $P_0^1$ and the drastic Shapley oracle using Eq.~\eqref{formul:sh1}:%
\[\mathrm{Sh}_{Q}(\AG[G_0],\mu) = 1-\frac{P_0^1+P_0^2}{(|V|+1)!}\]
\[P_0^2 = (1-\mathrm{Sh}_{Q}(\AG[G_0],\mu))\cdot(n+1)! - P_0^1\]

We similarly express the "drastic Shapley value" of $\mu$ in $\AG[G_i]$. In this context, the arrival of $\mu$ doesn’t change the score function in the following two mutually exclusive cases:
\begin{enumerate}[(1),resume]
   \item\label{cas2.1} $\exists x,y\in \etardx(\sigmaless{\mu}).\; (x,y)\in E$
\item\label{cas2.2} $\forall x,y\in \etardx(\sigmaless{\mu}).\; (x,y)\notin E ~~\land~~\etardx(\sigmaless{\mu})\cap \{y_1,\dots,y_i\} = \emptyset$
\end{enumerate}

Case \ref{cas2.1} is when there already is a matching that makes the arrival of $x_0$ redundant, and case \ref{cas2.2} is when, although there isn’t any matching yet, there is no $y_j$ to match $x_0$ with.

To choose an ordering that satisfies \ref{cas2.1}, we can first choose the order of the elements in $\etardx^{-1}(V)\cup\{\mu\}$, which are the only ones that matter in satisfying the property. These must satisfy \ref{cas1.2}, and there also are $P_0^2$ of them. Then we need to choose a way to place the various $\etardx^{-1}(y_j)$ in any position, and there are $m_i\defeq \binom{|V|+i+1}{i}\cdot i!$ ways of doing it: $\binom{|V|+i+1}{i}$ ways of choosing where to place the $i$ new elements, and $i!$ possible orderings. The number of orderings that satisfy \ref{cas2.1} is therefore $m_i\cdot P_0^2$, which we can compute in polynomial time with the help of a drastic Shapley oracle.

Next we count the number $P_i$ of orderings that satisfy \ref{cas2.2}. Since $\etardx(\sigmaless{\mu})$ shouldn’t contain any of the $y_i$ we added, it must be an independent set of $G$, which leaves $|\mathsf{IS}(G,j)|$ choices for each size $j$, then $j!$ orderings of $\sigmaless{\mu}$ and $(|V|+i-j)!$ orderings of $\sigma_{>\mu}$. Overall we get:
\[P_i = \sum_{j=0}^{|V|}|\mathsf{IS}(G,j)|\cdot j!(|V|+i-j)!\]

We can also express the "drastic Shapley value" of $\mu$ in terms of $P_0^2$ and $P_i$:
\[\mathrm{Sh}_{Q}(\AG[G_i],\mu) = 1-\frac{m_i\cdot P_0^2+P_i}{(|V|+i+1)!}\]

We can now rearrange the equation and substitute $P_i$ out to obtain a linear equations of unknowns $\mathsf{IS}(G,0),\dots,\mathsf{IS}(G,|V|)$ for $i\in[|V|+1]$:
\[
   \sum_{j=0}^{|V|}j!(|V|+i-j)!|\mathsf{IS}(G,j)|%
   =(n+i+1)!(1-\mathrm{Sh}_{Q}(\AG[G_i],\mu))-m_i\cdot P_0^2
\]

The right-hand side has been shown to be computable in polynomial time with a drastic Shapley oracle, so it only remains to prove that the system is solvable, which we do as follows: we take its underlying matrix whose general term is \mbox{$j!(|V|+i-j)!$}, we divide each column by $j!$ then reverse the column order to obtain the matrix of general term $(i+j+1)!$, which is known to be invertible \cite[proof of Theorem 1.1]{bacherDeterminantsMatricesRelated2002}.
\end{proof}

The main objective of this section will be proving \Cref{th:non-rw}. We start in \Cref{ssec:path} by explaining the core idea for the "reduction" with a restricted case and showing the existence of the necessary structures to build the "reduction". Then, in \Cref{ssec:general}, we delve into the proof details and show how to apply the idea from the restricted case in general. Note that while our focus here is on the Shapley value computation task, \Cref{th:non-rw} also establishes hardness for the related $\PQEPhalfOne_{Q}$ problem. In \Cref{ssec:prob}, we exploit \Cref{th:non-rw} to improve an existing complexity dichotomy for probabilistic OMQA. 

\subsection{Proof Idea via a Restricted Setting} \label{ssec:path}%
The idea behind the proof is the fact that any "non-FO-rewritable" "OMQ" $Q=\withT{q}\in(\horndl,\UCQ)$ must have "minimal supports" that contain arbitrarily deep tree-like structures (see \Cref{clm:Ak}). 
\AP These must contain some arbitrarily long "path", where we define a ""path"" between two "constants" $a_0$ and $a_k$ to be any non-empty set of "assertions" $\P = \set{R_1(a_0,a_1), \dotsc, R_k(a_{k-1},a_k)}$, where $a_0, \dotsc, a_k$ are pairwise distinct "constants" and each $R_i(a_{i-1},a_i)$ stands for a "role assertion" of the form $r(a_{i-1},a_i)$ or $r(a_i,a_{i-1})$.
To give the intuition, we shall first consider the restricted case where $\P$ is a "minimal support" for $Q$ on its own, and where $Q$ is "connected@@q" and "hom-closed".

Similarly to what has been done to prove \cite[Proposition 4.6]{livshitsShapleyValueTuples2021}, we wish to reduce from the $\AP\intro*\numBipIndep$ problem of counting the number of independent sets in a bipartite graph $G=(X,Y,E)$, which is known to be $\sP$-hard \cite[Problem 2]{provanComplexityCountingCuts1983}. For a sufficiently large $\P$, we can take two consecutive internal individuals $(a_\chi,a_{\chi+1})$, and duplicate them to encode an arbitrary bipartite graph as an "ABox" $\PG$, as shown in \Cref{fig:idee-construction}. 
We set as "endogenous" every "role assertion" linking $a_{\chi-1}$ to a copy of $a_{\chi}$, or linking a copy of $a_{\chi+1}$ to $a_{\chi+2}$, the remaining "assertions" are "exogenous".

\begin{figure}[tb]
\centering
\resizebox{.8\columnwidth}{!}{\begin{tikzpicture}[xscale=.6]
	\scriptsize
	\begin{pgfonlayer}{nodelayer}
		\node [draw, circle,color=blue] (0) at (3.5, 0) {};
		\node [draw, circle,color=blue] (1) at (4.5, 0) {};
		\node [draw, circle,color=blue] (2) at (3.5, 0.4) {};
		\node [draw, circle,color=blue] (3) at (4.5, 0.4) {};
		\node [draw, circle,color=blue] (4) at (3.5, -0.4) {};
		\node [draw, circle,color=blue] (5) at (4.5, -0.4) {};
		\node [draw, circle] (6) at (5.5, 0) {};
		\node [draw, circle] (7) at (2.5, 0) {};
		\node [] (8) at (6.5, 0) {$\dots$};
		\node [] (9) at (1.5, 0) {$\dots$};
		\node [draw, circle,color=blue] (10) at (-5.5, 0) {};
		\node [above=.1,color=blue] at (10) {\small $\alpha_\chi$};
		\node [draw, circle,color=blue] (11) at (-4.5, 0) {};
		\node [above=.1,color=blue] at (11) {\small $\alpha_{\chi+1}$};
		\node [draw, circle] (16) at (-3.5, 0) {};
		\node [draw, circle] (17) at (-6.5, 0) {};
		\node [] (18) at (-2.5, 0) {$\dots$};
		\node [] (19) at (-7.5, 0) {$\dots$};
		\pic[transform shape,yscale=.3,xscale=0.5] at (-0.5,0) {graphbox-ew};
	\end{pgfonlayer}
	\begin{pgfonlayer}{edgelayer}
		\draw (9) to (7);
		\draw (6) to (8);
		\draw [very thick, color=blue] (7) to (2);
		\draw [very thick, color=blue] (7) to (0);
		\draw [very thick, color=blue] (7) to (4);
		\draw [very thick, color=blue] (3) to (6);
		\draw [very thick, color=blue] (1) to (6);
		\draw [very thick, color=blue] (5) to (6);
		\draw [color=blue] (2) to (3);
		\draw [color=blue] (0) to (5);
		\draw [color=blue] (4) to (1);
		\draw (19) to (17);
		\draw (16) to (18);
		\draw [color=blue] (17) to (10);
		\draw [color=blue] (10) to (11);
		\draw [color=blue] (11) to (16);
	\end{pgfonlayer}
\end{tikzpicture}}
\caption[$\PG$]{\AP Encoding $\intro*\PG$ of $G$ in $\P$. The "endogenous" "assertions" are indicated by thick lines.}
\label{fig:idee-construction}
\end{figure}

Any subset $\X\inc \PGn$ naturally maps to a set $V_\X \subseteq X \cup Y$ of vertices of $G$. If $V_\X$ is not independent, then there is a "path" "homomorphic" to $\P$ in $\X\cup\PGx$, hence $\X\cup\PGx\models Q$. Otherwise, if $V_\X$ is independent, then any "connected@@db" subset of $\X\cup\PGx$ will "homomorphically" map to a \emph{proper} subset of $\P$. Since $Q$ is "connected@@q" and $\P$ is a "minimal support" for it, this implies that $\X\cup\PGx\nvDash Q$. From this we can use the same technique as \cite[Proposition 4.6]{livshitsShapleyValueTuples2021} to count the independent sets of $G$, as we shall see later.

\subsection{The General Case}\label{ssec:general}

Now that we have seen the underlying idea for the restricted case, we formally show the existence of the "path" $\P$ upon which we shall build the "reduction".
To fix the notations, let $Q=\withT{q}\in(\horndl,\UCQ)$ be the "non-FO-rewritable" ("w.r.t. consistent ABoxes") "OMQ" we consider. We write $q=\bigvee_{i\in I}\bigwedge_{j\in J_i} q_{i,j}$ where the $q_{i,j}$ are
\AP ""connected components@@q"", that is, maximal "connected@@q" subqueries.\footnote{Observe that, for example, the "CQ" $r(x,c) \land s(c,y)$, where $c$ is a constant, has only one "connected component@@q".}
\AP For every $q_{i,j}$ we write $\intro*\aCij\defeq\const(q_{i,j})$, and $\reintro*\aCij[]\defeq \bigcup_{i\in I}\bigdcup_{j\in J_i}\aCij$. Note that the last union is indeed disjoint because the "connected components@@q" of a "CQ" cannot share any "constant" or variable.

The following is a simple consequence of $Q$ being "non-FO-rewritable" "w.r.t. consistent ABoxes".

\begin{claim}\label{clm:SNij}
There exist $i,j$ such that for all $N\in\lN$, there are $\T$-"consistent" "ABoxes" $\APintrorep{\SNij},\reintro*{\SNij[i,\lnot j]}$ with $|\SNij|\ge N$ such that:
\begin{enumerate}[(1)]
   \item\label{clm:SNij-1} $\const(\SNij)\cap \const(\SNij[i,\lnot j])=\emptyset$ and  $\SNij \dcup \SNij[i,\lnot j]$ is a "minimal support" for $Q$;
\item\label{clm:SNij-2} $\SNij$ and $\SNij[i,\lnot j]$ are "minimal supports" for $(\T,q_{i,j})$ and $(\T,q_{i,\lnot j})$ respectively, where $q_{i,\lnot j}\defeq \bigwedge_{j'\in J_i\setminus\{j\}} q_{i,j}$.
\end{enumerate}
\end{claim}

\begin{proof}
Since $Q\defeq \withT{q}$ is "non-FO-rewritable" "w.r.t. consistent ABoxes" and is only concerned by the finite set of predicates that appear in $(q,\T)$, it must contain arbitrarily large $\T$-"consistent" "minimal supports". Thus, for every $N\in\lN$, let $S^N$ be a $\T$-"consistent" minimal support with $|S^N| \geq N $.

Observe that every "minimal support" for $Q$ is a "minimal support" for the OMQ $(\T, \bigwedge_{j\in J_i} q_{i,j})$ for some disjunct of $q$, meaning it can be written as the union of "minimal supports" of the OMQs  $\withT{q_{i,j}}$ for $j\in J_i$. In particular, for the minimal support $S^N$, 
we can find $i_N$ such that $S^N$ is the union of "minimal supports" of the OMQs  $\withT{q_{i_N,j}}$. Since $|S^N| \geq N$, 
there must furthermore exist $j_N \in J_{i_N}$ such that the "minimal support" for $\withT{q_{i_N,j_N}}$ contained within $S^N$ has size at least $\frac{|S^N|}{|J_{i_N}|}$. 
We can thus write $S^N = S^N_{i_N,j_N} \cup S^N_{i_N,\lnot j_N}$, where $S^N_{i_N,j_N}$ is the "minimal support" for $\withT{q_{i_N,j_N}}$
and $S^N_{i_N,\lnot j_N}$ the union of the "minimal supports" of $\withT{q_{i_N,j}}$ for all $j \in J_{i_N}$ with $j \neq J_N$. 
Note that we could potentially have $\const(S^N_{i_N,j_N}) \cap \const(S^N_{i_N,\lnot j_N})\neq\emptyset$. 
However, since the $q_{i_N,j}$ are (maximal) "connected components@@q", they have disjoint sets of constants, meaning that any constant $c$ that appears in both $ S^N_{i_N,j_N}$ and $ S^N_{i_N,\lnot j_N}$ must either not belong to $C_{i_N,j_N}$ (in which case occurrences of $c$
in $S^N_{i_N,j_N}$ can be replaced by a fresh constant) 
or does not belong to $C_{i_N,\lnot j_N}$ (in which case occurrences of $c$ within $S^N_{i_N,\lnot j_N}$ can be replaced by a fresh constant). After performing such renamings, we will have
$\const(S^N_{i_N,j_N}) \cap \const(S^N_{i_N,\lnot j_N})=\emptyset$, as required. 

Now let us consider the infinite sequence $(i_N,j_N)_{N\in\lN}$. Necessarily, there is some pair $(i^*, j^*)$ that occurs infinitely often in this sequence.
We fix some such $(i^*, j^*)$ %
argue that this pair satisfies the conditions of the claim. 
Indeed, given any $N\in\lN$, we can find $K > N \cdot |J_{i^*}|$ such that $(i_K,j_K)= (i^*,j^*)$. 
Fixing some such $K$, we then let $M_{i^*,j^*}^N= S^K_{i_K,j_K}= S^K_{i^*,j^*}$ 
and $M_{i^*, \lnot j^*}^N=  S^K_{i_K,\lnot j_K}= S^K_{i^*,\lnot j^*}$. The fact that $M_{i^*,j^*}^N$ 
and  $M_{i^*, \lnot j^*}^N$ satisfy conditions
\ref{clm:SNij-1} and \ref{clm:SNij-2} has been shown in the previous paragraph. 
Finally, having chosen $K > N \cdot |J_{i^*}|$, we know that 
$|M_{i^*,j^*}^N| \geq \frac{|S^K|}{|J_{i^*}|} \geq \frac{N \cdot |J_{i^*}|}{|J_{i^*}|}  \geq N$. 
\end{proof}

Let us henceforth fix $q_{\overline{\imath},\overline{\jmath}}$ as having the indices $\overline{\imath},\overline{\jmath}$ satisfying the statement of \Cref{clm:SNij}.
The next claim is a straightforward application of \cite[Proposition~23]{bienvenuFirstOrderRewritabilityContainment2020}, which we restated as \Cref{prop:unraveling}.
\begin{claim}\label{clm:Ak}
For all $k\in \lN$, there exists a "minimal support" $\APintrorep\Ak$ for $\withT{q_{\overline{\imath},\overline{\jmath}}}$ that is "$\T$-consistent", individuals $a_0, \dotsc, a_k$, and $N\in\lN$ such that:
\begin{enumerate}[(1)]
\item\label{clm:Ak-1} $\Ak \homto[C-] \SNij[\overline{\imath},\overline{\jmath}]$;
\item\label{clm:Ak-2} $\Ak$ has a "path" $\P = \{R_1(a_0,a_1), \dotsc, R_k(a_{k-1},a_k)\}$, and further $\P$ is the only "path" between $a_0$ and $a_k$ in $\Ak$;
\item\label{clm:Ak-3} $\{a_0, \dots, a_k\}\cap\aCij[] = \emptyset$%
\end{enumerate}
\end{claim}
\begin{proof}
Let us apply \Cref{prop:unraveling}
to the "OMQ" $\withT{q_{\overline{\imath},\overline{\jmath}}}$ and the "ABox" $\SNij[\overline{\imath},\overline{\jmath}]$ provided by \Cref{clm:SNij}. We obtain some $\A_N^*$ that satisfies \Cref{prop:unraveling}\ref{prop:unraveling-a}
and \Cref{prop:unraveling}\ref{prop:unraveling-b}, and by removing some assertions if needed we can assume that it is a "minimal support" for $\withT{q_{\overline{\imath},\overline{\jmath}}}$, because a "connected@@db" subset of a "pseudo tree" is still a "pseudo tree", whose "width@@ptree" and "depth@@ptree" are no larger. In particular it satisfies condition \ref{clm:Ak-1}.

Observe that $|\A_N^*|\ge|\SNij[\overline{\imath},\overline{\jmath}]|$, otherwise $\A_N^*$ would "homomorphically" map into a proper subset of $\SNij[\overline{\imath},\overline{\jmath}]$, which would contradict its minimality. Now since the "width@@ptree" and "depth@@ptree" of $\A_N^*$ are bounded by $|q|$ and $|\T|$ respectively, the only way for it to be arbitrarily large is to have an arbitrary "depth@@ptree". This means that if we choose a large enough $N$, the resulting $\A_N^*$ (which we shall call $\Ak$) will contain "tree@@ptree" of "depth@@ptree" at least $k$, which implies the existence of the $\P$ desired to satisfy condition \ref{clm:Ak-2}. Finally, since $\P$ is contained within a "tree@@ptree" of $\SNij[\overline{\imath},\overline{\jmath}]$ and $\aC$ appear in its "core@@ptree", condition \ref{clm:Ak-3} is automatically satisfied.%
\end{proof}

\AP We now have the "minimal support" $\intro*\Ao\defeq\Ak\dcup\SNij[\overline{\imath},\lnot\overline{\jmath}]$ we need for the "reduction". However, it differs from the one in the restricted case of \Cref{ssec:path} in two respects:
\begin{enumerate}
\item the "path" $\P$ is merely a subset of the "minimal support";
\item the "OMQ" $Q$ is not necessarily "connected@@q".
\end{enumerate}

To deal with the first issue we need to introduce some further notions.
\AP 
For every index $\chi \in [k-1]$ along $\P$,
let $\intro*\Akchi{\chi} \defeq \Ak \setminus \set{R_\chi(a_{\chi-1},a_\chi), R_{\chi+1}(a_{\chi},a_{\chi+1})}$.
Given an "assertion" $\alpha\in\Ak$ and an "individual" $a_\chi$ on the "path" $\P$, 
we say that $\alpha$ is \AP ""below@@interf""  (resp. \AP ""left@@interf"" of, resp. ""right@@interf"" of) $a_\chi$ if $\alpha$ is in the "connected component@@db" of $a_\chi$ in $\Akchi{\chi}$ (resp. $a_{\chi-1}$, resp. $a_{\chi+1}$).
Further, we say that the "assertions" $R_\chi(a_{\chi-1},a_\chi)$ and $R_{\chi+1}(a_{\chi},a_{\chi+1})$ are respectively "left@@interf" and "right@@interf" of $a_\chi$.

\begin{claim}\label{clm:below-left-right}
\AP For every "assertion" $a_\chi$ on the "path" $\P$, the sets $\intro*\Below(a_\chi)$, $\intro*\Left(a_\chi)$ and $\intro*\Right(a_\chi)$ of "assertions" that are "below@@interf", "left@@interf" of and "right@@interf" of $a_\chi$ resp.\ are disjoint. Furthermore, if $\chi<\lambda$, then $\Right(a_\chi)$ contains $\Below(a_\lambda)$ and $\Right(a_\lambda)$, while $\Left(a_\lambda)$ contains $\Below(a_\chi)$ and $\Left(a_\chi)$ (see \Cref{fig:below-left-right}).
\end{claim}
\begin{proof}
We start by establishing disjointness. Assume for a contradiction that $\Below(a_\chi)\cap\Right(a_\chi)\neq\emptyset$, the other cases being strictly analogous. Take some $\alpha \in \Below(a_\chi)\cap\Right(a_\chi)$ and an "individual" $b$ of $\alpha$. From the definition 
of $\Below(a_\chi)$ and $\Right(a_\chi)$, we know that $b$ belongs to the same "connected component@@db" of $\Akchi{\chi}$ as $a_\chi$  
and the same "connected component@@db" of $\Akchi{\chi}$ as $a_{\chi+1}$. It follows that $a_\chi$ and $a_{\chi+1}$ occur in the same connected component of $\Akchi{\chi}$. Since by definition $R_{\chi+1}(a_{\chi},a_{\chi+1}) \not \in \Akchi{\chi}$, 
this implies the existence of a ``detour'' "path" between $a_\chi$ and $a_{\chi+1}$ that is distinct from the portion of $\P$ because it doesn’t contain the "assertion" $R_{\chi+1}(a_{\chi},a_{\chi+1})$.
This detour can then be extended in a "path" between $a_0$ and $a_k$ that is distinct from $\P$, which contradicts \Cref{clm:Ak}.\ref{clm:Ak-2}.\medskip

Now we show the inclusions, based on \Cref{fig:below-left-right}. Consider indices $\chi < \lambda$ and an "assertion" $\alpha$ in either $\Below(a_\lambda)$ or $\Right(a_\lambda)$. By definition there exists a "path" from $a_\lambda$ that contains $\alpha$ but doesn’t intersect any $a_\kappa$ with $\kappa<\lambda$. This "path" can therefore be extended by adding $R_{\chi+1}(a_\chi,a_{\chi+1}),\dots,R_\lambda(a_{\lambda-1},a_{\lambda})$ without creating any cycle, thereby forming a "path" from $a_\chi$ that contains $\alpha$ and intersects $a_\lambda$, which implies $\alpha\in\Right(a_\chi)$ since $\chi < \lambda$. We also obtain $\Below(a_\chi)\cup\Left(a_\chi)\subseteq\Right(a_\lambda)$ by symmetry.
\end{proof}

\begin{figure}[tb]
\centering
\begin{tikzpicture}[scale=1.5]
\begin{pgfonlayer}{nodelayer}
	\pic[transform shape,scale=.5] (0) at (-1, 0) {subtree};
	\pic[transform shape,scale=.5] (1) at (1, 0) {subtree};
	\node [] (2) at (-2, 0) {$\cdots$};
	\node [] (3) at (0, 0) {$\cdots$};
	\pic[transform shape,scale=.5] (4t) at (2, 0) {subtrinvisible};
	\node [] (4) at (2, 0) {$\cdots$};
	\node at (0-a) {$a_\chi$};
	\node at (1-a) {$a_\lambda$};
\end{pgfonlayer}
\begin{pgfonlayer}{edgelayer}
	\draw (2) to (0-west);
	\draw (0-east) to (3);
	\draw (3) to (1-west);
	\draw (1-east) to (4);
	\draw[color=red, dashed,scale=.5] (1-e) arc (-45:-135:0.5) -- (1-h) -- node[midway,above=.06] {$\mathbf{B}(a_{\lambda})$} (1-i) -- cycle;
	\draw[color=blue,dashed,scale=.5] (1-e) arc (-45:45:0.5) -- (4t-j) arc (45:-45:0.5) --  node[midway,below] {$\mathbf{R}(a_{\lambda})$} cycle;
	\draw[color=green,dashed,scale=.5] ($(0-j)+(.1,.1)$) arc (45:-45:0.65) -- node[midway,below] {$\mathbf{R}(a_{\chi})$} ($(1-d) + (-.2,-.1)$) -- ($(1-h) + (-.2,-.1)$) -- ($(1-i) + (.2,-.1)$) -- ($(1-e) + (.2,-.1)$) -- ($(4t-e) + (.1,-.1)$) arc (-45:45:0.65) -- ($(1-j)+(.1,.1)$) arc (45:135:0.65) -- cycle;
\end{pgfonlayer}
\end{tikzpicture}
\caption{Illustration of why $\Right(a_\chi)$ contains $\Below(a_\lambda)$ and $\Right(a_\lambda)$.}
\label{fig:below-left-right}
\end{figure}

\Cref{clm:below-left-right}
allows us to generalize the construction from the restricted case. The idea is simple: every time we copy an "assertion" in the restricted case, we also copy its $\Below$ set, %
with fresh "constants". Note that the construction still depends on a pair $(a_\chi,a_{\chi+1})$ of consecutive "individuals" of $\P$ that are \AP ""internal"" (\ie\ not in $\{a_0,a_k\}$). We call such pair an \AP ""interface"". We overlook the choice of this "interface" for now, but it will become relevant later.

\begin{figure}[tb]
\centering
\begin{tikzpicture}[scale=1.4]
\begin{pgfonlayer}{nodelayer}
	\coordinate (00) at (2, -2);
	\coordinate (01) at (3, -2);
	\coordinate (02) at (2, -1);
	\coordinate (03) at (3, -1);
	\coordinate (04) at (2, -3);
	\coordinate (05) at (3, -3);
	\coordinate (06) at (4, -2);
	\coordinate (07) at (1, -2);
	\coordinate (08) at (4.75, -2);
	\coordinate (09) at (0.25, -2);
	\coordinate (010) at (2, 1);
	\coordinate (011) at (3, 1);
	\coordinate (016) at (4, 1);
	\coordinate (017) at (1, 1);
	\coordinate (018) at (4.75, 1);
	\coordinate (019) at (0.25, 1);
	\coordinate (020) at (-0.5, 1);
	\coordinate (021) at (5.5, 1);
	\coordinate (022) at (-0.5, -2);
	\coordinate (023) at (5.5, -2);
	\coordinate (024) at (0.25, -0.75);
	\coordinate (025) at (-1, 1.25);
	\coordinate (026) at (-2, 0.75);
	\coordinate (027) at (-1.5, 1);
	\coordinate (028) at (-1, 0);
	\coordinate (029) at (-2, -0.5);
	\coordinate (030) at (-1.5, -0.25);
	\coordinate (031) at (-2.25, 0.25);
	\coordinate (032) at (-2.25, 1.5);
	\coordinate (033) at (-2, -0.75);
	\coordinate (034) at (5.5, -0.75);
	\coordinate (035) at (-1, -1.25);
	\coordinate (036) at (-2, -1.75);
	\coordinate (037) at (-1.5, -1.5);
	\coordinate (038) at (-1, -2.25);
	\coordinate (039) at (-2, -2.75);
	\coordinate (040) at (-1.5, -2.5);
	\coordinate (041) at (-2.25, -3.75);
	\coordinate (042) at (-2.25, -0.75);
	\coordinate (east) at (6,0);
	\node at (east) {\phantom{.}};
	
	\pic[transform shape,scale=.3, color=blue] (0) at (00) {subtree};
	\pic[transform shape,scale=.3, color=blue] (1) at (01) {subtree};
	\pic[transform shape,scale=.3, color=blue] (2) at (02) {subtree};
	\pic[transform shape,scale=.3, color=blue] (3) at (03) {subtree};
	\pic[transform shape,scale=.3, color=blue] (4) at (04) {subtree};
	\pic[transform shape,scale=.3, color=blue] (5) at (05) {subtree};
	\pic[transform shape,scale=.3] (6) at (06) {subtree};
	\pic[transform shape,scale=.3] (7) at (07) {subtree};
	\node [] (8) at (08) {$\dots$};
	\node [] (9) at (09) {$\dots$};
	\pic[transform shape,scale=.3, color=blue] (10) at (010) {subtree};
	\node[draw=blue,circle,fill=white,minimum height=1.5em] at (010) {};
	\node at (010) {$a_{\chi}$};
	\pic[transform shape,scale=.3, color=blue] (11) at (011) {subtree};
	\node[draw=blue,circle,fill=white,minimum height=1.5em] at (011) {};
	\pic[transform shape,scale=.3] (16) at (016) {subtree};
	\node[draw,circle,fill=white,minimum height=1.5em] at (016) {};
	\pic[transform shape,scale=.3] (17) at (017) {subtree};
	\node[draw,circle,fill=white,minimum height=1.5em] at (017) {};
	\node [] (18) at (018) {$\dots$};
	\node [] (19) at (019) {$\dots$};
	\pic[transform shape,scale=.3] (20) at (020) {subtree};
	\node[draw,circle,fill=white,minimum height=1.5em] at (020) {};
	\node at (020) {$a_0$};
	\pic[transform shape,scale=.3] (21) at (021) {subtree};
	\node[draw,circle,fill=white,minimum height=1.5em] at (021) {};
	\node at (021) {$a_k$};
	\pic[transform shape,scale=.3] (22) at (022) {subtree};
	\pic[transform shape,scale=.3] (23) at (023) {subtree};
	\pic[transform shape,scale=.3] (24) at ($(024)+(0,0.1)$) {graphbox-ns};
	\node [] (25) at ($(025)+(0.1,0.1)$) {};
	\node [] (26) at ($(026)+(-0.1,-0.1)$) {};
	\node [] (27) at (027) {\scriptsize $\Ak\setminus\Reach{\P}$};
	\node [] (28) at ($(028)+(0.1,0.1)$) {};
	\node [] (29) at ($(029)+(-0.1,-0.1)$) {};
	\node [] (30) at (030) {$\SNij[\overline{\imath},\lnot\overline{\jmath}]$};
	\node [] (31) at (031) {};
	\node [] (32) at (032) {};
	\node [] (33) at (033) {};
	\node [] (34) at (034) {};
	\node [] (35) at ($(035)+(0.1,0.1)$) {};
	\node [] (36) at ($(036)+(-0.1,-0.1)$) {};
	\node [] (37) at (037) {\scriptsize $\Ak\setminus\Reach{\P}$};
	\node [] (38) at ($(038)+(0.1,0.1)$) {};
	\node [] (39) at ($(039)+(-0.1,-0.1)$) {};
	\node [] (40) at (040) {$\SNij[\overline{\imath},\lnot\overline{\jmath}]$};
	\node [] (41) at (041) {};
	\node [] (42) at (042) {};
	\node [color=green,below] at ($(018)+(0,-3mm)$) {$\P$};
\end{pgfonlayer}
\begin{pgfonlayer}{edgelayer}
	\draw (9) to (07);
	\draw (06) to (8);
	\draw [thick, color=blue] (07) to (02);
	\draw [thick, color=blue] (07) to (00);
	\draw [thick, color=blue] (07) to (04);
	\draw [thick, color=blue] (03) to (06);
	\draw [thick, color=blue] (01) to (06);
	\draw [thick, color=blue] (05) to (06);
	\draw [color=blue] (02) to (03);
	\draw [color=blue] (00) to (05);
	\draw [color=blue] (04) to (01);
	\draw (19) to (017);
	\draw (016) to (18);
	\draw [color=blue] (017) to (010);
	\draw [color=blue] (010) to (011);
	\draw [color=blue] (011) to (016);
	\draw (020) to (19);
	\draw (18) to (021);
	\draw (022) to (9);
	\draw (8) to (023);
	\draw [decorate,decoration={brace,amplitude=3pt,raise=-2pt}] (31) to node[midway,left] {\normalsize$\Ak$} (32);
	\draw (26) rectangle (25);
	\draw (29) rectangle (28);
	\draw [decorate,decoration={brace,amplitude=3pt,raise=-2pt}] (41) to node[midway,left] {\normalsize$\AG$} (42);
	\draw (36) rectangle (35);
	\draw (39) rectangle (38);
	\draw [dashed, color=gray] (33) to (34);
	\draw [dashed, color=green] ($(020)+(0,3mm)$) arc (90:270:3mm) -- ($(021)+(0,-3mm)$) arc (-90:90:3mm) -- cycle;
\end{pgfonlayer}
\end{tikzpicture}
\caption[Construction of $\AG$]{Construction of $\AG$ from $\Ao$ and $G$. Thick lines represent "endogenous" "assertions". The boxes are sets with no structure, and the triangle below some $a$ is the "connected@@db" $\Below(a)$. \AP $\intro*\Reach{\P}$ denotes the set of "assertions" in the "connected component@@db" of $\P$.}
\label{fig:construction}
\end{figure}

Starting with a bipartite graph $G=(X,Y,E)$ whose independent sets we wish to count to solve $\numBipIndep$, we build the "partitioned ABox" $\AG$ illustrated in \Cref{fig:construction}
(compare with the restricted case of \Cref{fig:idee-construction}), which we formally define as follows. Starting from $\Ao$, we focus on the three "assertions" $R_{\chi}(a_{\chi-1},a_{\chi})$, $R_{\chi+1}(a_{\chi},a_{\chi+1})$, $R_{\chi+2}(a_{\chi+1},a_{\chi+2})$ of $\P$ that intersect the chosen $(a_\chi,a_{\chi+1})$. We replace $a_{\chi}$ (resp. $a_{\chi+1}$) with a family $(b_x)_{x\in X}$ (resp. $(c_y)_{y\in Y}$) of copies with fresh "constants". We also copy every "assertion" "below@@interf" $a_{\chi}$ (resp.\ "below@@interf" $a_{\chi+1}$), again with fresh "constants". Finally, we add the "assertions" $\set{R_{\chi}(a_{\chi-1},b_x)}_{x\in X}$, $\set{R_{\chi+1}(b_x,c_y)}_{(x,y)\in E}$, $\set{R_{\chi+2}(c_y,a_{\chi+2})}_{y\in Y}$ to obtain an "ABox", which we denote by $\intro*\AG$. As for the associated "partition@@db", we set the "endogenous" "assertions" $\AGn\defeq \{R_{\chi}(a_{\chi-1},b_x)\mid x\in X\}\cup\{R_{\chi+2}(c_y,a_{\chi+2})\mid y\in Y\}$ and the rest as "exogenous".

We can now try to apply to $\AG$ the same arguments that we used on $\PG$ for the restricted case. If we consider a subset $\X\inc\PGn$, it once again naturally maps to a set of vertices of $G$, via a bijection we denote by \AP$\intro*\etardx:\AGn\to X\cup Y$. 
If $\etardx(\X)$ is not independent, then there is a "path" "$\aC$-homomorphic" to $\P$ in $\X\cup\AGx$, and since every "assertion" in $\Ao\setminus\P$ is present in $\AGx$ (or at least has the necessary "$\aC$-isomorphic" copies in the case of the "assertions" "below@@interf" $a_{\chi}$ and $a_{\chi+1}$), this implies $\X\cup\AGx\models Q$.

If $\etardx(\X)$ is independent, however, we cannot directly use the argument above, because $Q$ is no longer assumed to be "connected@@q". In fact $\X\cup\AGx$ could conceivably contain a "disconnected@@db" "minimal support" $ M$ for $Q$ such as depicted in \Cref{fig:construction2}.\ref{fig:construction2-a}. Nevertheless, if we denote by $\rho:\AG\to \Ao$ the $\aC$-"homomorphism" that maps every fresh "constant" back to its original counterpart (essentially, $\rho$ reverses the construction of $\AG$), then we must have $\rho( M)=\Ao$. Otherwise, this would contradict the "minimality@minimal support" of $\Ao$. Moreover, by leveraging the fact that the "OMQ" $\withT{q_{i,j}}$ built from any "connected component@@q" $q_{i,j}$ is "connected@@q" by \Cref{lem:cnx}; we can further show that such an $ M$ always admits a partition $ M= M_\la\dcup M_\to$ with specific properties. This motivates the following definition of "splittable" "interfaces".

\begin{figure}[tb]
\centering
\begin{tikzpicture}[scale=1.4]
\begin{pgfonlayer}{nodelayer}
	\coordinate (01) at (2, 2);
	\coordinate (02) at (1, 3);
	\coordinate (04) at (1, 1);
	\coordinate (05) at (2, 3);
	\coordinate (06) at (3, 2);
	\coordinate (07) at (0, 2);
	\coordinate (08) at (3.75, 2);
	\coordinate (09) at (-0.75, 2);
	\coordinate (022) at (-1.5, 2);
	\coordinate (023) at (4.5, 2);
	\coordinate (035) at (-2, 2.75);
	\coordinate (036) at (-3, 2.25);
	\coordinate (037) at (-2.5, 2.5);
	\coordinate (038) at (-2, 1.75);
	\coordinate (039) at (-3, 1.25);
	\coordinate (040) at (-2.5, 1.5);
	\coordinate (041) at (1.5, 0.5);
	\coordinate (042) at (-1.5, -1);
	\coordinate (043) at (-0.75, -1);
	\coordinate (044) at (0, -1);
	\coordinate (045) at (1, 0);
	\coordinate (046) at (2, 0);
	\coordinate (047) at (1, -2);
	\coordinate (048) at (2, -1);
	\coordinate (049) at (3, -1);
	\coordinate (050) at (3.75, -1);
	\coordinate (051) at (4.5, -1);
	\coordinate (052) at (-2, -0.25);
	\coordinate (053) at (-3, -0.75);
	\coordinate (054) at (-2.5, -0.5);
	\coordinate (055) at (-2, -1.25);
	\coordinate (056) at (-3, -1.75);
	\coordinate (057) at (-2.5, -1.5);
	\coordinate (058) at (1.5, -2.5);
	
	\node[draw,circle,color=blue] (1) at (01) {};
	\pic[transform shape,scale=.3, color=blue] (2) at (02) {subtree};
	\node[draw,circle,color=blue] (4) at (04) {};
	\pic[transform shape,scale=.3, color=blue] (5) at (05) {subtree};
	\pic[transform shape,scale=.4] (6) at (06) {subtree};
	\node[draw,circle,fill=white,minimum height=2.3em] at (06) {};
	\node[] at (06) {$a_{\chi+2}$};
	\pic[transform shape,scale=.4] (7) at (07) {subtree};
	\node[draw,circle,fill=white,minimum height=2.3em] at (07) {};
	\node[] at (07) {$a_{\chi-1}$};
	\node [] (8) at (08) {$\dots$};
	\node [] (9) at (09) {$\dots$};
	\pic[transform shape,scale=.3] (22) at (022) {subtree};
	\pic[transform shape,scale=.3] (23) at (023) {subtree};
	\node [] (35) at ($(035)+(0.1,0.1)$) {};
	\node [] (36) at ($(036)+(-0.1,-0.1)$) {};
	\node [] (37) at (037) {\scriptsize $\Ak\setminus\Reach{\P}$};
	\node [] (38) at ($(038)+(0.1,0.1)$) {};
	\node [] (39) at ($(039)+(-0.1,-0.1)$) {};
	\node [] (40) at (040) {$\SNij[\overline{\imath},\lnot\overline{\jmath}]$};
	\node [] (41) at ($(041)+(0,0.2)$) {(a)};
	\pic[transform shape,scale=.3, color=red] (42) at (042) {subtree};
	\node [] (43) at (043) {\dots};
	\node [color=red] at ($(043)+(0,-0.35)$) {\normalsize $M_{\la}$}; %
	\pic[transform shape,scale=.4, color=red] (44) at (044) {subtree};
	\node[draw=red,circle,fill=white,minimum height=2.3em] at (044) {};
	\node[] at (044) {$a_{\chi-1}$};
	\pic[transform shape,scale=.3, color=red] (45) at (045) {subtree};
	\pic[transform shape,scale=.3, color=red] (46) at (046) {subtree};
	\node [draw, circle, color=green] (47) at (047) {};
	\node [draw, circle, color=green] (48) at (048) {};
	\pic[transform shape,scale=.4,color=green] (49) at (049) {subtree};
	\node[draw=green,circle,fill=white,minimum height=2.3em] at (049) {};
	\node[] at (049) {$a_{\chi+2}$};
	\node [] (50) at (050) {\dots};
	\node [color=green] at ($(050)+(0,-0.35)$) {\normalsize $M_{\to}$}; %
	\pic[transform shape,scale=.3,color=green] (51) at (051) {subtree};
	\node [] (52) at ($(052)+(0.1,0.1)$) {};
	\node [] (53) at ($(053)+(-0.1,-0.1)$) {};
	\node [] (54) at (054) {\scriptsize $\Ak\setminus\Reach{\P}$};
	\node [] (55) at ($(055)+(0.1,0.1)$) {};
	\node [] (56) at ($(056)+(-0.1,-0.1)$) {};
	\node [] (57) at (057) {$\SNij[\overline{\imath},\lnot\overline{\jmath}]$};
	\node [] (58) at ($(058)+(0,0.2)$) {(b)};
\end{pgfonlayer}
\begin{pgfonlayer}{edgelayer}
	\draw (9) to (07);
	\draw (06) to (8);
	\draw [color=blue] (4) to (1);
	\draw (022) to (9);
	\draw (8) to (023);
	\draw [] (36) rectangle (35);
	\draw [] (39) rectangle (38);
	\draw [color=blue] (07) to (02);
	\draw [color=blue] (06) to (1);
	\draw [style=red] (042) to (43);
	\draw [style=red] (43) to (044);
	\draw [style=red] (044) to (045);
	\draw [color=green] (47) to (48);
	\draw [color=green] (48) to (049);
	\draw [color=green] (049) to (50);
	\draw [color=green] (50) to (051);
	\draw [color=red] (53) rectangle (52);
	\draw [color=red] (56) rectangle (55);
\end{pgfonlayer}
\end{tikzpicture}
\captionsetup{singlelinecheck=off}
\caption[Justification for introducing splittable interfaces]{%
\label{fig:construction2}%
\begin{enumerate*}[(a)]
\item\label{fig:construction2-a} Example of what $ M$ could look like. There is no path between $a_{\chi-1}$ and $a_{\chi+2}$, but the "ABox" can still collapse back to the whole $\Ao$ by $\rho$. This implies in particular that the whole black regions are present.
\item\label{fig:construction2-b} Partition of $ M$ into $ M_\la\dcup M_\to$ that makes the "interface" "splittable".
\end{enumerate*}
}
\end{figure}
\AP An "interface" $(a_\chi,a_{\chi+1})$ is said to be ""splittable"" if there exists some $i \in I$, two sets $J_{\la},J_{\to}\subseteq J_i$ of "connected components@@q" of the $i$\textsuperscript{th} disjunct of $q$, and a "minimal support" $ M_\la$ for $\bigwedge_{j\in J_{\la}}q_{i,j}$ which contains every "assertion" "left@@interf" of $a_\chi$ and no "assertion" "right@@interf" of $a_{\chi+1}$, and the symmetrical property for $J_{\to}$, with the added condition that $ M_\to$ is "connected@@db". These two "minimal supports" correspond to the red and green regions in \Cref{fig:construction2}.\ref{fig:construction2-b}, and they are the only way the construction can fail:

\begin{claim}\label{clm:unsplittable}
$\AG$ is "$\T$-consistent" and, if it is built from an "unsplittable" "interface" $(a_\chi,a_{\chi+1})$, then for every subset $\X\inc\AGn$, $\X\dcup\AGx\vDash Q$ iff $\etardx(\X)$ isn’t independent in $G$.
\end{claim}
\begin{proof}
We first show that $\AG$ is "$\T$-consistent". Denote by $\rho:\AG\to \Ao$ the $\aC$-"homomorphism" that maps every fresh constant back to its original counterpart. Essentially, $\rho$ reverses the construction of $\AG$. It then suffices to remark that the property of being $\T$-"inconsistent" %
is "$C$-hom-closed" (this is a straightforward consequence of Lemma \ref{incons-bot}) %
and that $\rho(\AG)=\Ao$. Indeed, this means that if $\AG$ were "inconsistent" with $\T$, then $\Ao$ would be as well, which would contradict \Cref{clm:SNij} or \ref{clm:Ak}.%

Now we assume $\AG$ is built from an "unsplittable" "interface" $(a_\chi,a_{\chi+1})$ and show the equivalence, one direction at a time.

\proofcase{$\Ra$} Let us assume for a contradiction that $\X\subseteq \AGn$ is such that %
$\X\dcup\AGx\vDash Q$ but $\etardx(\X)$ is an independent set of $G$. Then, necessarily, there is a "minimal support" $M\subseteq \X\cup\AGx$ for $Q$. With the same $\rho$ as above, since $Q$ is $\aC$-"hom-closed", we obtain $\rho( M)\omqsat Q$. Moreover, $\rho( M)=\Ao$ since the latter is a "minimal support" for $Q$.

The fact that $ M$ maps to the whole of $\Ao$ means that it necessarily contains the set $\AG\cap\Ao$ of facts that were untouched by the construction (the black regions in \Cref{fig:construction,fig:construction2}).

The fact that $ M$ maps to the whole of $\Ao$ further implies that it contains some $R_{\chi}(a_{\chi-1},b_x)$ and some $R_{\chi+2}(c_y,a_{\chi+2})$, but there cannot be any "assertion" connecting them together otherwise $\X$ wouldn’t be an independent set of $G$. This means that $ M$ is "disconnected@@db" since the respective sets $\Reach{a_{\chi-1}}$ and $\Reach{a_{\chi+2}}$ of "assertions" reachable from $a_{\chi-1}$ and $a_{\chi+2}$ are disjoint. If we then partition $ M$ into the "disconnected@@db" sets $ M_{\to}\defeq\Reach{a_{\chi+2}}$ and $ M_{\la}\defeq M\setminus M_{\to}$ (see \Cref{fig:construction2}.\ref{fig:construction2-b}), then since $ M$ is a "minimal support" for $Q$, $ M_{\la}$ and $ M_{\to}$ must necessarily be "minimal supports" for respective $\withT{\bigwedge_{j\in J_{\la}}q_{i,j}}$ and $\withT{\bigwedge_{j\in J_{\to}}q_{i,j}}$, where $J_{\la}$ and $J_{\to}$ are sets of "connected components@@q" of some $i$\textsuperscript{th} disjunct of $q$, which together form the whole disjunct. This is because the $\withT{q_{i,j}}$ are "connected@@q" by \Cref{lem:cnx}, hence any of their "minimal supports" in $ M$ must be fully contained in either $ M_{\la}$ or $ M_{\to}$. Finally, since $ M_{\la}$ contains every "assertion" "left@@interf" of $a_\chi$ and no "assertion" "right@@interf" of $a_{\chi+1}$, and vice versa for $ M_{\to}$, this contradicts the fact that the chosen interface is "unsplittable". %

\proofcase{$\La$} By contrapositive, if $\etardx(\X)$ isn't an independent set of $G$, then there is a path $a_{\chi-1} \xrightarrow{R_{\chi}} b_x \xrightarrow{R_{\chi+1}} c_y \xrightarrow{R_{\chi+2}} a_{\chi+2}$ in $\X\cup\AGx$, which was the only missing piece to have a $\aC$-"homomorphism" from the "minimal support" for $Q$. Since $Q$ is $\aC$-"hom-closed", this means $\X\cup\AGx\omqsat Q$.
\end{proof}

The final ingredient is the fact that we can always find an "unsplittable" "interface" in $\P$ if it is long enough.

\begin{claim}\label{clm:splittable}
The number of "splittable" "interfaces" in $\P$ is bounded by a function of $q$. 
\end{claim}
\begin{proof}
We show that two disjoint "interfaces" cannot have the same $J_{\to}\subseteq J_i$ for some $i\in I$, which implies the total number of "splittable" "interfaces" is bounded by 2 times the number of sets of "connected components@@q" of $q$. By contradiction, assume $(a_\chi,a_{\chi+1})$ and $(a_\lambda,a_{\lambda+1})$, with $\chi+1 < \lambda$ are two "splittable" "interfaces" with the same $J_{\to}$. By definition, there must be a "minimal support" $ M_{\chi}$ (resp. $ M_{\lambda}$) for $\bigwedge_{j\in J_{\la}}q_{i,j}$ such that $\Right(a_{\chi+1})\inc M_{\chi}$ (resp. $\Right(a_{\lambda+1})\inc M_{\lambda}$) and $\Left(a_\chi)\cap M_{\chi}=\emptyset$ (resp. $\Left(a_\lambda)\cap M_{\lambda}=\emptyset$). $\Left(a_\lambda)\cap M_{\lambda}=\emptyset$ implies in particular that $ M_\lambda\inc \Below(a_\lambda)\cup\Right(a_\lambda)$ since $ M_{\lambda}$ is "connected@@db" (it is a $ M_{\to}$ set in the definition of "splittable" "interfaces") and intersects $a_\lambda$. From there we get:%
\[
\begin{array}{rll}
M_\lambda &\inc \Below(a_\lambda)\cup\Right(a_\lambda) &\\
&\inc \Right(a_{\chi+1}) &\text{(\Cref{clm:below-left-right} and $\chi+1 < \lambda$)}\\
&\inc  M_\chi &\text{(above paragraph)}
\end{array}
\]
Finally, $ M_{\chi}$ intersects $a_{\chi+1}$ while $ M_{\lambda}$ doesn’t, which means that $ M_{\lambda}$ is a proper subset of $ M_{\chi}$, which contradicts the fact they are both "minimal supports" for $Q$.	
\end{proof}

Once we have this, \Cref{clm:unsplittable} gives a bijection between the $\X$ such that $\X\cup\AGx\nvDash Q$ and the independent sets of $G$, which yields the desired reduction $\numBipIndep \polyrx \PQEPhalfOne_Q$ by the direct equivalence between counting satisfying subsets and probabilistic query evaluation. We can then obtain $\numBipIndep \polyrx \dShapley_Q$ by 
exactly reproducing the end of the proof for \cite[Proposition 4.6]{livshitsShapleyValueTuples2021}, which consists in building several variants of the graph $G$ to obtain a solvable linear system for the number of independent sets of $G$ of each size. Details are provided in \ref{app:non-rw}.

\subsection{Improved Dichotomy for Probabilistic OMQA}\label{ssec:prob}
As a consequence of \Cref{th:non-rw}, we obtain the following dichotomy for the "probabilistic evaluation" of "OMQ"s, which extends  \cite[Theorem 7]{jungOntologyBasedAccessProbabilistic2012} by generalizing from the class of constant-free connected OMQs from $(\mathcal{ELI},\CQ)$ to \emph{all} OMQs in $(\horndl,\UCQ)$.

\begin{theorem}\label{th:pqe-dichotomy}
Let $Q$ be a $(\horndl,\UCQ)$ "OMQ". Then $\PQEPhalfOne_{Q}$, $\SPPQE_{Q}$ and $\PQE_{Q}$ are all in $\FP$ if $Q$ is "FO-rewritable" into a "safe@@q" "UCQ", and $\sP$-hard otherwise. Further, this dichotomy is "effective" if $Q\in(\horndl,\CQ)$.
\end{theorem}

\begin{proof}
If %
$Q$ is "FO-rewritable" (w.r.t.\ arbitrary ABoxes), then it admits a "UCQ"-"rewriting", and all problems between $\PQEPhalfOne_{Q}$ and $\PQE_{Q}$ enjoy the same $\FP$/$\sP$-hard dichotomy as for "UCQ"s \cite{kenigDichotomyGeneralizedModel2021,dalviDichotomyProbabilisticInference2012}.
If instead $Q$ is "non-FO-rewritable" "w.r.t. consistent ABoxes", then we can apply Theorem \ref{th:non-rw}. 

It therefore remains to consider the case where $Q$ is "non-FO-rewritable" (w.r.t.\ arbitrary ABoxes) but "FO-rewritable" "w.r.t.\ consistent ABoxes", i.e.\ when 
the $\T$-consistency check is "non-FO-rewritable".  Thus, let $Q=(\T,q)$ be a $(\horndl,\UCQ)$ "OMQ" with these properties. 
As explained in Section \ref{prelims-dl}, we may suppose w.l.o.g. that $\T$ is in "$\bot$-normal form", i.e.\ $\bot$ occurs in $\T$ uniquely in concept inclusions of the form $C \sqsubseteq \bot$. 

Consider the "OMQ" $Q' =(\T',q')$ where $\T'$ 
is obtained from $\T$ by replacing every occurrence of $\bot$ 
with a fresh concept name $A_\bot$, and letting 
$$ q' = q \vee \exists x. A_\bot(x) $$ %
It has been shown in Lemma \ref{incons-bot} that for every ABox $\A$ that does not mention $A_\bot$, we have the following:
\begin{equation}\label{nobot-equiv}
   (\A, \T) \models q \text { iff } (\A, \T') \models q'  
\end{equation}

We wish to show that there is no "FO-rewriting" of $Q'$ "w.r.t.\ consistent ABoxes". Suppose for a contradiction that such a rewriting $\varphi$ exists. Note that since $\T'$ does not contain $\bot$, every ABox is trivially $\T'$-consistent, hence $\varphi$ is an "FO-rewriting" of $Q'$ w.r.t.\ arbitrary ABoxes, hence also w.r.t.\ the class of ABoxes without $A_\bot$.  
It follows that from Eq.\ \eqref{nobot-equiv} that $\varphi$ is an "FO-rewriting" of the original OMQ $Q$ w.r.t.\ the class of ABoxes not mentioning $A_\bot$. Finally, since $Q$ does not make any mention of $A_\bot$, %
we can obtain an FO-rewriting of $Q$ w.r.t.\ arbitrary ABoxes by replacing every occurrence of an atom $A_\bot(x)$  by some unsatisfiable query (e.g.\ $D(x) \wedge \neg D(x)$ for some concept name $D$).  We thus obtain an "FO-rewriting" of $Q$ "w.r.t. consistent ABoxes", contradicting our assumption that no such rewriting exists.

As we have shown that $Q'$ is "non-FO-rewritable" "w.r.t. consistent ABoxes", 
we can now apply Theorem \ref{th:non-rw} to $Q'$, which tells us that 
$\PQEPhalfOne_{Q'}$ %
is $\sP$-hard. We observe that for any probabilistic database $\D=(X, \pi)$, with $X$ an ABox and $\pi: X \to \{\nicefrac{1}{2},1\}$, 
we have:
\[
   \Prob(\D \models Q') = \Prob(\D \models \exists x. A_\bot(x))
   + (1- \Prob(\D \models \exists x. A_\bot(x))) \cdot \Prob(\D \models Q)
\]
We then observe that $\Prob(\D \models \exists x. A_\bot(x))$ can be easily computed:
$$\Prob(\D \models \exists x. A_\bot(x)) = 1 - \prod_{A_\bot \! (c)\in X} (1- \pi(A_\bot(c)) $$ 
It follows that it is $\sP$-hard to compute $\Prob(\D \models Q)$. 

To show the effectiveness of the dichotomy, we use the fact that  "FO-rewritability" (w.r.t.\ arbitrary ABoxes and consistent ABoxes) is effective over $(\horndl,\CQ)$ \cite[Theorem 5]{bienvenuFirstOrderrewritabilityContainment2016a}, and the problem of whether or not a given "UCQ" is "safe@@q" is decidable as well \cite{dalviDichotomyProbabilisticInference2012}.
\end{proof}

\section{Discussion and Future Work}\label{sec:discuss}
While 
 the "drastic Shapley value" had previously been suggested for ontology debugging \cite{dengMeasuringInconsistenciesOntologies2007}, 
its application to ontology-mediated query answering had not yet been considered. Moreover, a precise analysis of the complexity of Shapley value computation in the ontology setting was missing. The present paper proposes and studies 
different formulations of the "drastic Shapley value" computation task for description logic knowledge bases, 
by varying what is to be explained (entailment of a TBox axiom, ABox assertion, or query answer),
which parts of the KB are assigned responsibility values (axioms and/or assertions), and how complexity is measured. 

Our initial complexity results, presented in Section \ref{sec:reach}, established the 
intractability of various "drastic Shapley value" computation tasks related to axiom entailment and instance queries,
covering a wide range of description logics. We did however identify one setting in which
"drastic Shapley values" can be tractably computed via a straightforward algorithm (Proposition \ref{prop:dllite-atomic}): 
 instance query entailment in common DL-Lite dialects. Moreover, these mostly %
negative results spurred us to perform a more fine-grained complexity analysis 
at the level of individual ontology-mediated queries. 

By building upon very recent results on drastic Shapley value computation in the database setting, 
we were able to establish complexity dichotomies for "OMQ"s whose "TBoxes" are formulated 
in the Horn description logic $\horndl$, 
thereby covering prominent DLs of the DL-Lite and $\mathcal{EL}$ families. 
Theorem~\ref{th:cnx-hom-closed} identifies classes of "OMQ"s for which the "drastic Shapley value" can be computed in $\FP$, 
while Theorem \ref{th:non-rw} provides a more general $\sP$-hardness result for "non-FO-rewritable" "OMQ"s. 
As noted in Section \ref{sec:rw}, it can be effectively decided, given an OMQ in  $(\T,q) \in (\horndl,\CQ)$, 
whether $\dShapley_{(\T,q)}$ is in $\FP$ or $\sP$-hard. We leave it as an open problem to determine the exact complexity of 
this latter task. 

From a practical perspective, the logical next step is to design and implement algorithms for computing 
"drastic Shapley values" in OMQA. In the database setting, "drastic Shapley value" computation has been implemented 
via a reduction to "probabilistic query evaluation", thereby enabling the reuse of existing provenance and knowledge compilation methods \cite{deutchComputingShapleyValue2022a}. 
As Theorem \ref{th:cnx-hom-closed} extends the correspondence between "drastic Shapley value" computation and 
"probabilistic query evaluation" to a large class of OMQs, a natural direction would be to develop algorithms for 
"drastic Shapley value" computation in OMQA by combining existing query rewriting procedures 
with "probabilistic query evaluation" techniques. 

The tight connections between "drastic Shapley value computation" and 
"probabilistic query evaluation" also enabled us to significantly generalize an existing dichotomy result for probabilistic OMQA (Theorem \ref{th:pqe-dichotomy}). 
Comparing Theorems \ref{th:cnx-hom-closed} and \ref{th:pqe-dichotomy}, one may observe that the dichotomy result we obtain 
for probabilistic OMQA is stronger than the one we obtain for $\dShapley$. 
This is to be expected as existing results on $\dShapley$ in the pure database setting do not cover "UCQ"s that are "disconnected@@q" or with "constants",
and it is an important open question whether there is a dichotomy for $\dShapley_q$ for all "Boolean" "UCQ"s~$q$. 
We remark however that any progress on this question can be immediately transferred to the OMQA setting. 
Let us further add that, in the context of "probabilistic databases", existing works have established  that $\PQEPhalfOne_{q} \polyeq \PQE_q$ for any $q$ that is either a 
"UCQ" or a ("constant-free") "hom-closed" query. Our \Cref{th:pqe-dichotomy} shows that this equivalence also holds for every OMQ 
$q\in(\horndl,\UCQ)$ (with "constants"), suggesting that it might hold for the full class of "$C$-hom-closed" queries.

Our focus in this paper was on the "drastic Shapley value", which is obtained by interpreting the initial entailment or querying task as a 0/1 function. 
Recently, however, another class of Shapley-based responsibility measures has been defined based upon weighted sums of minimal supports (WSMS) \cite{ourpods25}. 
Like the "drastic Shapley value", the WSMS measures satisfy the same desirable axioms, but they have been shown to enjoy more favourable computational properties. 
In particular, the WSMS measures can be tractably computed for all UCQs. A preliminary study of WSMS measures in the ontology setting established various tractability results in data and combined complexity for "FO-rewritable" OMQs \cite{ourKR25}. It would be worthwhile to pursue this investigation to identify further tractable cases and also compare experimentally the scores obtained by applying the "drastic Shapley value" and WSMS measures. 

Another interesting but challenging direction for future work is to
study $\dShapley$ (as well as the analogous task for the WSMS measures) for ontologies which are formulated using tuple-generating dependencies \cite{alice} (also referred to as existential rules \cite{mugnierIntroductionOntologyBasedQuery2014} or Datalog +/- \cite{DBLP:journals/ws/CaliGL12} in the context of ontologies).
We expect that the extension of our results to such ontologies will be non-trivial, due both to the presence of higher-arity predicates and the lack of forest-shaped models. 
Indeed, in \Cref{sec:dichotomy-OMQ}, the dichotomy for $\SPPQE$ that we exploited to obtain \Cref{th:cnx-hom-closed} is currently only known for arity 2 relations (\ie\ "graph databases"). 
The techniques in \Cref{sec:strengthening}, on the other hand, %
rely on the existence of a unique path between anonymous elements, which was ensured by %
the existence of forest-shaped canonical models. By contrast, the bounded treewidth canonical models obtained for many classes of existential rules do not satisfy this unique-path condition.

\bibliographystyle{alpha} 
\bibliography{long,biblio}

\end{document}